\documentclass[letterpaper, 10 pt, journal, twoside]{IEEEtran}
\IEEEoverridecommandlockouts
\usepackage{amsmath,amssymb, amsfonts}
\usepackage{algpseudocode}
\usepackage{algorithm}
\usepackage{array}
\usepackage{textcomp}
\usepackage{stfloats}
\usepackage{url}
\usepackage{verbatim}
\usepackage{graphicx}
\usepackage{cite}
\usepackage{subcaption}
\usepackage[table]{xcolor}

\usepackage{hyperref}
\usepackage{caption}
\usepackage{hypcap}
\usepackage{mathtools}
\usepackage{bm}
\usepackage{color}
\usepackage{algorithm}
\usepackage{tabulary}
\usepackage{array,tabularx,booktabs}
\newcolumntype{L}[1]{>{\raggedright\arraybackslash}p{#1}} 
\usepackage{booktabs}
\usepackage{multirow}
\usepackage{gensymb}
\usepackage{float}
\usepackage{pifont}
\usepackage{amsfonts}
\usepackage{amsthm}
\usepackage{setspace}
\usepackage{mathrsfs}
\usepackage[normalem]{ulem} 
\usepackage{threeparttable}
\usepackage{diagbox}

\newtheorem{proposition}{Proposition}

\newtheorem{example}{Example}

\newcommand{\RED}[1]{\textcolor{red}{#1}}

\newcommand{\fullwidth}{\textwidth}
\newcommand{\halfwidth}{\columnwidth}

\definecolor{firstcolor}{RGB}{224,239,255}
\definecolor{secondcolor}{RGB}{255,224,224}

\newcommand{\cfirst}{\colorbox{firstcolor}}
\newcommand{\csecond}{\colorbox{secondcolor}}

\DeclareMathOperator*{\Exp}{Exp}
\DeclareMathOperator*{\Log}{Log}

\newcommand{\CT}{CT-RIO}

\begin{document}

\title{
    Parallel Reference-Centric Continuous-Time Relative Localization with Augmented Clamped Non-Uniform B-Splines
}

\author{
    Jiadong Lu\textsuperscript{$\dagger$ 1,2},
    Zhehan Li\textsuperscript{$\dagger$ 1,2},
    Tao Han\textsuperscript{3},
    Miao Xu\textsuperscript{4},
    Jiajun Lv\textsuperscript{1},
    Chao Xu\textsuperscript{1,2},
    and
    Yanjun Cao\textsuperscript{1,2}    
    \thanks{\textsuperscript{$\dagger$} \textbf{Equal contribution.}}
    \thanks{
        This work was supported by National Nature Science Foundation of China.
    }
    \thanks{\textsuperscript{1} State Key Laboratory of Industrial Control Technology, Institute of Cyber-Systems and Control, Zhejiang University, Hangzhou, 310027, China.}
    \thanks{\textsuperscript{2} Huzhou Institute of Zhejiang University, Huzhou, 313000, China.}
    \thanks{\textsuperscript{3} The School of Automation and Intellegent Sensing, Shanghai Jiao Tong University, Shanghai, 200240, China.}
    \thanks{\textsuperscript{4} Department of Automatic Control, Faculty of Engineering, Lund University, Lund, 22100, Sweden.}
    \thanks{
        E-mails:\tt\small \{jdlu, zhehanli, yanjunhi\}@zju.edu.cn
    }
    \vspace{-15pt}    
}


\maketitle

\begin{abstract}
    Accurate relative localization is critical for multi-robot cooperation. 
    In robot groups, measurements from different robots arrive asynchronously and with clock time-offsets. 
    Although Continuous-Time (CT) formulations have proved effective for handling asynchronous measurements in single-robot SLAM and calibration, extending CT methods to multi-robot settings faces great challenges in achieving high-accuracy, low-latency, and high-frequency performance.
    In particular, existing CT methods suffer from the inherent query-time delay of unclamped B-splines and high optimization latency.
    This paper proposes CT-RIO, a novel Continuous-Time Relative-Inertial Odometry framework.
    We adopt Clamped Non-Uniform B-splines (C-NUBS) to represent states, eliminating the query-time delay.
    We further augment C-NUBS with closed-form extension and shrinkage operations that preserve the spline shape, making it suitable for online estimation and enabling flexible knot management.
    This flexibility leads to the concept of a \textit{knot-keyknot} strategy, which supports spline extension at high frequency while retaining sparse keyknots for adaptive relative motion modeling.
    We then formulate a reference-centric sliding-window relative localization problem that operates purely on relative kinematics and inter-robot constraints.
    To enable low-latency and high-frequency estimation, we decompose the tightly coupled optimization into robot-wise subproblems and solve them in parallel using asynchronous block coordinate descent.
    Extensive experiments show that CT-RIO converges from time-offsets as large as $263\,\mathrm{ms}$ to sub-millisecond within $3\,\mathrm{s}$, and achieves RMSEs of $0.046\,\mathrm{m}$ and $1.8^\circ$.
    It consistently outperforms evaluated published methods, with improvements of up to $60\%$ under high-speed motion.
\end{abstract}



\begin{IEEEkeywords}
    Multi-robot systems,
    sensor fusion,
    localization,
    continuous-time trajectory.
\end{IEEEkeywords}


\vspace{-8pt}
\section{Introduction}
\label{Sec: Introduction}

\IEEEPARstart{R}{ecently}, multi-robot systems have been widely employed in diverse domains, including search and rescue \cite{sherman2018cooperative,tian2020search} and environmental exploration \cite{gao2022meeting,zhou2023racer}.
A fundamental capability enabling such tasks is relative localization within the group \cite{zhou2008robot,aghili2016robust,xu2022omni,chen2025relative,olson2011apriltag,faessler2014monocular,fishberg2024murp,cossette2022optimal,xun2023crepes,armani2024accurately,zhao2025robust,shalaby2024multi,li2025crepes}.
Relative localization can be divided into indirect and direct~\cite{li2025crepes}. 
Reference-centric direct relative localization, in which robot states are represented with respect to a designated reference robot, has been widely studied.
Existing approaches estimate these relative states from measurements provided by visual markers \cite{olson2011apriltag,faessler2014monocular}, UWB arrays \cite{fishberg2024murp,cossette2022optimal}, and inertial sensors \cite{xun2023crepes,armani2024accurately,zhao2025robust,li2025crepes,shalaby2024multi}.
A representative example is CREPES-X \cite{li2025crepes}, which tightly couples distance, bearing, and inertial measurements and demonstrates competitive accuracy using a keyframe-based discrete-time (DT) formulation.

\begin{figure}[t]
    
    \centering
    \includegraphics[width=\halfwidth]{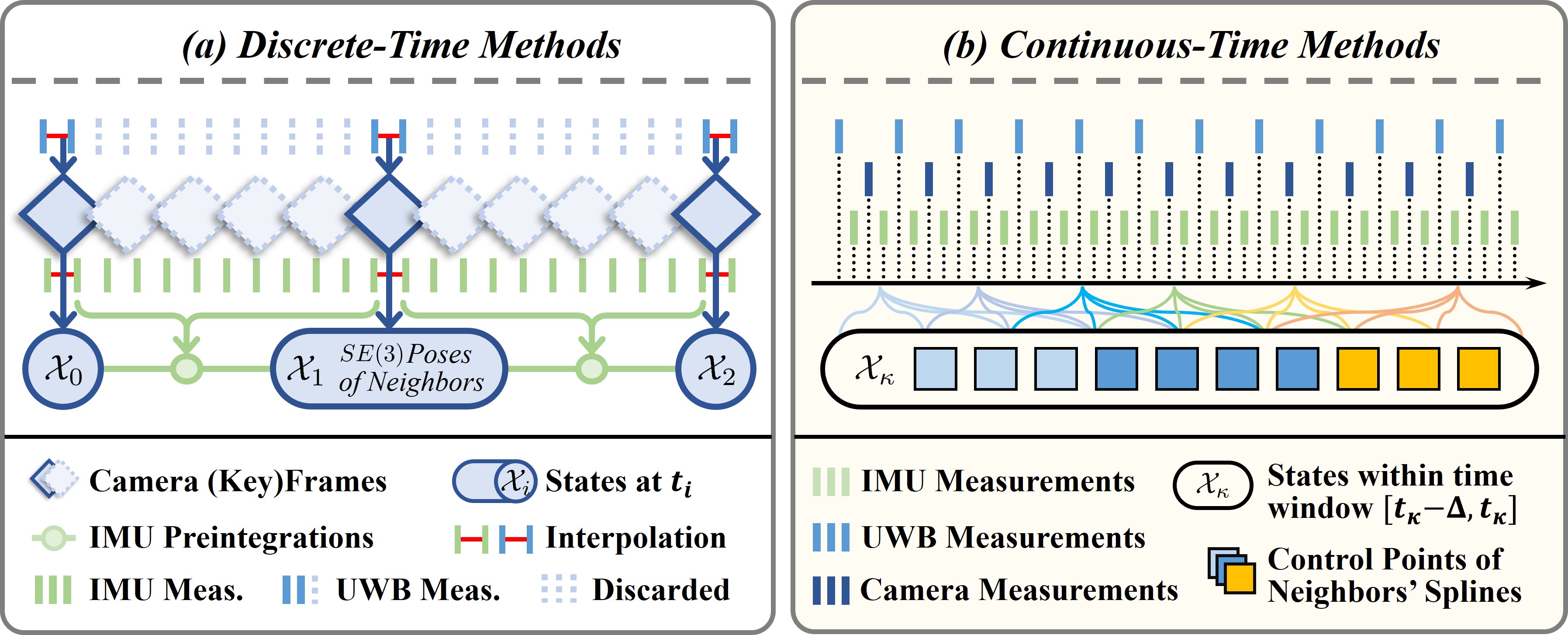}
    \caption{
        Comparison between keyframe-based DT and CT relative localization methods.
        (a) CREPES-X~\cite{li2025crepes} estimates states $\{\mathcal{X}_0, \mathcal{X}_1, \ldots\}$ only at camera keyframes.
        Asynchronous measurements (e.g., UWB) are temporally interpolated to keyframe timestamps, IMU data are summarized via preintegration, and intermediate frames and measurements are discarded.
        (b) CT methods parameterize the state $\mathcal{X}_{\kappa}$ with B-splines, allowing all measurements to directly constrain the trajectory at their native timestamps.
    }
    \label{Fig:headfigure2}
\end{figure}

However, practical multi-robot systems still pose two key challenges:
\textit{(i) Asynchronous measurements:}
Sensors on different robots operate at varying rates, so their measurements generally do not coincide with the discrete states maintained by the estimator.
Keyframe-based DT methods need to associate such measurements through interpolation, preintegration, or other temporal reconstruction, whose accuracy depends on the adopted model between discrete states.
For example, CREPES-X~\cite{li2025crepes} discards some inter-frame observations during this process, as shown in \autoref{Fig:headfigure2}(a).
\textit{(ii) Inter-robot time-offsets:}
Different robots often operate with unsynchronized clocks.
Some existing multi-robot systems rely solely on network synchronization, whose errors can reach the order of $10^2$~ms \cite{chen2023understanding}, while others use dedicated hardware such as UWB \cite{xun2023crepes,li2025crepes}.
To avoid additional hardware, several approaches estimate time-offsets by assuming short-term constant velocity~\cite{shalaby2024multi,wen2024simultaneous}.
However, this assumption is often violated in highly dynamic multi-robot settings.

\textbf{Continuous-time (CT) representation:}
In contrast to keyframe-based DT methods, B-spline-based continuous-time representation incorporates observations directly at their native timestamps without separate interpolation.
Measurements directly constrain the locally supporting control points, while the interpolation itself is jointly optimized with these control points, naturally supporting high-rate, multi-rate, and asynchronous measurements (\autoref{Fig:headfigure2}(b)).
Moreover, analytic B-spline derivatives enable joint estimation of time-offsets without introducing additional motion assumptions.
This property has been successfully exploited for single-robot sensor time synchronization~\cite{cioffi2022continuous,lv2023continuous}.
Owing to these advantages, B-spline trajectory representations have become increasingly popular in both SLAM~\cite{mueggler2018continuous,cioffi2022continuous,lv2021clins,lang2022ctrl,lv2023continuous,lang2023coco,hug2024hyperion,lang2025gaussian} and calibration tasks~\cite{rehder2016extending,sommer2020efficient,chen2025ikalibr}.
Recent advances further demonstrate that non-uniform knot placement can significantly improve both accuracy and computational efficiency~\cite{lang2023coco}.
In multi-robot systems, where motion characteristics and relative motion intensity may vary substantially across robots, Non-Uniform B-Splines (NUBS) are particularly well suited.
They allow modeling capacity to be concentrated in dynamically rich segments of motion while avoiding redundant representation in less informative regions.

Latency is a major obstacle to applying CT methods, especially NUBS, to multi-robot systems.
We distinguish three sources of latency: measurement, model-induced, and optimization latency.
Measurement latency is intrinsic to the sensing pipeline.
Model-induced latency depends on the trajectory representation.
For B-splines, the queryable trajectory is always delayed behind the newest knot by the last $k\!-\!2$ knot intervals.
Uniform B-splines can construct future control points in advance using the fixed knot interval, as shown in \autoref{Fig: basismatrix}(a).
For unclamped NUBS, however, future knot intervals are unknown, making this query delay inherent.
For example, an unclamped NUBS of order-$4$ with $100\,\mathrm{ms}$ knot spacing incurs a $200\,\mathrm{ms}$ query delay, as shown in \autoref{Fig: basismatrix}(b) and \autoref{Fig: Ablation}(a).
Optimization latency is further increased by inter-robot residuals that couple control points across trajectories.
Our method therefore targets the latter two estimator-side sources.
Unlike in single-robot systems, these delays cannot be reliably bridged by IMU propagation in robocentric relative localization, since reference-rotation uncertainty causes position error to grow with both duration and inter-robot distance, motivating near-zero-lag optimized outputs.

To eliminate the model-induced latency, we adopt classical Clamped NUBS (C-NUBS) for CT state estimation.
To the best of our knowledge, this is the first use of C-NUBS as a state representation for online state estimation.
By repeating boundary knots, C-NUBS clamp the evaluation range at the latest knot and thereby eliminate the intrinsic query delay of unclamped splines, as illustrated in \autoref{Fig: basismatrix}(c) and \autoref{Fig: Ablation}(b).

While clamping resolves this model-induced latency, it introduces a new challenge for incremental state estimation.
Specifically, adding or removing the last spline segment alters the former spline shape, breaking the consistency of previously optimized trajectory segments.
We therefore augment C-NUBS by deriving closed-form spline extension and shrinkage algorithms that preserve the spline shape exactly, allowing new spline segments to be appended or removed flexibly without affecting existing estimates.
Building on this flexibility, we propose a \textit{knot-keyknot} strategy, in which knots are added at high frequency while only sparse keyknots are retained within a sliding window to ensure computational efficiency, analogous to the frame-keyframe structure in DT systems.

For the optimization latency, we decompose the global problem into robot-wise subproblems, each solvable within milliseconds, yielding a Block Coordinate Descent (BCD) formulation with one trajectory per block.
A distinctive feature of our approach is the exploitation of the trajectory's temporal structure: as new knots are incrementally appended, the corresponding subproblems are triggered for immediate reoptimization.
Each subproblem is solved in a separate thread without waiting for other subproblems to complete their updates, eliminating the costly idle time and enabling high-frequency output for each robot with minimal latency, as illustrated in \autoref{Fig: Ablation}(c).
Despite the lack of explicit variable synchronization, we derive that this Incremental Asynchronous BCD (IA-BCD) scheme converges to a stationary point of the full problem under appropriate step-size conditions.

Based on these, we present a Continuous-Time Relative-Inertial Odometry (CT-RIO) framework which is designed for high-accuracy, low-latency, and high-frequency estimation of relative poses and inter-robot time-offsets.

\begin{figure}[t]
    
    \centering
    \includegraphics[width=.95\halfwidth]{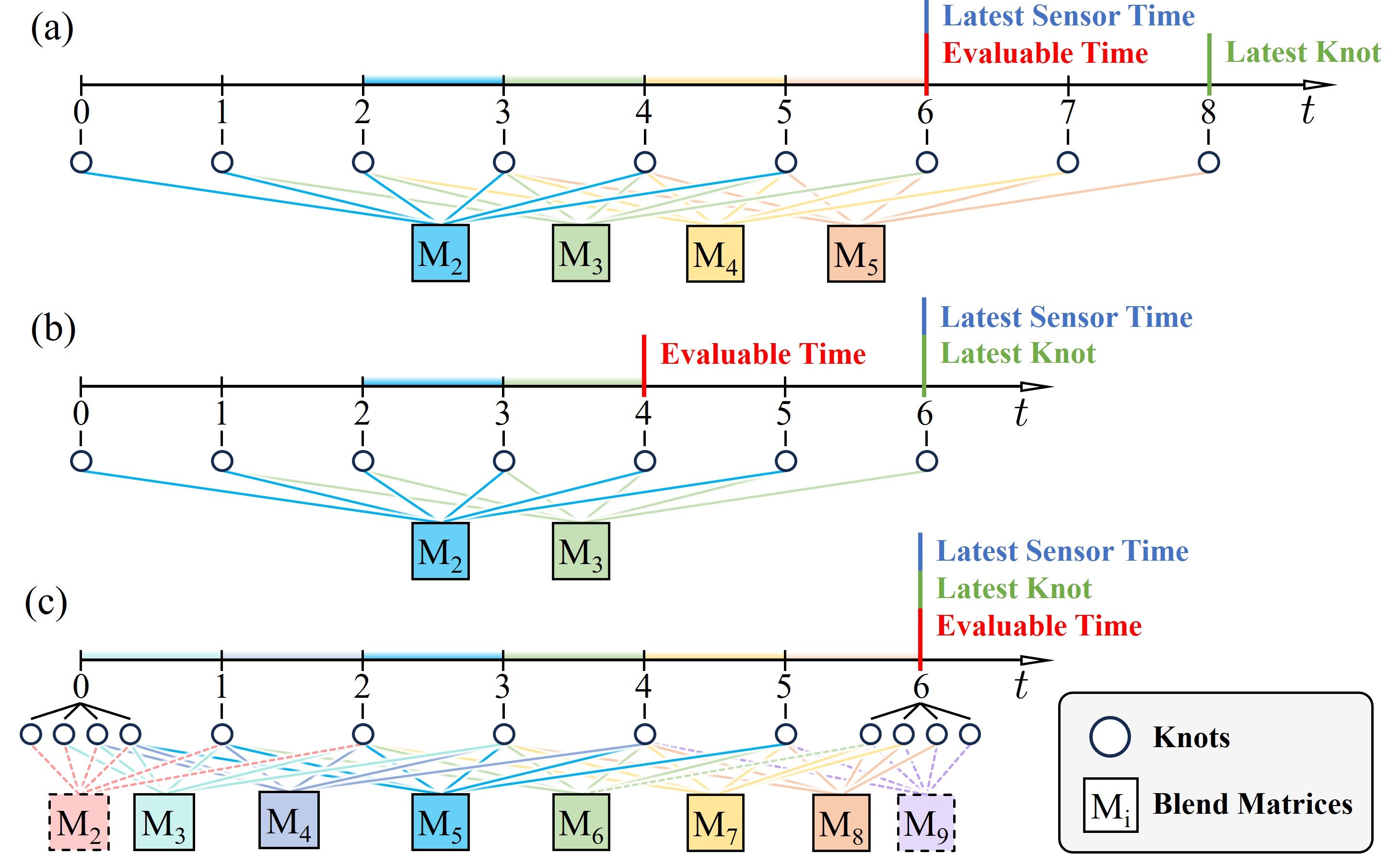}
    \caption{
        Three different spline parameterizations ($k=4$) with the earliest sensor time at $0\,\mathrm{s}$ and the latest sensor time at $6\,\mathrm{s}$.
        The unclamped B-splines benefit from the \textit{Extended Query Interval Property} (see details in \ref{eqip}), which provides one additional evaluable segment.
        (a) Uniform B-spline.
        (b) Unclamped non-uniform B-spline.
        (c) Clamped non-uniform B-spline.
        The dashed blend matrices at both ends correspond to zero-length knot spans induced by repeated boundary knots and are therefore invalid.
    }
    \label{Fig: basismatrix}
\end{figure}

\begin{figure}[t]
    
    \centering
    \includegraphics[width=.98\halfwidth]{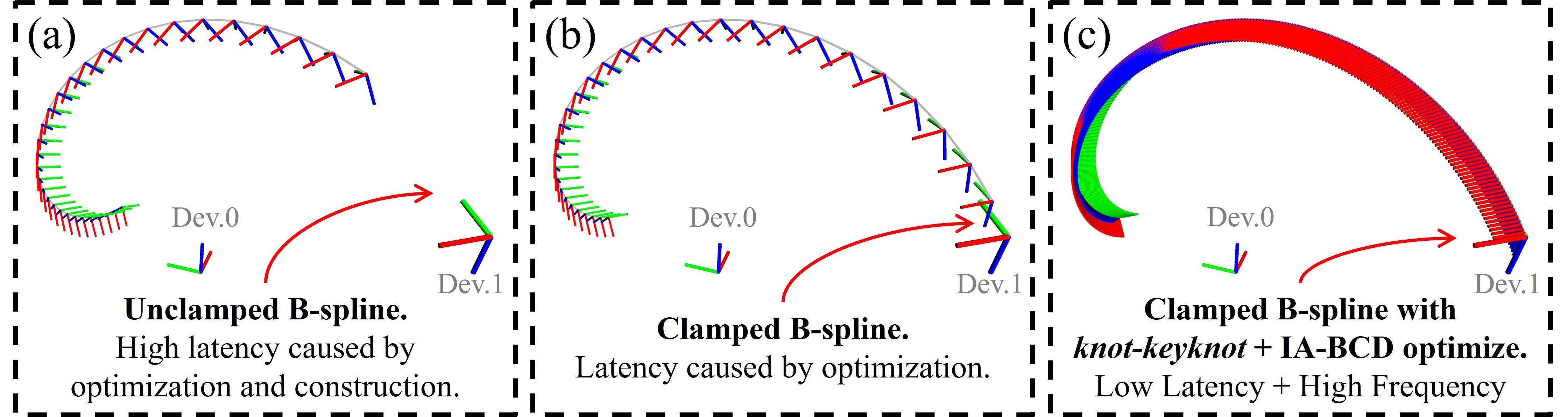}
    \caption{
        Visualization of output frequency and latency using (a) unclamped B-spline, (b) clamped B-spline, and (c) clamped B-spline with knot-keyknot strategy and IA-BCD optimization.
    }
    \label{Fig: Ablation}
\end{figure}

Our main contributions are summarized as follows:
\begin{enumerate}
    \item
          We adopt classical C-NUBS as a continuous-time state representation for online state estimation, eliminating the support-induced query delay of unclamped NUBS.
          We further derive closed-form extension and shrinkage operators that preserve the curve shape, and propose a \textit{knot-keyknot} strategy for high-rate spline extension while retaining sparse motion-informative keyknots.
    \item
          We formulate the general continuous-time reference-centric direct relative localization as a factor graph optimization problem under robocentric relative kinematics, incorporating inter-robot constraints and motion constraints, thereby enabling the joint estimation of 6-DoF relative poses and inter-robot time-offsets.
    \item
          We propose \CT{}, a C-NUBS-based framework that allocates modeling capacity via motion-intensity-aware keyknot selection adaptively, and solves robot-wise subproblems in parallel using IA-BCD, achieving real-time performance while retaining high accuracy.
    \item
          We validate \CT{} in extensive simulations and real-world experiments, demonstrating that it outperforms the evaluated published discrete-time methods.
          All code and datasets will be released as open source to facilitate reproducibility and support the research community \footnote{\url{https://github.com/anonymous/anonymous}}.
\end{enumerate}

\section{Related Works}
\label{Sec: Related Works}

\subsection{Direct Relative State Estimation}

Existing multi-sensor fusion approaches can be broadly categorized as loosely or tightly coupled.
Both paradigms model relative kinematics in the robocentric frame and fuse them with inertial measurements.
Loosely coupled approaches typically partition a multi-robot system into multiple pairwise subsystems \cite{armani2024accurately,aghili2016robust,xun2023crepes,zhao2025robust}, which limits their ability to exploit multi-robot correlations.
To leverage group-level couplings, tightly coupled relative state estimation has been widely studied.
\cite{stegagno2016ground} fused anonymous bearing measurements with inertial data via a particle filter to recover scale and relative poses.
\cite{shalaby2024multi} proposed passive UWB ranging with an on-manifold EKF that tightly couples UWB distance measurements and IMU preintegration to jointly estimate the relative state on $SE_2(3)$.
\cite{li2025crepes} proposed a tightly coupled least-squares optimization to fuse distance, bearing, and inertial measurements, and reported improved robustness in non-line-of-sight environments.
However, filtering-based methods often struggle to enforce global consistency, while discrete-state optimization typically requires an auxiliary interpolation to associate asynchronous measurements with states defined at discrete time instants.

Another challenge is the time-offset caused by independent clocks across robots.
Without dedicated synchronization hardware, such offsets can reach the order of $10^2$~ms \cite{chen2023understanding}, significantly degrading localization accuracy.
Online time-offset estimation has been studied in single-robot SLAM for cross-sensor calibration, typically using short-horizon constant-velocity assumptions to construct time-offset-related residuals \cite{huai2022observability,qin2018online}.
These ideas have also been extended to multi-robot relative localization.
\cite{shalaby2024multi} modeled the UWB clock offset as an explicit EKF state, while \cite{wen2024simultaneous} formulated time-offset-induced bearing projection errors in an indirect setting and solved them via SDP.
However, the underlying short-horizon motion assumptions can often be violated in multi-robot settings, where relative motions between robots can be highly dynamic.

These limitations motivate CT formulations that handle asynchronous measurements in a principled manner.

\subsection{Continuous-Time State Estimation}

CT representations include linear interpolation, wavelets, Gaussian processes, and splines.
In this paper, we focus on B-spline-based representation.
\cite{furgale2013unified} systematically proposed the B-spline-based CT SLAM problem and demonstrated its application in the self-calibration of the IMU-camera extrinsic parameters.
Thereafter, B-spline-based CT trajectory formulations have been widely used in sensor calibration \cite{rehder2016extending,sommer2020efficient,chen2025ikalibr} and SLAM-related applications \cite{mueggler2018continuous,lv2021clins,lang2022ctrl,lv2023continuous,lang2023coco,hug2024hyperion,lang2025gaussian}.
\cite{cioffi2022continuous} conducted a comprehensive comparison between CT and DT methods, highlighting that the knot distribution has a significant impact on estimation performance and computational efficiency.
\cite{lang2023coco} introduced a motion-adaptive scheme that outperforms uniform placement, which was later extended to a Gaussian Splatting SLAM system in \cite{lang2025gaussian}.
A similar approach was explored in \cite{sun2025ct} for UWB anchor-based odometry.
However, due to the recursive construction of B-splines, existing unclamped B-spline-based non-uniform methods inevitably suffer from a query-time delay, as discussed in \autoref{Sec: Clamped Non-Uniform B-Splines}.

Beyond motion representation, a key advantage of B-spline-based trajectory formulation is its natural support for online time-offset estimation.
Unlike DT methods, which rely on short-term constant-velocity assumptions, CT approaches exploit analytically differentiable trajectories to query pose and velocity at arbitrary timestamps and simplify the computation of associated Jacobians.
This allows time-offsets to be embedded directly into measurement models and jointly optimized with the trajectory without additional motion assumptions \cite{furgale2013unified,lv2023continuous,chen2025ikalibr}.
Empirical studies~\cite{cioffi2022continuous} further show that CT formulations achieve significantly higher time-offset estimation accuracy than DT methods.

To the best of our knowledge, no prior work has investigated CT direct relative localization.
\CT{} extends CT state estimation to multi-robot settings, preserving high-frequency sensor information while supporting online inter-robot time-offset estimation.
The closest prior work is \cite{sun2025ct}, which studies UWB anchor-based odometry with fixed infrastructure, fundamentally differing from fully mobile multi-robot relative localization.
Moreover, it adopts unclamped NUBS similar to \cite{lang2023coco}, which incur query-time latency.
In contrast, \CT{} builds upon C-NUBS to eliminate the delays and further addresses the spline shape variations introduced by C-NUBS during spline extension and shrinkage, as detailed in \autoref{Sec: Augmented C-NUBS for State Estimation}.

\vspace{-5pt}
\subsection{Acceleration Methods in Multi-Robot State Estimation}

Scalable state estimation has been extensively studied in both indirect and direct relative localization paradigms.
In indirect relative localization, distributed approaches include factor graph inference with Gaussian marginal exchange \cite{cunningham2013ddf}, ADMM-based SLAM \cite{xu2024d}, and Gaussian belief propagation \cite{murai2023robot,murai2024distributed}.
Recent work has also leveraged distributed pose-graph optimization, including SDP relaxation with Riemannian BCD for decentralized and certifiably correct estimation \cite{tian2020asynchronous}, as well as certifiably optimal collaborative localization from distance measurements \cite{thoms2025distributed}.
In direct relative localization, \cite{liu2020distributed} estimated neighbor poses locally and fused the resulting local topologies through SDP-based alignment, while \cite{wu2024scalable} formulated centralized relative state estimation as a generalized graph-realization problem and solved it using BCD.

While these approaches achieve scalability, they either rely on synchronized updates in optimization or cannot be directly extended to CT formulations.
\cite{sun2017asynchronous} established the theoretical foundation of Asynchronous BCD (A-BCD), analyzing its convergence under both bounded and unbounded delays, thereby eliminating the system latency and speed limitations induced by synchronized updates.
Motivated by this line of work, we adopt an Incremental A-BCD (IA-BCD) variant within \CT{} to enable asynchronous, parallel solution of robot-wise subproblems in a tightly coupled CT estimator.

\section{Augmented C-NUBS for State Estimation}
\label{Sec: Augmented C-NUBS for State Estimation}

This section details the trajectory representation underpinning \CT{}.
We first introduce the preliminaries of NUBS, then discuss the properties of clamping.
Based on these properties, we derive closed-form extension and shrinkage methods for C-NUBS that preserve the spline shape, augmenting it to support state estimation applications.
Finally, we propose a \textit{knot-keyknot} strategy for C-NUBS-based CT state estimation.
\subsection{Preliminaries on Non-Uniform B-Splines}
\label{Sec: Preliminary of Non-Uniform B-Splines}

In \CT{}, we employ cumulative NUBS for relative poses that separately parameterize the rotations $\mathbf{R}(t) \in \textit{SO}(3)$ and translations $\boldsymbol{p}(t) \in \mathbb{R}^3$.
The translation NUBS of order $k$ (degree $k-1$) is defined by knot vector $\mathbf{T}\!=\!\{ t_{[0]}, t_{[1]}, \ldots, t_{[n+k]} \}$ and control points $\mathbf{x_{p}} = \{\mathbf{p}_0, \mathbf{p}_1, \ldots, \mathbf{p}_n\}$.
Due to the local support property of NUBS, whereby only $k$ control points influence the curve value at any time $t$, the curve admits a compact matrix representation \cite{qin1998general,sommer2020efficient}.
The value of $\mathbf{p}(t)$ at time $t \in [t_{[i]}, t_{[i+1]})$ can be expressed as:
\vspace{-4pt}
\begin{equation}
    \begin{aligned}
        \mathbf{p}(t)=
        \left[ \mathbf{p}_{i-k+1}, \mathbf{d}_{i-k+1}^{(1)}, \cdots, \mathbf{d}_{i-k+1}^{(k-1)} \right]
        \widetilde{\mathbf{M}}_{i}^{(k)} \mathbf{u},
        \\
        \mathbf{u}=\begingroup
        \setlength{\arraycolsep}{3pt}
        \begin{bmatrix}
            1 & u & \cdots & u^{k-1}
        \end{bmatrix}
        \endgroup, u=\left(t-t_{[i]}\right)/\left(t_{[i+1]}-t_{[i]}\right),
    \end{aligned}
    \label{Equ: R3 Spline}
    \vspace{-4pt}
\end{equation}
with difference vectors $\mathbf{d}_{i}^{(j)} = \mathbf{p}_{i+j} - \mathbf{p}_{i+j-1}$.
The cumulative blend matrix $\widetilde{\mathbf{M}}^{(k)}_{i}$ is only related to order $k$ and $2(k-1)$ knots $\{t_{[i-k+2]}, \ldots, t_{[i+k-1]}\}$.

Cumulative NUBS can be generalized to Lie groups and, in particular, to $\textit{SO}(3)$ for smooth rotation generation \cite{sommer2020efficient,kim1995general}.
Similar to (\ref{Equ: R3 Spline}), we can represent the cumulative rotation NUBS with control points $\mathbf{x_{R}} = \{\mathbf{R}_0, \mathbf{R}_1, \ldots, \mathbf{R}_n\}$ of order $k$ over time $t \in [t_{[i]}, t_{[i+1]})$ as:
\vspace{-8pt}
\begin{equation}
    \mathbf{R}(t)=\mathbf{R}_{i-k+1} \cdot \prod_{j=1}^{k-1} \Exp \left(\widetilde{\mathbf{M}}_{i}^{(k)} \mathbf{u} \cdot \mathbf{d}_{i-k+1}^{(j)} \right),
    \label{Equ: SO3 Spline}
    \vspace{-5pt}
\end{equation}
where $\mathbf{d}_{i}^{(j)} = \Log \left(\mathbf{R}_{i+j-1}^{-1} \mathbf{R}_{i+j}\right)$, and $\mathbf{R}_{i} \in \textit{SO}(3)$ is the $i$-th rotation control point.
$\Exp(\cdot)$ and $\Log(\cdot)$ are the exponential and logarithmic maps of $\textit{SO}(3)$.

As can be seen from (\ref{Equ: R3 Spline}) and (\ref{Equ: SO3 Spline}), the valid query range of an NUBS is smaller than the bounds of its knot vector.
Specifically, only the intervals where the blend matrix can be constructed are valid, with each blend matrix corresponding to one spline segment.
Thus, if $t_{[n+k]}$ is the most recent knot, the spline is only defined up to $t_{[n+1]}$, introducing an inherent delay of $(t_{[n+k]} - t_{[n+1]})$.

\textit{Extended Query Interval Property}:
\label{eqip}
Although a B-spline is conventionally evaluated over $[t_{[k-1]},t_{[n+1]})$, its basis-function recursion admits one additional evaluable interval at each boundary when the adjacent control points $\mathbf p_{-1}$ and $\mathbf p_{n+1}$ are available.
Over $[t_{[k-2]},t_{[k-1]})$ and $[t_{[n+1]},t_{[n+2]})$, the recursion becomes one-sided, and the terms requiring knots outside the existing knot vector vanish due to zero support.
Hence, using the existing knot vector $\mathbf T=\{t_{[0]},\ldots,t_{[n+k]}\}$, the evaluable query interval extends to $[t_{[k-2]},t_{[n+2]})$ with additional control points $p_{-1}, p_{n+1}$.

\subsection{Clamped Non-Uniform B-Splines}
\label{Sec: Clamped Non-Uniform B-Splines}

C-NUBS are a classical variant of NUBS in which the first and last knots are repeated $k$ times, forcing the curve to interpolate its endpoints\cite{piegl2012nurbs}:
\vspace{-5pt}
\begin{align}
    \mathbf{T}_{\text{clamp}}   & :  t_{[0]} = \cdots = t_{[k-1]} \leq \cdots \leq t_{[n+1]} = \cdots = t_{[n+k]}, \notag \\
    \mathbf{T}_{\text{unclamp}} & :   t_{[0]} \leq t_{[1]} \leq \cdots \leq t_{[n+k-1]} \leq t_{[n+k]}.
    \vspace{-5pt}
\end{align}
This formulation provides the same approximation capability as the unclamped version while yielding improved boundary behavior and has been widely used in computer graphics \cite{piegl2012nurbs} and path planning \cite{christensen2025non}.
In real-time state estimation, clamping can eliminate the inherent evaluation delays at the curve endpoints, which is rarely discussed in prior works \cite{lang2023coco,lang2025gaussian,sun2025ct} but is crucial for online applications.

For uniform B-splines, subsequent knots and control points can be generated in advance.
This eliminates the inherent delay and extends the valid range of the B-spline to the latest sensor time, enabling real-time optimization.
As shown in \autoref{Fig: basismatrix}(a), evaluating the spline at time $6\,\mathrm{s}$ requires the knots and control points at time $7\,\mathrm{s}$ and $8\,\mathrm{s}$ to be available in advance.

However, for NUBS, since knot intervals are uncertain and typically determined by the latest sensor measurements \cite{lang2023coco,lang2025gaussian}, knots and control points beyond the latest sensor time cannot be determined from the currently available measurements.
As a result, estimation inevitably suffers from the inherent delay.
As shown in \autoref{Fig: basismatrix}(b), although exploiting the \textit{Extended Query Interval Property}, the spline can only be evaluated up to time $4\,\mathrm{s}$, resulting in a delay of $2\,\mathrm{s}$.

For C-NUBS, the repeated final knot allows immediate evaluation up to the latest time, directly eliminating the inherent delay.
This property enables adaptive insertion of non-uniform knots and control points in response to incoming sensor data, while still providing state estimates at the most recent knot time.
As shown in \autoref{Fig: basismatrix}(c), the latest sensor time, the latest knot time, and the evaluable time all coincide at $6\,\mathrm{s}$.

Moreover, C-NUBS also ensures that the curve passes through the final control point, which is useful for initializing new segments.
However, adding or deleting the last control point changes the shape of the last $(k-1)$ segments, which is undesirable for consistent state estimation.

\subsection{Closed-Form Extension of C-NUBS}
\label{Sec: Closed-Form Extension of C-NUBS}

A key issue with C-NUBS is that adding new control points alters the existing spline segment, thereby shifting the solution obtained from the previous optimization.
An iterative unclamping algorithm was proposed in \cite{piegl2012nurbs} and later applied to spline extension in \cite{hu2002extension}.
In \cite{hu2002extension}, the extended knot is determined by chord-length estimation to maintain low curvature in the newly added segment under varying distances to the target control point.
In state estimation, however, the objective is to accurately parameterize the state at a specified timestamp, which requires explicitly specifying both the target knot and control point.
We achieve this by adjusting only the last $k-2$ control points, a process we term closed-form extension.

\begin{proposition}[Closed-form extension of a C-NUBS]
    \label{Pro: Closed-Form Extension}
    Consider a C-NUBS curve of order $k$ defined by control points $\mathbf{x}_{p}$ and a clamped knot vector $\mathbf{T}$:
    \begin{equation}
        \begin{aligned}
            \mathbf{x_p} & = \left\{ \mathbf{p}_{0}, \mathbf{p}_{1}, \ldots, \mathbf{p}_{n-k-1}, \mathbf{p}_{n-k}, \ldots, \mathbf{p}_{n} \right\},
            \\
            \mathbf{T}   & : t_{[0]} = \cdots = t_{[k-1]} \leq \cdots \leq t_{[n+1]} = \cdots = t_{[n+k]}.
        \end{aligned}
        \label{Equ: Clamped B-spline Definition}
    \end{equation}

    An extension of this C-NUBS to a prescribed new knot $t_{+}>t_{[n+1]}$ with a prescribed new control point $\tilde{\mathbf p}_{n+1}$ is obtained through the following transformation.
    \begin{enumerate}
        \item  Replace the last $k\!-\!1$ knots by $t_{+}$, yielding $\mathbf T'=\{t'_{[i]}\}_{i=0}^{n+k}$ with
              \vspace{-6pt}
              \[
                  t'_{[i]}
                  =
                  \begin{cases}
                      t_{[i]}, & 0\leq i\leq n+1,   \\
                      t_{+},   & n+2\leq i\leq n+k.
                  \end{cases}
              \]
        \item Update the last $k-2$ control points using the closed-form
              unclamping recursion.
              Let $\tilde{\mathbf p}_{i}^{(0)}=\mathbf p_i$ for
              $i=n-k+2,\ldots,n$.
              For $r=1,\ldots,k-2$, the affected control points are recursively
              updated as
              \vspace{-3pt}
              \begin{equation}
                  \tilde{\mathbf p}_{i}^{r}
                  \!=\!
                  \frac{
                      \tilde{\mathbf p}_{i}^{(r\!-\!1)}\!-\!(1\!-\!\alpha_{i}^{r})\tilde{\mathbf p}_{i\!-\!1}^{r}
                  }{
                      \alpha_{i}^{r}
                  },
                  \alpha_{i}^{r}
                  \!=\!
                  \frac{t'_{[n+1]}\!-\!t'_{[i]}}
                  {t'_{[i+r+1]}\!-\!t'_{[i]}},
                  \label{Equ: extension update}
                  \vspace{-3pt}
              \end{equation}
              for $i=n-r+1,\ldots,n$, while $\tilde{\mathbf p}_i^{r}=\tilde{\mathbf p}_i^{r-1}$ for the remaining retained control points.
              The modified control-point set is
              \vspace{-5pt}
              \[
                  \mathbf x'_p
                  =
                  \{
                  \mathbf p_0,\ldots,\mathbf p_{n-k+2},
                  \tilde{\mathbf p}_{n-k+3}^{k-2},\ldots,
                  \tilde{\mathbf p}_{n}^{k-2}
                  \}.
              \]
        \item Append the new control point $\tilde{\mathbf p}_{n+1}$ to $\mathbf x'_p$ and the new knot $t_{+}$ to $\mathbf T'$:
              \[
                  \mathbf x''_p
                  =
                  \{
                  \mathbf p_0,\ldots,\mathbf p_{n-k+2},
                  \tilde{\mathbf p}_{n-k+3}^{k-2},\ldots,
                  \tilde{\mathbf p}_{n}^{k-2},
                  \tilde{\mathbf p}_{n+1}
                  \},
              \]
              \[
                  \mathbf T''
                  =
                  \{
                  t'_{[0]},\ldots,t'_{[n+k]},t_{+}
                  \}.
              \]
    \end{enumerate}

    The resulting C-NUBS $(\mathbf{x}_{p}^{\prime\prime}, \mathbf{T}^{\prime\prime})$ is identical to the original curve on $[t_{[0]}, t_{[n+1]}]$ and is extended to specified $t_{+}$ with $\tilde{\mathbf{p}}_{n+\!1}$.
\end{proposition}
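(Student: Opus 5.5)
The plan is to read the transformation in steps 1--2 as the exact inverse of Boehm knot insertion. The statement then reduces to a standard knot-insertion identity plus a local-support argument for step 3.

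\textbf{Step A (reformulate the claim).} Let $C'$ denote the spline defined by $(\mathbf x'_p,\mathbf T')$. Inserting the knot $t_{[n+1]}$ into $\mathbf T'$ a total of $k-2$ times raises its multiplicity from one to $k-1$. The goal is to show that this insertion returns exactly the control points $\mathbf p_{n-k+2},\ldots,\mathbf p_n$ on the affected indices, so that
\[
\mathbf T'\cup\{t_{[n+1]}\}^{k-2}
\]
has the same knots as $\mathbf T$ up to index $n+k-1$, followed by extra knots equal to $t_+$. By the knot-insertion theorem of \cite{piegl2012nurbs}, inserting a knot never changes the curve. Hence $C'$ equals the inserted spline everywhere.

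\textbf{Step B (compare the last segment).} Consider the inserted spline on $[t_{[n]},t_{[n+1]})$. By the local-support description after (\ref{Equ: R3 Spline}), this segment depends only on the knots $t_{[n-k+2]},\ldots,t_{[n+k-1]}$ and on the $k$ preceding control points. Since $t_{[n+1]}$ now has multiplicity $k-1$, every knot to the right of index $n+1$ that enters the blend matrix equals $t_{[n+1]}$. These are exactly the knots of the clamped $\mathbf T$ in that range. The same comparison applies to every earlier segment, whose blend matrices only see knots that are unchanged or equal $t_{[n+1]}$. Therefore $C'$ coincides with the original curve on $[t_{[0]},t_{[n+1]}]$, with the right endpoint handled by continuity.

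\textbf{Step C (invert Boehm's recursion).} Write one pass of Boehm insertion of $\bar t=t_{[n+1]}$ into the current knot vector as
\[
\mathbf q_i^{\text{new}}=\alpha_i\,\mathbf q_i+(1-\alpha_i)\,\mathbf q_{i-1},
\]
with $\alpha_i=(\bar t-t_{[i]})/(t_{[i+k-1]}-t_{[i]})$ taken on the current knots. Performing the $k-2$ insertions in sequence, pass $r$ affects only indices $i=n-r+1,\ldots,n$. After reindexing into the fixed vector $\mathbf T'$, the weights become the $\alpha_i^r$ of (\ref{Equ: extension update}). I would then verify that solving each linear relation for the unknown pre-insertion point gives precisely (\ref{Equ: extension update}), processed from the lowest index upward. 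This ordering works because $\tilde{\mathbf p}_{i-1}^{r}$ is already known at step $i$: the lowest affected index in each pass has a left neighbour that is untouched.

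The denominators $\alpha_i^r$ are nonzero provided $t'_{[i]}<t_{[n+1]}$ on the affected indices. This is where interior knot multiplicity at $t_{[n+1]}$ must be excluded. I would state this as the implicit nondegeneracy assumption, noting that $t_+>t_{[n+1]}$ makes every denominator $t'_{[i+r+1]}-t'_{[i]}$ positive.

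\textbf{Step D (append).} In $\mathbf T''$, the segments up to $t_{[n+1]}$ use knots only up to index $n+k-1$, and those are identical in $\mathbf T'$ and $\mathbf T''$. They also use control points only up to index $n$. Appending $\tilde{\mathbf p}_{n+1}$ and $t_+$ therefore leaves $[t_{[0]},t_{[n+1]}]$ unchanged. Meanwhile $\mathbf T''$ now has $t_+$ repeated $k$ times, so it is clamped and interpolates $\tilde{\mathbf p}_{n+1}$ at $t_+$, as claimed.

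\textbf{Main obstacle.} I expect the hard part to be Step C: the index bookkeeping that matches the sequential insertion weights, which are defined on shifting intermediate knot vectors, to the single formula for $\alpha_i^r$ written on the fixed $\mathbf T'$. My first attempt would be an induction on $r$, tracking the intermediate knot vector explicitly after each insertion.
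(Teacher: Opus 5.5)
Your proposal is correct and follows the paper's route: unclamp the right end with the recursion so the curve on $[t_{[0]},t_{[n+1]}]$ is unchanged, then append $t_{+}$ and $\tilde{\mathbf p}_{n+1}$ by local support. The difference is that the paper cites the shape-preserving unclamping procedure of Piegl and Tiller, whereas you justify it by reading the recursion as inverted Boehm insertion of $t_{[n+1]}$, which is how that procedure is derived.

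One bookkeeping correction for Step C: the paper's step $r$ undoes the $(k-1-r)$-th insertion, not the $r$-th. The first insertion into $\mathbf T'$ modifies the $k-2$ indices $n-k+3,\ldots,n$, and the last modifies only index $n$, so your induction should run through the insertions in reverse order. Also, every knot $t'_{[i+r+1]}$ that appears equals $t_{+}$, so each weight is simply $(t_{[n+1]}-t_{[i]})/(t_{+}-t_{[i]})$. This removes most of the index difficulty you anticipated.
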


\begin{proof}
    As shown in \cite{piegl2012nurbs}, the last \(k{-}1\) knots of a C-NUBS can be reassigned arbitrarily if the last \(k{-}2\) control points are iteratively updated to preserve the curve shape.
    Following this principle, the last \(k{-}1\) knots are first replaced with the new knot \(t_{+}\), temporarily converting the clamped spline into an unclamped one.
    The last \(k{-}2\) control points are then updated using the unclamping procedure to maintain shape invariance.
    After this operation, the curve over the existing domain \([t_{[0]}, t_{[n+1]})\) remains unchanged.
    Finally, by the local support property of B-splines, appending the new control point \(\tilde{\mathbf{p}}_{n+1}\) and knot \(t_{+}\) affects only the boundary segment, ensuring that the extended curve is identical to the original on \([t_{[0]}, t_{[n+1]}]\) and smoothly extends to \(t_{+}\).
\end{proof}

\begin{example}
    The extension procedure of a cubic C-NUBS (\(k=4\)) from
    $
        (\mathbf{x_p}, \mathbf{T}\!=\!\{0,0,0,0,1,2,3,3,3,3\}),
    $
    to
    $
        (\mathbf{x_p}^{\prime\prime}, \mathbf{T}^{\prime\prime}\!=\!\{0,0,0,0,1,2,3,4,4,4,4\})
    $
    is illustrated in \autoref{Fig: Closed-Form Extension}.
\end{example}

\begin{figure}[ht]
    
    \centering
    \includegraphics[width=1\linewidth]{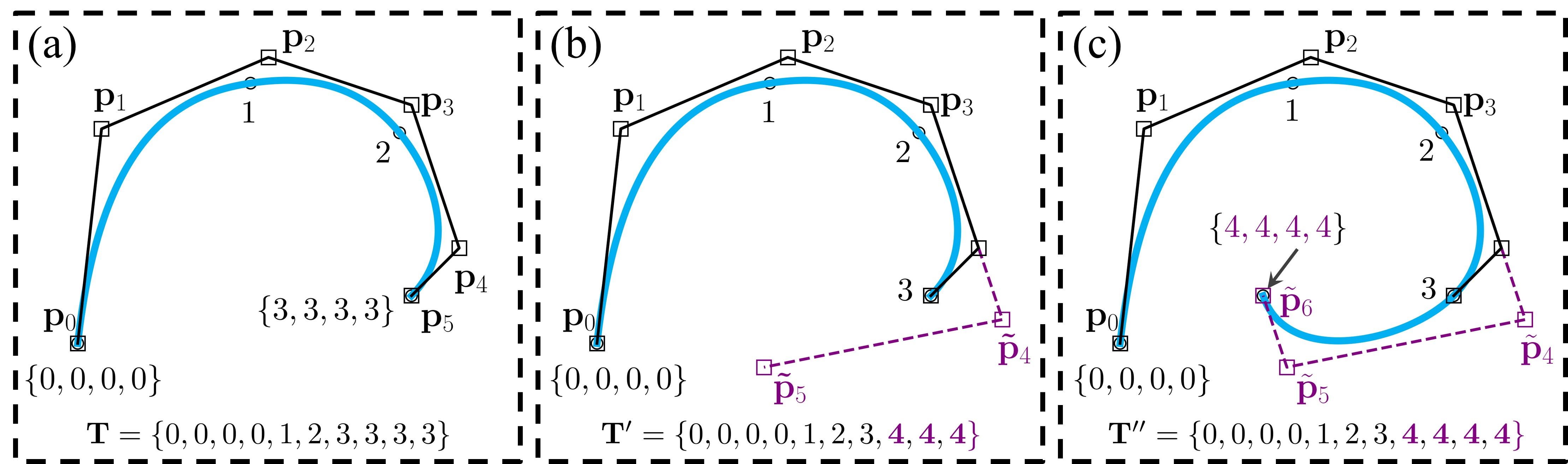}
    \caption{
        Closed-form extension of C-NUBS ($k=4$).
        The purple highlights the updated portion.
        $(k-2)$ control points are adjusted to keep the shape.
        (a) Original spline.
        (b) Step 1 and 2 of \autoref{Pro: Closed-Form Extension}.
        (c) Step 3 of \autoref{Pro: Closed-Form Extension}.
    }
    \label{Fig: Closed-Form Extension}
\end{figure}

\subsection{Closed-Form Shrinkage of C-NUBS}
\label{Sec: Closed-Form Shrinkage of C-NUBS}

Analogous to extension, the shrinkage of a C-NUBS will change its shape.
We exploit the knot insertion algorithm \cite{piegl2012nurbs} and the local support property of B-splines to derive a closed-form shrinkage algorithm.

\begin{proposition}[Closed-form shrinkage of a C-NUBS]
    \label{Pro: Closed-Form Shrinkage}
    Consider the C-NUBS defined in (\ref{Equ: Clamped B-spline Definition}), and let $u=t_{[n]}$ be the left endpoint of the terminal interval $[t_{[n]},t_{[n+1]})$.
    Removing this interval yields the following shrinkage transformation.
    \begin{enumerate}
        \item Insert the $u$ into $\mathbf{T}$ $k\!-\!1$ times.
              Since $u$ is a simple interior knot in our construction, only the last $k-2$ retained control points need to be updated.
              Let $\tilde{\mathbf p}_{i}^{0}=\mathbf p_i$ for $i=n-k+1,\ldots,n-1$.
              For $r=1,\ldots,k-2$, the affected control points are recursively updated as
              \begin{equation}
                  \tilde{\mathbf p}_{i}^{r}
                  \!=\!
                  \alpha^{r}_{i}\tilde{\mathbf p}_{i}^{r-1}
                  \!+\!
                  (1\!-\alpha_{i}^{r})\tilde{\mathbf p}_{i-1}^{r-1},\,
                  \alpha^{r}_{i}
                  \!=\!
                  \frac{u\!-\!t_{[i]}}
                  {t_{[i+k-r]}\!-\!t_{[i]}},
                  \label{Equ: cnubs_shrink_update}
              \end{equation}
              for $i=n-k+r+1,\ldots,n-1$, while $\tilde{\mathbf p}_i^{r}=\tilde{\mathbf p}_i^{r-1}$ for the remaining retained control points.
              The final insertion only increases the multiplicity of $u$ to $k$ and does not modify the retained control points.

        \item Remove the last $k$ knots and control points, yielding
              \[
                  \mathbf x_p''
                  =
                  \{
                  \mathbf p_0,\ldots,\mathbf p_{n-k+1},
                  \tilde{\mathbf p}_{n-k+2}^{k-2},\ldots,
                  \tilde{\mathbf p}_{n-1}^{k-2}
                  \},
              \]
              \[
                  \mathbf T''
                  =
                  \{
                  t_{[0]},\ldots,t_{[n-1]},
                  \underbrace{u,\ldots,u}_{k}
                  \}.
              \]
    \end{enumerate}

    The resulting C-NUBS $(\mathbf x_p'',\mathbf T'')$ coincides with the original curve on $[t_{[0]},u]$ and shrinks its endpoint from $t_{[n+1]}$ to $u$.
\end{proposition}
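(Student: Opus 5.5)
The plan is to treat shrinkage as repeated knot insertion (Boehm's algorithm, as in \cite{piegl2012nurbs}) followed by truncation. Each insertion produces an exactly equivalent spline. Local support then guarantees that dropping the trailing knots and control points does not affect the curve on $[t_{[0]},u]$.

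\textbf{Step 1: knot insertion preserves the curve.} Inserting a knot $\bar u\in[t_{[j]},t_{[j+1]})$ into a B-spline of order $k$ gives a new knot vector. On it, the old curve is represented exactly by new control points $\mathbf q_i=\alpha_i\mathbf p_i+(1-\alpha_i)\mathbf p_{i-1}$, with $\alpha_i=(\bar u-t_{[i]})/(t_{[i+k-1]}-t_{[i]})$ for $j-k+2\le i\le j$. Control points below this range are kept, and those above it are shifted in index. I will take this identity as the classical result, provable from the Cox--de Boor recursion.

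Here $u=t_{[n]}$ is a simple interior knot. Inserting it $r$ times in succession gives the standard multiple-insertion formula (Algorithm A5.1 of \cite{piegl2012nurbs}): at the $r$-th pass the denominator becomes $t_{[i+k-r]}-t_{[i]}$, and the affected index range is $i=n-k+r+1,\dots,n-1$ after accounting for the existing multiplicity $1$. I will check by induction on $r$ that this bookkeeping matches (\ref{Equ: cnubs_shrink_update}). In particular, the range shrinks by one per pass, so after $k-2$ passes $u$ has multiplicity $k-1$ and the curve is unchanged.

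\textbf{Step 2: the final insertion is trivial.} With multiplicity $k-1$, inserting $u$ once more has an empty affected range in the formula above, since $n-k+(k-1)+1>n-1$. So the control points are unchanged and only the knot is duplicated. The knot vector now contains $u$ exactly $k$ times.

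\textbf{Step 3: truncation by local support.} Once $u$ has multiplicity $k$, the basis functions split. The B-splines supported on $[t_{[0]},u]$ use only knots up to the $k$-fold $u$, and the curve there depends only on control points whose basis support ends at or before $u$. These are exactly $\mathbf p_0,\dots,\mathbf p_{n-k+1},\tilde{\mathbf p}^{k-2}_{n-k+2},\dots,\tilde{\mathbf p}^{k-2}_{n-1}$. The remaining $k$ control points, together with the trailing knots, form an independent clamped piece on $[u,t_{[n+1]}]$.

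Discarding that piece therefore yields a valid clamped knot vector $\mathbf T''$ ending with $u$ repeated $k$ times. The resulting curve agrees with the original on $[t_{[0]},u)$, and by continuity (endpoint interpolation of clamped splines) also at $u$.

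\textbf{Main obstacle.} The real work is the index bookkeeping in Step 1. I must verify that successive insertions of the same simple knot reproduce precisely the $\alpha_i^r$ and index ranges of (\ref{Equ: cnubs_shrink_update}), including where the clamped right end (knots equal to $t_{[n+1]}$) enters the denominators. I must also confirm that the counts of retained control points and knots in $(\mathbf x_p'',\mathbf T'')$ satisfy the relation between the number of knots, the number of control points and the order $k$.

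My first check would be the cubic case $k=4$ on an explicit knot vector such as $\{0,0,0,0,1,2,3,3,3,3\}$. I would then generalize the pattern by induction on $r$.
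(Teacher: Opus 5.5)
Your proposal is correct and takes the same route as the paper's proof: exact curve preservation under repeated knot insertion until $u$ reaches multiplicity $k$, then truncation of the trailing piece on $[u,t_{[n+1]}]$ by local support; your index check goes further than the paper and settles your ``main obstacle,'' since clamping makes every denominator $t_{[i+k-r]}$ in the affected range equal to $t_{[n+1]}>u$. One minor precision: the final insertion does duplicate one control point (the list grows to $n+k$, which your count of ``the remaining $k$'' points already assumes), and what stays unchanged is only the first $n$ retained control points.
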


\begin{proof}
    Each standard knot insertion preserves the represented B-spline curve exactly~\cite{piegl2012nurbs}.
    Since $u$ is initially a simple interior knot, after $k-1$ insertions its multiplicity becomes $k$, forming the clamped endpoint of the shrunk spline.
    By the local support property, removing the terminal $k$ knots and control points discards only the interval $[u,t_{[n+1]})$ while leaving the curve on $[t_{[0]},u]$ unchanged.
\end{proof}

\begin{example}
    The shrinkage procedure of a cubic C-NUBS (\(k=4\)) from
    $
        \mathbf{T} = \{0,0,0,0,1,2,3,3,3,3\}
    $
    to
    $
        \mathbf{T} = \{0,0,0,0,1,2,2,2,2\}
    $
    is illustrated in \autoref{Fig: Closed-Form Shrinkage}.
\end{example}
\vspace{-7pt}
\begin{figure}[ht]
    
    \centering
    \includegraphics[width=1\linewidth]{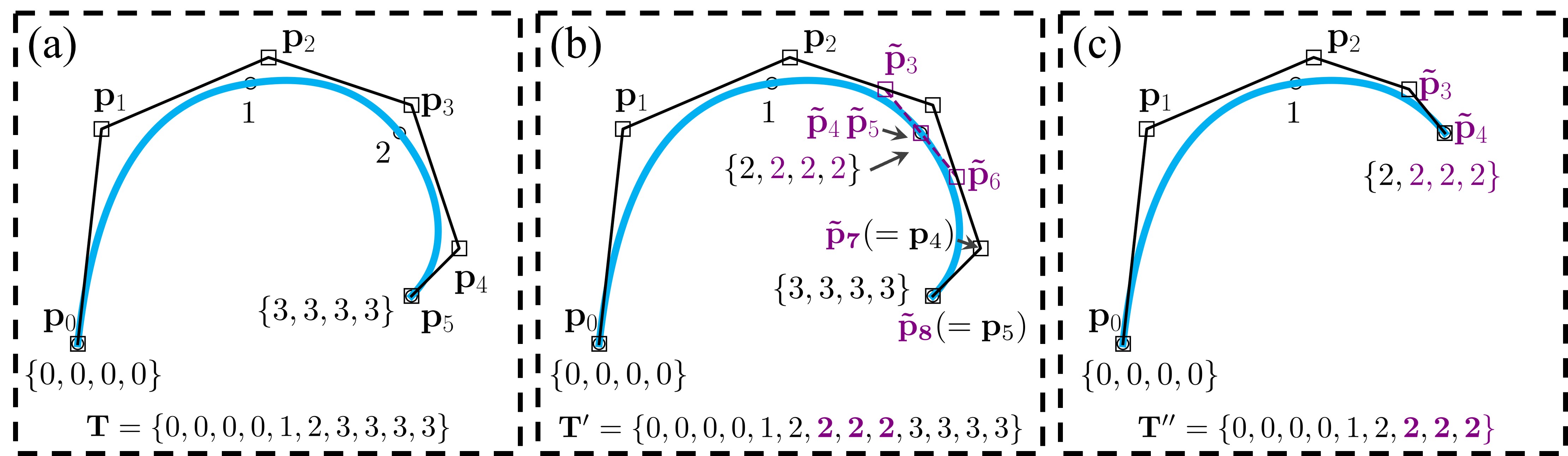}
    \caption{
        Closed-form shrinkage of C-NUBS ($k=4$).
        The purple highlights the updated portion.
        $(k-2)$ control points are adjusted to keep the shape.
        (a) Original spline.
        (b) Step 1 of \autoref{Pro: Closed-Form Shrinkage}.
        (c) Step 2 of \autoref{Pro: Closed-Form Shrinkage}.
    }
    \label{Fig: Closed-Form Shrinkage}
    \vspace{-5pt}
\end{figure}

\subsection{Adaptation to Lie Groups}
\label{Sec: Generalization to Lie Groups}

The proposed extension and shrinkage transformations naturally adapt to Lie groups.
For $\mathrm{SO}(3)$, Euclidean interpolation is replaced by group composition with scaling in the corresponding Lie algebra.

The extension update in \eqref{Equ: extension update} becomes
\begin{equation}
    \tilde{\mathbf{R}}_i^r=\tilde{\mathbf{R}}_{i-1}^r \cdot \Exp \left(\frac{1}{\alpha_{i}^{r}} \cdot \Log \left(\left(\tilde{\mathbf{R}}_{i-1}^r\right)^{\top} \tilde{\mathbf{R}}_i^{r-1}\right)\right),
    \label{Equ: SO3 extension}
\end{equation}

Similarly, the shrinkage update in \eqref{Equ: cnubs_shrink_update} becomes
\begin{equation}
    \tilde{\mathbf R}_{i}^{r}
    =
    \tilde{\mathbf R}_{i-1}^{r-1}
    \operatorname{Exp}
    \left(
    \alpha_{i}^{r}
    \operatorname{Log}
    \left(
        \left(\tilde{\mathbf R}_{i-1}^{r-1}\right)^{\top}
        \tilde{\mathbf R}_{i}^{r-1}
        \right)
    \right).
    \label{eq:so3_shrinkage}
\end{equation}

Demonstrations of the closed-form extension and shrinkage of an $\textit{SO}(3)$ C-NUBS curve are shown in \autoref{Fig: SO3 Extension Shrinkage}.
\vspace{-7pt}
\begin{figure}[htb]
    
    \centering
    \includegraphics[width=\linewidth]{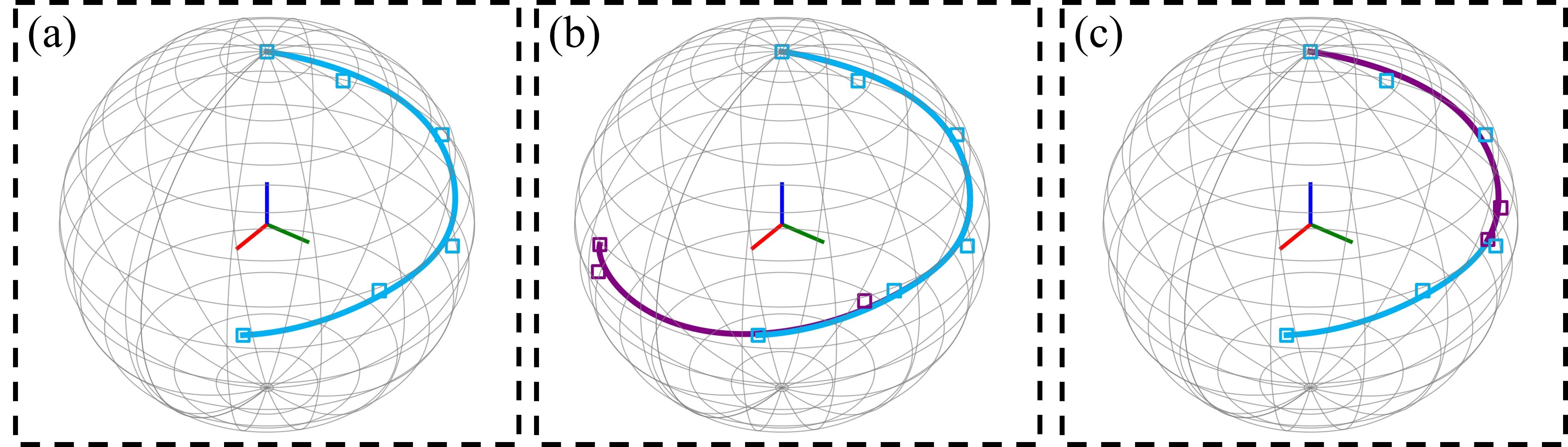}
    \caption{
        Demonstrations of the closed-form extension and shrinkage of an $\textit{SO}(3)$ C-NUBS curve.
        The purple highlights the updated portion.
        (a) Original spline.
        (b) Extended spline.
        (c) Shrunk spline.
        $(k\!-\!2)$ control points are adjusted to keep the shape.
    }
    \label{Fig: SO3 Extension Shrinkage}
\end{figure}

\vspace{-15pt}
\subsection{Knot-Keyknot Strategy for C-NUBS}
\label{Sec: Knot-Keyknot Strategy for C-NUBS}

Most B-spline-based CT state estimation methods are limited by uniformly spaced knots and fixed optimization intervals.
In contrast, discrete-time state estimation approaches, such as VINS-Mono~\cite{qin2018vins}, adopt a frame-keyframe strategy: performing per-frame optimizations for camera-rate outputs while preserving only informative keyframes in the sliding window to ensure estimation efficiency.


However, this strategy is difficult to realize with uniform B-splines.
Because the knot frequency is typically lower than the measurement rate, high-frequency outputs require extrapolation beyond the latest measurement time $t_{\text{last}}$, leaving an unconstrained interval before the next knot $t_{\text{knot}}$.
During this interval, the spline relies only on its inherent smoothness, degrading observability and potentially causing numerical instability (see \autoref{Fig: knot-keyknot}).
Non-uniform B-splines alleviate this issue by inserting knots at measurement times, but existing unclamped formulations introduce intrinsic query-time delay, as discussed in \autoref{Sec: Clamped Non-Uniform B-Splines}.

\begin{figure}[ht]
    
    \centering
    \includegraphics[width=\halfwidth]{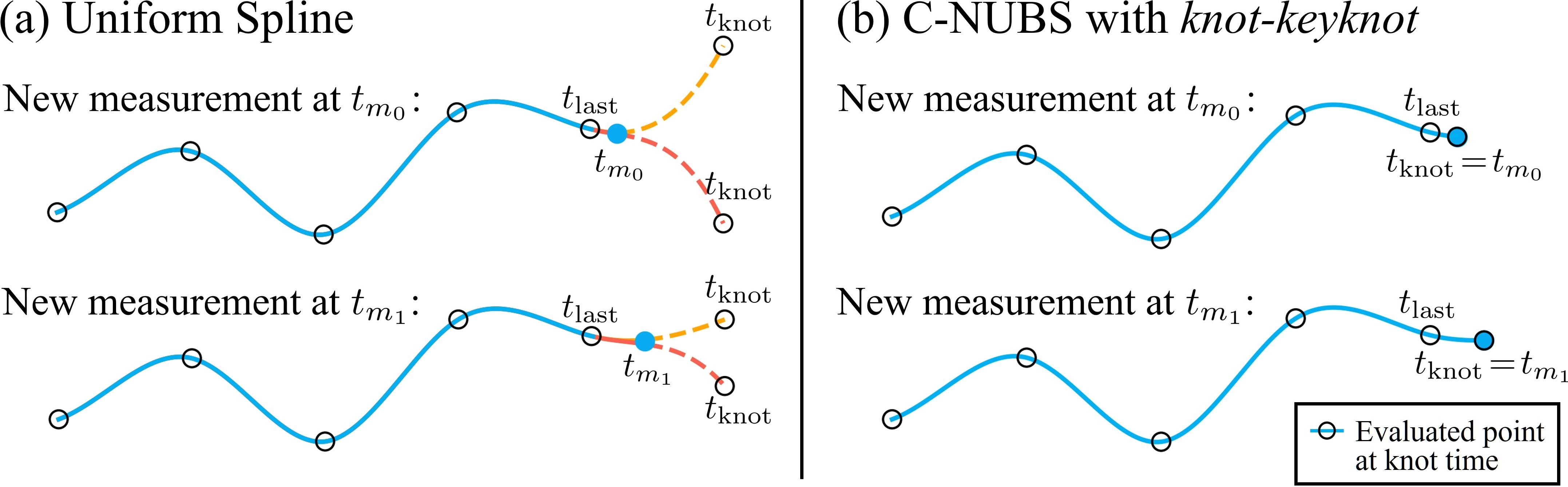}
    \caption{
        Tail extrapolation in uniform B-splines and its avoidance using C-NUBS with the knot-keyknot strategy.
        (a) Uniform knots force tail extrapolation, creating an unconstrained interval that degrades observability and stability.
        (b) C-NUBS with the \textit{knot-keyknot} strategy inserts a knot at each measurement time, directly constraining the spline tail without extrapolation.
    }
    \label{Fig: knot-keyknot}
\end{figure}

C-NUBS resolves this limitation by enabling real-time evaluation up to the latest knot with endpoint interpolation.
The proposed extension and shrinkage operations further enable flexible terminal-knot manipulation.
Leveraging these properties, we introduce a \textit{knot-keyknot} strategy for CT state estimation that bridges spline-based and keyframe-based paradigms.

At each incoming measurement, we extend the B-spline by appending a knot and control point at the measurement timestamp.
A sliding-window optimization is then performed over the spline segment span defined by this new knot and existing keyknots.
After optimization, if the new knot is not selected as a keyknot, the spline is shrunk back to the last keyknot using closed-form shrinkage, then waits for the next extension.
Otherwise, it is retained as keyknot.
This process follows an ``extend-shift-retain'' cycle, as shown in \autoref{Fig: knot-keyknot}(b).

\section{Continuous-Time Relative State Estimation Problem Formulation}
\label{Sec: B-Spline-Based Continuous-Time Multi-Robot Relative State Estimation}

In this section, we formulate a general B-spline-based continuous-time factor graph optimization problem for direct relative localization.
This section is not restricted to a specific B-spline type and supports uniform/non-uniform as well as clamped/unclamped B-splines.

Consider a group of $n$ robots indexed by $\mathcal{V}\!=\!\{0,1,\dots,n\!-\!1\}$, where each robot is equipped with an ego-motion sensor (one IMU in this work) and inter-robot sensing (one UWB and one camera in this work).
Robots convert raw sensor data into timestamped observation messages and broadcast them to the team.
A designated reference robot $s$ runs the estimator, which jointly estimates inter-robot time-offsets and relative poses from asynchronous measurement streams.

For each robot $j\in\mathcal{V}$, we represent the relative pose with a translation spline ${}^{s}\mathbf{p}_j(t)\in\mathbb{R}^3$ and a rotation spline ${}^{s}\mathbf{R}_j(t)\in\mathrm{SO}(3)$.
For consistency, the reference robot trajectory is also spline-parameterized, while its pose is anchored by enforcing ${}^{s}\mathbf{p}_s(t)=\mathbf{0}$ and ${}^{s}\mathbf{R}_s(t)=\mathbf{I}$.
Within the $\kappa$-th sliding window $[t_\kappa-\Delta,t_\kappa]$, where $\Delta$ denotes the window length, the relative trajectory of robot $j$ is parameterized by active spline control points ${}^{s}\bm{\Phi}^{\kappa}_{j}$.
The estimator optimizes the following state:
\vspace{-3pt}
\begin{equation}
    \begin{aligned}
        \mathcal{X}^{\kappa}       & = \left\{ \left\{ {\mathcal{X}^{\kappa}_{0}}, {\mathcal{X}^{\kappa}_{1}}, \cdots, {\mathcal{X}^{\kappa}_{n-1}} \right\}, \bm{\Psi}^{\kappa} \right\},
        \\[-2pt]
        {\mathcal{X}^{\kappa}_{j}} & = \left\{ {^{s}\bm{\Phi}^{\kappa}_{j}}, {^{s}{\tau}^{\kappa}_j}, {^j}\mathbf{b}^{\kappa}_{\mathbf{a}}, {^j}\mathbf{b}^{\kappa}_{\bm{\omega}} \right\}, j \in \mathcal{V}.
    \end{aligned}
\end{equation}
Here, ${}^{s}\tau^{\kappa}_{j}$ denotes the clock time-offset between robot $s$ and robot $j$.
The vectors ${}^{j}\mathbf{b}^{\kappa}_{\mathbf{a}}$ and ${}^{j}\mathbf{b}^{\kappa}_{\bm{\omega}}$ are the IMU biases of robot $j$.
The auxiliary variable set $\bm{\Psi}^{\kappa}$ collects the control points of an auxiliary IMU spline of the reference robot $s$.
The roles of these variables will be described in the following subsections.

For simplicity, sensor extrinsics are assumed known from offline calibration, following \cite{li2025crepes}, where extrinsics were included in the state but not estimated.

\vspace{-5pt}
\subsection{Measurement Time-Offset}
\label{Sec: Measurement Time-Offset}

In \CT{}, all trajectories are parameterized with respect to the clock of the reference robot $s$.
However, each robot’s measurements are stamped using its own local clock before broadcast.
Let ${^{s}t}$ denote the time of the reference clock of robot $s$, and ${^{j}t}$ denote the time of the local clock of robot $j$.
The time-offset between $s$ and $j$ is defined as ${^{s}{\varGamma}_{j}} = {^{s}t} - {^{j}t}$.

As illustrated in \autoref{Fig: Time Synchronization}, using the measurement's timestamp ${^{j}t_m}$ instead of its true time ${^{s}t_m}$ to query on the B-spline and calculate residual can cause error.
To compensate for this, we introduce ${^{s}{\tau}_j}$ as an optimization variable for robot $j$.
Each measurement's residual is then calculated by querying the B-spline at the compensated time ${^{s}{t}_m} = {^{s}{\tau}_j} + {^{j}t_m}$.
By exploiting the analytical gradient of B-spline evaluation with respect to ${^{s}{t}_m}$, the offset ${^{s}{\tau}_j}$ can be jointly optimized.

For simplicity, in the following text, we use $t_c$ to denote the corrected timestamp, which is obtained by compensating the measurement ${\widehat{\mathbf{z}_*}}$'s original timestamp $t_m$ from robot $j$ using the estimated time-offset $^s\tau_j$.

\begin{figure}[t]
    \centering
    \includegraphics[width=.95\linewidth]{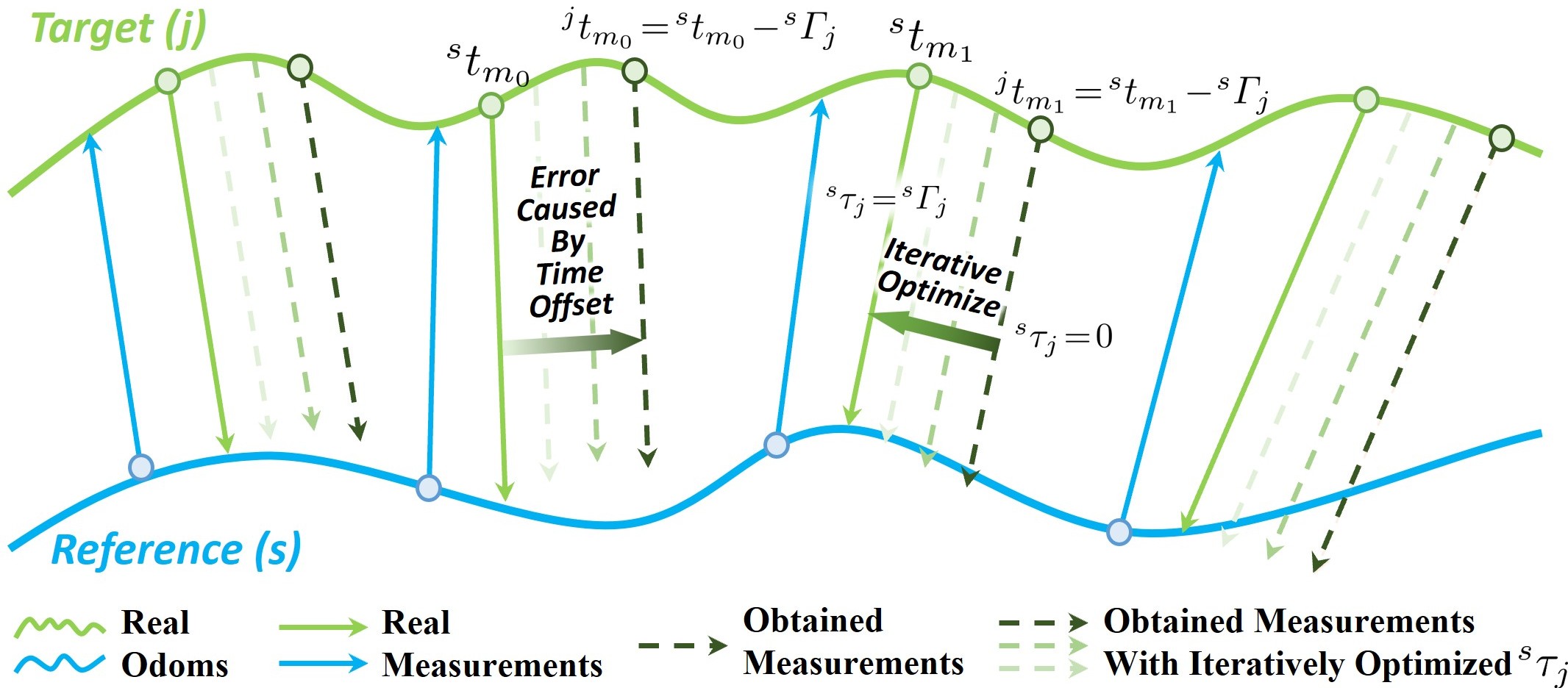}
    \caption{
        Illustration of time-offset between the local clock of the target robot and the reference clock, and the correction process of measurement time.
    }
    \label{Fig: Time Synchronization}
\end{figure}

\subsection{Inter-Robot Constraints}
\label{Sec: Inter-Robot Constraints}

The relative poses can be queried from the corresponding spline at timestamp $t_c$ as:
\begin{equation}
    \begin{aligned}
        ({^{s}\mathbf{R}^{t_c}_{j}}, {^{s}\mathbf{p}^{t_c}_{j}}) & = {^{s}\bm{\Phi}^{\kappa}_{j}}\big|_{t = t_c}.
    \end{aligned}
\end{equation}

Then, the inter-robot constraints are defined as follows.

\subsubsection{Bearing Factor}
\label{SubSec: Bearing Factor}

The camera detection model is as:
\begin{equation}
    \widehat{\mathbf{z}_b}_{\overrightarrow{jk}}^{t} = ({^{s}\mathbf{R}_{j}^{t}})^{\top} \frac{{^{s}\mathbf{p}_{k}^{t}}-{^{s}\mathbf{p}_{j}^{t}}}{\left\|{^{s}\mathbf{p}_{k}^{t}}-{^{s}\mathbf{p}_{j}^{t}}\right\|_2} + \mathbf{n}_{b},
    \label{Equ: Bearing Measurements}
\end{equation}
where $\mathbf{n}_{b} \sim \mathcal{N}(0, \Sigma_{b})$ is the noise of the bearing vector, $j$ and $k$ are indices of any two robots.
Note that the bearing measurement is a unit vector, which is the normalized vector of the relative position.
The residuals can be calculated as:
\begin{equation}
    \mathbf{r}_{b} ({\widehat{\mathbf{z}_b}_{\overrightarrow{jk}}^{t_m}}, {^{s}{\tau}^{\kappa}_{j}}, {^{s}\bm{\Phi}^{\kappa}_{j}}, {^{s}\bm{\Phi}^{\kappa}_{k}}) =
    {(^{s}\mathbf{R}^{t_c}_{j})}^{\top} \frac{{^{s}\mathbf{p}^{t_c}_{k}} - {^{s}\mathbf{p}^{t_c}_{j}}} {\left\|{^{s}\mathbf{p}^{t_c}_{k}}-{^{s}\mathbf{p}^{t_c}_{j}}\right\|_2} - {\widehat{\mathbf{z}_b}_{\overrightarrow{jk}}^{t_m}}.
    \label{Equ: Bearing Residuals}
\end{equation}

\subsubsection{Distance Factor}
\label{SubSec: Distance Factor}

The UWB measurement model is as:
\begin{equation}
    \widehat{\mathbf{z}_d}_{\overrightarrow{jk}}^{t} = \left\|{^{s}\mathbf{p}_{k}^{t}}-{^{s}\mathbf{p}_{j}^{t}}\right\|_2 + \mathbf{n}_{d},
    \label{Equ: Distance Measurements}
\end{equation}
where $\mathbf{n}_{d} \sim \mathcal{N}(0, \sigma_{d}^2)$ is the noise of the UWB, $j$ and $k$ are indices of any two robots.
The residuals can be calculated as:
\begin{equation}
    \mathbf{r}_{d} (\widehat{\mathbf{z}_d}_{\overrightarrow{jk}}^{t_m}, {^{s}{\tau}^{\kappa}_{j}}, {^{s}\bm{\Phi}^{\kappa}_{j}}, {^{s}\bm{\Phi}^{\kappa}_{k}})
    = \left\|{^{s}\mathbf{p}_{k}^{t_c}}-{^{s}\mathbf{p}_{j}^{t_c}}\right\|_2-\widehat{\mathbf{z}_d}_{\overrightarrow{jk}}^{t_m}.
    \label{Equ: Distance Residuals}
\end{equation}

\subsection{Motion Constraints}
\label{Sec: Motion Constraints}

The relative velocities and accelerations can be queried from the corresponding spline at timestamp $t_c$ as:
\vspace{-4pt}
\begin{equation}
    \begin{aligned}
        ({^{j}\bm{\omega}^{t_c}_{\overrightarrow{s j}}}, {^{s}\mathbf{v}^{t_c}_{\overrightarrow{s j}}}) & = {^{s}\bm{\dot{\Phi}}^{\kappa}_{j}}\big|_{t = t_c},
        \\
        ({^{j}\bm{\beta}^{t_c}_{\overrightarrow{s j}}}, {^{s}\mathbf{a}^{t_c}_{\overrightarrow{s j}}})  & = {^{s}\bm{\ddot{\Phi}}^{\kappa}_{j}}\big|_{t = t_c}.
    \end{aligned}
    \vspace{-3pt}
\end{equation}
It is worth noting that, for $\textit{SO}(3)$ B-spline, the derivative is defined in the body frame.
The relative motion constraints are then defined as follows.

\subsubsection{Relative Kinematics}
\label{SubSec: Relative Kinematics}

We derive the relative motion model in the non-inertial frame $s$.
The relative motion model of robot $j$ in the non-inertial frame $s$ is as follows:
{
\begin{align}
    {^{s}\mathbf{a}_{\overrightarrow{s j}}}    & = \underbrace{-[{^{s}\bm{\beta}_{s}}]_{\times} {^{s}\mathbf{p}_{j}} - 2[{^{s}\bm{\omega}_{s}}]_{\times} {^{s}\mathbf{v}_{\overrightarrow{s j}}} - [{^{s}\bm{\omega}_{s}}]^2_{\times} {^{s}\mathbf{p}_{j}}}_{\text{non-inertial acceleration due to frame rotation}}
    \notag
    \\
                                               & + \underbrace{{^{s}\mathbf{R}_{j}} {^{j}\mathbf{a}_{j}} - {^{s}\mathbf{a}_{s}}}_{\text{linear acceleration}},
    \label{Equ: Relative a}
    \\
    {^{j}{\bm{\omega}}_{\overrightarrow{s j}}} & = {^{j}\bm{\omega}_{j}} - {^{s}\mathbf{R}_{j}^{\top}} {^{s}\bm{\omega}_{s}},
    \label{Equ: Relative w}
\end{align}
}
where the gravity components of the measured accelerations are compensated in the linear acceleration term of~\eqref{Equ: Relative a}.
Additionally, ${^{s}\bm{\beta}_{s}}$ denotes the angular acceleration of reference robot $s$, which is typically difficult to obtain directly from IMU measurements.
As a result, some existing works assume ${^{s}\bm{\beta}_{s}}=\mathbf{0}$~\cite{zhang2023coni}, which inevitably introduces modeling errors.
In addition, IMU measurements from robots $j$ and $s$ are generally asynchronous, introducing a time misalignment in \eqref{Equ: Relative a} and \eqref{Equ: Relative w}.
To address both issues, we represent ${^{s}\bm{\omega}_{s}}$ and ${^{s}\mathbf{a}_{s}}$ using two $\mathbb{R}^3$ B-splines, which allows ${^{s}\bm{\beta}_{s}}$ to be obtained analytically via differentiation, while simultaneously enabling queries of the IMU state from robot $s$ at any time instant.

The IMU of reference robot $s$ at timestamp $t_m$ can be queried as:
\begin{equation}
    \begin{aligned}
        ({^{s}\bm{\omega}^{t_m}_{s}}, {^{s}\mathbf{a}^{t_m}_{s}})      & = {\bm{\Psi}^{\kappa}}\big|_{t = t_m},
        \\
        ({^{s}\bm{\beta}^{t_m}_{s}}, {^{s}\mathbf{\dot{a}}^{t_m}_{s}}) & = {\bm{\dot{\Psi}}^{\kappa}}\big|_{t = t_m}.
    \end{aligned}
\end{equation}

\subsubsection{IMU Factor}
\label{SubSec: IMU Factor}

Due to the nonlinearity of (\ref{Equ: Relative w}), the covariance of the relative IMU residuals depends on the optimization variables, which may lead to unstable convergence.
To ensure correct residual weighting under the Mahalanobis distance and MLE formulation, we introduce the reference robot's two IMU B-splines (${^{s}\bm{\omega}_{s}}$ and ${^{s}\mathbf{a}_{s}}$) as auxiliary variables in the optimization, avoiding over-linearization.

The reference robot's IMU residual is defined as:
\begin{equation}
    \begin{aligned}
        \mathbf{r}_\textit{sIMU}\!({^{s}\hat{\mathbf{a}}^{t_m}_{s}}, {^{s}\hat{\bm{\omega}}^{t_m}_{s}}, {\bm{\Psi}^{\kappa}}, {^{s}\mathbf{b}^{\kappa}_{\mathrm{imu}}})\! & =
        \begin{bmatrix}
            {^{s}\mathbf{a}^{t_m}_{s}} - \left({^{s}\hat{\mathbf{a}}^{t_m}_{s}} - {^{s}\mathbf{b}^{\kappa}_{\mathbf{a}}}\right)
            \\
            {^{s}\bm{\omega}^{t_m}_{s}}- \left({^{s}\hat{\bm{\omega}}^{t_m}_{s}} - {^{s}\mathbf{b}^{\kappa}_{\bm{\omega}}}\right)
        \end{bmatrix}.
    \end{aligned}
    \notag
\end{equation}
Here,
\(
{}^{q}\mathbf b_{\mathrm{imu}}^{\kappa}
\!\triangleq\!
[
\left({}^{q}\mathbf b_{a}^{\kappa}\right)^{\top}\!,
\left({}^{q}\mathbf b_{\omega}^{\kappa}\right)^{\top}
]^{\top}
\)
is the IMU bias of robot \(q\).

\begin{figure}[t]
    
    \centering
    \includegraphics[width=\linewidth]{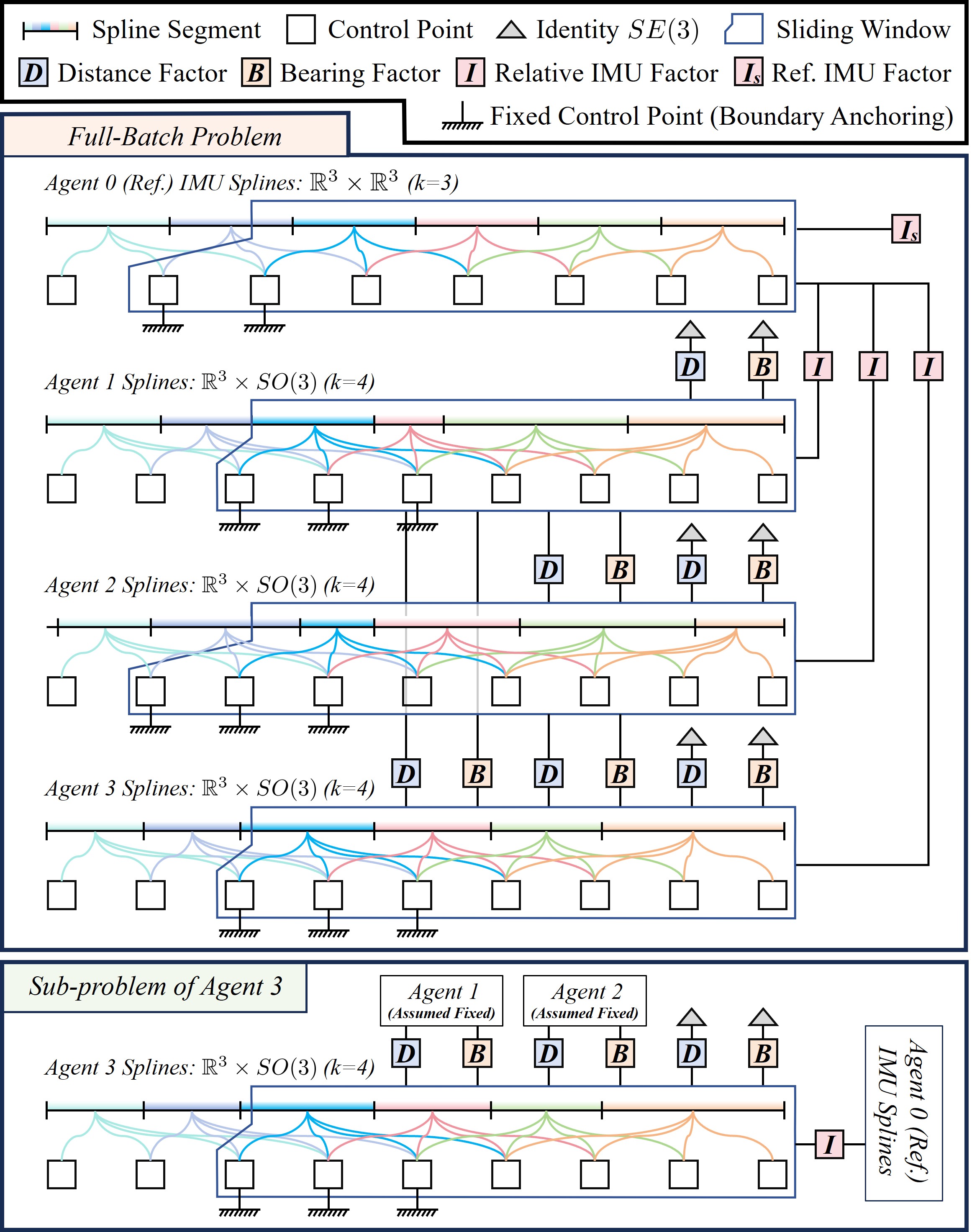}
    \caption{
        Factor-graph-like representation of the full-batch problem for a four-robot example, along with one selected subproblem.
        Each trajectory is assembled from individual segments, where each segment depends on a set of control points.
        A control point remains in the optimization problem as long as any portion of its associated segment lies within the current window.
    }
    \label{Fig: Factor Graph}
\end{figure}

The relative IMU residual is defined as:
{
\begin{align}
    \qquad \mathbf{r}_{i} & ({^{j}\hat{\mathbf{a}}_{j}^{t_m}}, {^{j}\hat{\bm{\omega}}_{j}^{t_m}}, {^{s}{\tau}^{\kappa}_{j}}, {^{s}\bm{\Phi}^{\kappa}_{j}}, {\bm{\Psi}^{\kappa}}, {^{j}\mathbf{b}^{\kappa}_{\mathrm{imu}}}) =
    \begin{bmatrix}
        \mathbf{r}_{ia} \\[-1pt]
        \mathbf{r}_{i\omega}
    \end{bmatrix},
    \\
    \mathbf{r}_{ia}       & = - \left[ {^{s}\bm{\beta}^{t_c}_{s}} \right]_{\times} {^{s}\mathbf{p}_{j}^{t_c}} - 2 \left[ {^{s}\bm{\omega}^{t_c}_{s}} \right]_{\times} {^{s}\mathbf{v}^{t_c}_{\overrightarrow{s j}}} - \left[ {^{s}\bm{\omega}^{t_c}_{s}} \right]_{\times}^2 {^{s}\mathbf{p}_{j}^{t_c}}
    \notag
    \\
                          & \quad + {^{s}}\mathbf{R}_{j}^{t_c} \left({^{j}}\hat{\mathbf{a}}^{t_m}_{j} - {^{j}\mathbf{b}^{\kappa}_{\mathbf{a}}}\right) - {^{s}}\mathbf{a}_{s}^{t_c}
    - {^{s}}\mathbf{a}_{j}^{t_c},
    \\
    \mathbf{r}_{i\omega}  & = {^{j}\bm{\omega}_{\overrightarrow{s j}}^{t_c}} - \left( {^{j}\hat{\bm{\omega}}_{j}^{t_m}} - {^{j}\mathbf{b}^{\kappa}_{\bm{\omega}}} - {{^{s}\mathbf{R}_{j}^{t_c}}}^{\top} {^{s}}\bm{\omega}_{s}^{t_c} \right).
\end{align}
}
Bias residual based on the random walk model is defined as:
\begin{equation}
    \mathbf{r}_{ib}({^j}\mathbf{b}^{\kappa}_{\mathbf{a}}, {^j}\mathbf{b}^{\kappa}_{\bm{\omega}}) =
    \begin{bmatrix}
        {^j}\mathbf{b}^{\kappa}_{\mathbf{a}} - {^j}\mathbf{b}^{\kappa-1}_{\mathbf{a}} \\
        {^j}\mathbf{b}^{\kappa}_{\bm{\omega}} - {^j}\mathbf{b}^{\kappa-1}_{\bm{\omega}}
    \end{bmatrix}.
\end{equation}

The biases are assumed constant within each sliding window and follow a random walk model across consecutive windows.

\subsection{Sliding-Window Boundary Anchoring}
\label{Sec: Marginalization}

In this factor graph structure, Schur-complement marginalization would introduce prior constraints on the retained boundary states.
In \CT{}, we instead directly fix the first $k-1$ control points of each robot's B-spline to implement boundary anchoring, as shown in \autoref{Fig: Factor Graph}.
Unlike probabilistic marginalization, this approximation does not preserve the uncertainty or cross-correlations of the removed states.

\subsection{Full-Batch Optimization Problem Formulation}
\label{Sec: Full-Batch Optimization Problem Formulation}

We formulate the continuous-time relative-inertial odometry problem in a factor graph form.
The graph consists of distance factors $\mathbf{r}_d$, bearing factors $\mathbf{r}_b$, relative IMU factors $\mathbf{r}_i$, bias factors $r_{ib}$, and auxiliary reference IMU factors $\mathbf{r}_\textit{sIMU}$, as shown in \autoref{Fig: Factor Graph}.
To limit the number of optimization variables, we restrict the update of $\tau_j$ to the interval $[-\Delta t_j,\, \Delta t_j]$ per iteration, where $\Delta t_j$ is the minimum knot interval of robot $j$'s B-spline.
Although this bound enlarges the control-point window from \([t_i,\, t_{i+k}]\) to \([t_{i-1},\, t_{i+k+1}]\), it prevents too much shift in query time that would change the associated control points for each factor.
Under this bounded update, $\tau_j$ converges to the true value after several successive optimizations.
Finally, with the measurements during $[t_{\kappa}-\Delta, t_{\kappa}]$, we formulate the following nonlinear least-squares problem:
{
\begin{align}
    \min_{\mathcal{X}^{\kappa}}
    \Big\{
      & \sum_{j \in \mathcal{N}} \sum_{\mathcal{B}^{t_m}_j \in \mathcal{B}^{\kappa}_j}\sum_{k \in \mathcal{B}^{t_m}_j}
    \left\| \mathbf{r}_{b} ({\widehat{\mathbf{z}_b}_{\overrightarrow{jk}}^{t_m}}, {^{s}{\tau}^{\kappa}_{j}}, {^{s}\bm{\Phi}^{\kappa}_{j}}, {^{s}\bm{\Phi}^{\kappa}_{k}}) \right\|_{\bm{\Sigma}_b}^2
    \notag
    \\
    + & \sum_{j \in \mathcal{N}} \sum_{\mathcal{D}^{t_m}_j \in \mathcal{D}^{\kappa}_j}\sum_{k \in \mathcal{D}^{t_m}_j}
    \left\| \mathbf{r}_{d} ({\widehat{\mathbf{z}_d}_{\overrightarrow{jk}}^{t_m}}, {^{s}{\tau}^{\kappa}_{j}}, {^{s}\bm{\Phi}^{\kappa}_{j}}, {^{s}\bm{\Phi}^{\kappa}_{k}}) \right\|_{\bm{\Sigma}_d}^2
    \notag
    \\
    + & \sum_{\mathcal{I}^{t_m}_s \in \mathcal{I}^{\kappa}_s}
    \left\| \mathbf{r}_\textit{sIMU} ({^{s}\hat{\mathbf{a}}_{s}^{t_m}}, {^{s}\hat{\bm{\omega}}_{s}^{t_m}}, {\bm{\Psi}^{\kappa}}, {^{s}\mathbf{b}^{\kappa}_{\mathrm{imu}}}) \right\|_{\bm{\Sigma}_i}^2
    \label{Equ: Optimization Problem}
    \\
    + & \sum_{j \in \mathcal{N}} \sum_{\mathcal{I}^{t_m}_j \in \mathcal{I}^{\kappa}_j}
    \left\| \mathbf{r}_{i} ({^{j}\hat{\mathbf{a}}_{j}^{t_m}}, {^{j}\hat{\bm{\omega}}_{j}^{t_m}}, {^{s}{\tau}^{\kappa}_{j}}, {^{s}\bm{\Phi}^{\kappa}_{j}}, {\bm{\Psi}^{\kappa}}, {^{j}\mathbf{b}^{\kappa}_{\mathrm{imu}}}) \right\|_{\bm{\Sigma}_i}^2
    \notag
    \\
    + & \sum_{j \in \mathcal{V}}
    \left\| \mathbf{r}_{ib} ({^j}\mathbf{b}^{\kappa}_{\mathbf{a}}, {^j}\mathbf{b}^{\kappa}_{\bm{\omega}}) \right\|_{\bm{\Sigma}_{ib}}^2
    \Big\}.
    \notag
\end{align}
}
We use Levenberg-Marquardt (LM) from Ceres Solver \cite{Agarwal_Ceres_Solver_2022} and employ analytical derivatives to solve the problem.

\subsection{Incremental Asynchronous Block Coordinate Descent}
\label{Sec: Incremental Asynchronous Block Coordinate Descent}

The full optimization problem (\ref{Equ: Optimization Problem}) is large-scale and highly nonlinear, especially when the number of robots $n$ is large.
To improve computational efficiency, we simplify the problem and decompose it into smaller components.
Specifically, we break the full problem into $n{-}1$ subproblems, where each subproblem involves only the trajectory of a single robot together with its associated factors, as illustrated by the red box in \autoref{Fig: Factor Graph}.
The reference robot's IMU splines and bias estimates, together with the time-offsets, are prefitted and shared across all subproblems.
Formally, each subproblem is formulated as:
{
\begin{equation}
    \begin{aligned}
        \min_{{\mathcal{X}^{\kappa}_{j}}}
        \Big\{
          & \sum_{\mathcal{B}^{t_m}_j \in \mathcal{B}^{\kappa}_j}\sum_{k \in \mathcal{B}^{t_m}_j}
        \left\| \mathbf{r}_{b}  (\widehat{\mathbf{z}_b}^{t_m}_{\overrightarrow{jk}}, {^{s}{\tau}^{\kappa}_{j}}, {{^s}\bm{\Phi}^{\kappa}_{j}}, {{^s}\bm{\Phi}^{\kappa}_{k}}) \right\|_{\bm{\Sigma}_b}^2
        \\
        + & \sum_{\mathcal{D}^{t_m}_j \in \mathcal{D}^{\kappa}_j}\sum_{k \in \mathcal{D}^{t_m}_j}
        \left\| \mathbf{r}_{d} (\widehat{\mathbf{z}_d}^{t_m}_{\overrightarrow{jk}}, {^{s}{\tau}^{\kappa}_{j}}, {{^s}\bm{\Phi}^{\kappa}_{j}}, {{^s}\bm{\Phi}^{\kappa}_{k}}) \right\|_{\bm{\Sigma}_d}^2
        \\
        + & \sum_{\mathcal{I}^{t_m}_j \in \mathcal{I}^{\kappa}_j}
        \left\| \mathbf{r}_{i} ({^{j}\hat{\mathbf{a}}_{j}^{t_m}}, {^{j}\hat{\bm{\omega}}_{j}^{t_m}}, {^{s}{\tau}^{\kappa}_{j}},  {{^s}\bm{\Phi}^{\kappa}_{j}}, {\bm{\Psi}^{\kappa}}, {^{j}\mathbf{b}^{\kappa}_{\mathrm{imu}}}) \right\|_{\bm{\Sigma}_i}^2
        \\
        + & \left\| \mathbf{r}_{ib} ({^j}\mathbf{b}^{\kappa}_{\mathbf{a}}, {^j}\mathbf{b}^{\kappa}_{\bm{\omega}}) \right\|_{\bm{\Sigma}_{ib}}^2
        \Big\},
        \label{Equ: Optimization Problem BCD}
    \end{aligned}
\end{equation}
}
where $j \in \mathcal{V} \setminus \{s\}$ denotes any non-reference robot.


We solve the subproblems using an IA-BCD algorithm.
IA-BCD has a similar implementation to regular BCD.
The difference is that each subproblem is solved in parallel and asynchronously, without waiting for others to finish and share their updates, but continues to update with the most recent variables available.
This greatly improves computational efficiency but also introduces a potential mismatch between the variables used during local optimization and the latest global state.
Moreover, the extension/shrinkage of C-NUBS and the incremental growth of residuals and variables change the objective over time, which further affects convergence.
We provide convergence analyses in \hyperref[App: Convergence Analysis for Asynchronous BCD]{Appendix}, showing that despite these inconsistencies, IA-BCD converges to a first-order stationary point of the full problem.
In practice, we adopt Ceres’ trust-region LM strategy to achieve a balance between numerical stability and convergence speed.

\begin{figure*}[t]
    
    \centering
    \includegraphics[width=\fullwidth]{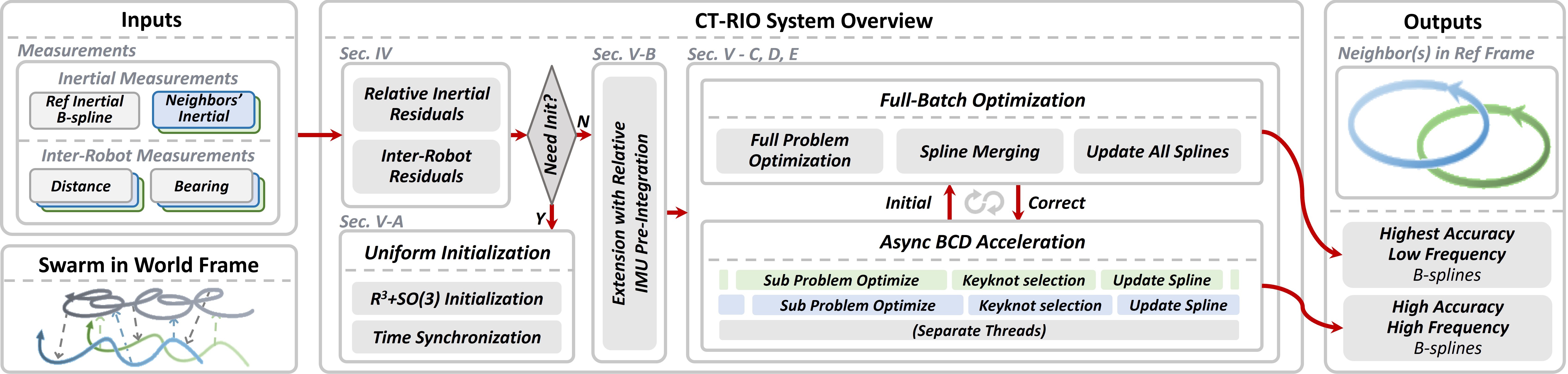}
    \caption{
        \CT{} system overview.
        At the beginning, trajectories are uniformly extended and jointly initialized via full-batch optimization.
        They are then extended at the IMU rate via relative preintegration with adaptive keyknot selection.
        Finally, a hybrid optimization scheme is adopted.
        Asynchronous IA-BCD threads provide high-frequency updates, while periodic full-batch optimization ensures global consistency.
    }
    \label{Fig: System Overview}
\end{figure*}

\section{\CT{} Implementation with C-NUBS}
\label{Sec: CT Implementation with C-NUBS}

Tailored to the heterogeneous motion characteristic of multi-robot systems, we propose a C-NUBS-based continuous-time multi-robot relative-inertial odometry framework, \CT{}, to solve the problem introduced in \autoref{Sec: B-Spline-Based Continuous-Time Multi-Robot Relative State Estimation}.
We use $k=4$ C-NUBS to represent robots' trajectories and $k=3$ unclamped uniform B-splines to represent the reference robot's IMU.
The overall pipeline is illustrated in \autoref{Fig: System Overview}.

At the beginning of the operation or when a new robot joins the network, the uninitialized trajectory is uniformly extended.
Then the spline and the time-offset are initialized via full-batch optimization.
The full-batch optimization will periodically update the time-offset, as detailed in \autoref{Sec: System Initialization}.

Once initialization is complete, the related spline is extended at each incoming IMU measurement, with the new control point initialized using relative IMU preintegration, following \autoref{Sec: Extension with IMU Preintegration Prediction}.
To reduce redundancy while preserving informative knots, we adopt the \textit{knot-keyknot} strategy to retain only keyknots within the sliding window, based on relative motion intensity as described in \autoref{Sec: Relative-Motion-Intensity-Aware Keyknot Strategy}.

To achieve high-frequency and low-latency updates, a two-layer optimization scheme is adopted.
A group of IA-BCD threads continuously optimizes the trajectories by solving the subproblems (\ref{Equ: Optimization Problem BCD}).
Each IA-BCD thread is executed once its corresponding trajectory is extended, providing instant updates at the IMU rate.
Meanwhile, a full-batch optimization thread consistently refines the splines to ensure global consistency.

\subsection{System Initialization}
\label{Sec: System Initialization}

At system startup or when a new robot joins the network, the system performs a full-batch optimization to initialize both the spline trajectory and the time-offset.

When a new robot is detected, an initial estimate of the time-offset is obtained via the Network Time Protocol (NTP), with an accuracy of tens of milliseconds \cite{mani2016mntp}.
This estimate is incorporated as a prior in the optimization.
Then the new robot's trajectory is uniformly extended (10Hz in this paper), and a prior pose sequence from a closed-form single-frame estimator \cite{li2025crepes} is used to provide an informed initialization.


Then, the full-batch optimization (\ref{Equ: Optimization Problem}) is solved iteratively until the estimated trajectories and time-offsets jointly converge.
Since time-offsets drift over long-term operation due to discrepancies in the robot's crystal oscillator, this optimization is periodically re-invoked in a separate thread to continuously refine and update the time-offset.
For efficiency, subsequent operations no longer optimize the time-offsets.

\subsection{Extension with IMU Preintegration Prediction}
\label{Sec: Extension with IMU Preintegration Prediction}

In this work, we extend each robot's spline at each of its incoming IMU measurements to ensure IMU-rate output.
By leveraging the endpoint interpolation property of clamped B-splines, the target extension control point can be initialized using IMU.
We construct single robot's IMU preintegration on the basis of \cite{forster2016manifold}.
Following the relative kinematics formulation in \cite{li2025crepes}, the target new control point of robot $j$'s spline can be computed using the IMU preintegrations of robots $s$ and $j$ from the last keyknot time $t_0$ to the new knot time $t_1$:
\begin{equation}
    \begin{aligned}
        {^{s}\mathbf{p}_{j}^{t_1}}  = & ({\Delta\mathbf{R}^{t_0 \to t_1}_s})^{\top}
        \\
                                      & ({^{s}\mathbf{p}_{j}^{t_0}} + {^{s}\bm{\mu}_{\overrightarrow{sj}}^{t_0}}{\Delta t} + {^{s}\mathbf{R}_{j}^{t_0}}{\Delta\mathbf{p}^{t_0 \to t_1}_j} - {\Delta\mathbf{p}^{t_0 \to t_1}_s}),
        \\
        {^{s}\mathbf{R}_{j}^{t_1}}  = & ({\Delta\mathbf{R}^{t_0 \to t_1}_s})^{\top} ({^s}\mathbf{R}_j^{t_0} {\Delta\mathbf{R}^{t_0 \to t_1}_j}),
    \end{aligned}
    \label{Equ: Relative Kinematics}
\end{equation}
where ${^{s}\bm{\mu}_{\overrightarrow{sj}}^{t_0}} \triangleq {^{s}\mathbf{R}_{w}^{t_0}} {^{w}\bm{\mu}_{\overrightarrow{sj}}^{t_0}}={^{s}\mathbf{v}_{\overrightarrow{s j}}^{t_0}} + {^s}\bm{w}_{\overrightarrow{s j}} \times {^s}\mathbf{p}_{j}^{t_0}$ is the relative translation velocity between robot $s$ and robot $j$ that does not contain the relative velocity generated by the rotation of robot $s$.
The states ${^{s}\mathbf{p}_{j}^{t_0}}$, ${^{s}\mathbf{v}_{\overrightarrow{s j}}^{t_0}}$, and ${^{s}\mathbf{R}_{j}^{t_0}}$ are evaluated from robot $j$'s spline at the previous keyknot time $t_0$.

Then we use the closed-form extension in \autoref{Sec: Closed-Form Extension of C-NUBS} to extend the spline to this new control point and knot.
This provides a reliable starting point for optimization.

\subsection{Relative-Motion-Intensity-Aware Keyknot Strategy}
\label{Sec: Relative-Motion-Intensity-Aware Keyknot Strategy}

Although the spline is extended to a new knot at every IMU timestamp to support high-rate outputs, only keyknots are retained within the sliding window.
Each IA-BCD optimization includes only these keyknots and the new knot.
After IA-BCD optimization, we assess whether to promote the new knot as a keyknot based on relative motion.
Specifically, we compare the relative states at the new knot time $t_1$ with those at the last keyknot time $t_0$, and apply the following criteria:
\begin{itemize}
    \item Relative rotation: if  $\|\Log({^{s}\mathbf{R}_{j}^{t_0}}^{\top} {^{s}}\mathbf{R}_{j}^{t_1})\| > \epsilon_R$, retain the new knot as a keyknot.
    \item Relative translation: if $\|{^{s}\mathbf{p}_{j}^{t_1}} - {^{s}\mathbf{p}_{j}^{t_0}}\|_2 > \epsilon_p$, retain the new knot as a keyknot.
    \item Lower bound: if the time since the last keyknot exceeds $t_\text{max}=200ms$, retain the new knot as a keyknot.
\end{itemize}
Here, $\epsilon_R$ and $\epsilon_p$ are thresholds for relative rotation and translation.
We set them to $\epsilon_p = 0.08\,\mathrm{m}$ and $\epsilon_R = 2.5^\circ$ as reference values used in all experiments.
If the newly added knot is selected as a keyknot, the IMU preintegration is reset to start from this keyknot, and the next knot can be initialized via IMU preintegration from this keyknot efficiently.
This process follows the ``extend-shift-retain'' cycle in \autoref{Sec: Knot-Keyknot Strategy for C-NUBS}.
\subsection{Virtual Extension for Optimization}
\label{Sec: Virtual Extension for Optimization}

Since each trajectory is extended asynchronously, some trajectories may lag behind the current optimization timestamp while already appearing in other robots' factors.
For factor construction, lagging trajectories are \textit{virtually} extended to the latest timestamp with control points set to their most recent IMU-predicted poses, as shown in \autoref{Fig: Virtual}.
After optimization, the \textit{virtual} spline segments are always shrunk back to the latest keyknot using the closed-form shrinkage algorithm. 

\begin{figure}[ht]
    \centering
    \includegraphics[width=\halfwidth]{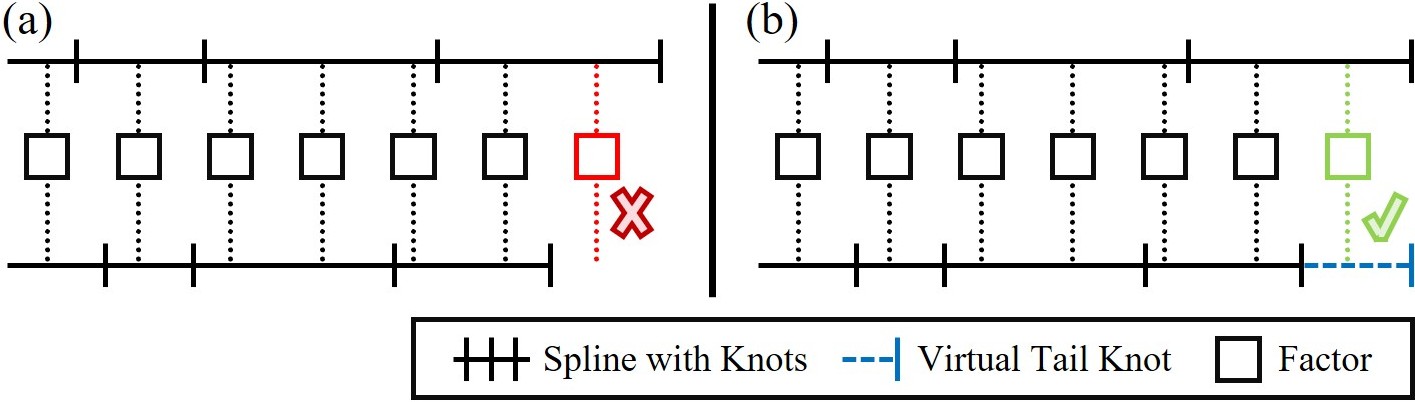}
    \caption{
        (a) Without virtual extension, some factors cannot be constructed due to the lack of the newest spline segment.
        (b) With virtual extension, all splines are aligned to the same time window, and all factors can be constructed.
    }
    \label{Fig: Virtual}
\end{figure}

\vspace{-8pt}
\subsection{Hybrid Optimization Strategy}
\label{Sec: Hybrid Optimization Strategy}

While the \textit{knot-keyknot} strategy enables outputs at the sensor rate at the constructive level, the actual update frequency remains bounded by the optimization speed.
B-spline-based continuous-time estimation suffers from low efficiency due to numerous observations and tightly coupled control points \cite{cioffi2022continuous}.
This becomes more challenging in \CT{} because the inter-robot measurements further couple multiple trajectories.

To balance accuracy and efficiency, we adopt a hybrid strategy:
Once a spline ${^s}\Phi_{j}$ is extended, its corresponding subproblem \eqref{Equ: Optimization Problem BCD} is immediately optimized.
The subproblems \eqref{Equ: Optimization Problem BCD} are optimized asynchronously in parallel to allow immediate updates with minimal latency.
In this way, IA-BCD locally refines newly introduced control points prior to global fusion, providing better initialization than relying solely on IMU preintegration.
At the same time, a full-batch optimization \eqref{Equ: Optimization Problem} is consistently triggered in a separate thread as frequently as possible to enforce global consistency through variable synchronization.
This corrects the incomplete convergence of IA-BCD, ensuring long-term stability.

After full-batch optimization, the latest trajectory may have been extended several times.
We therefore merge the optimized spline with the newly added segments by fitting a fused spline to pose samples from both trajectories, while additionally matching their local linear and angular velocities to preserve smoothness and local shape.

\vspace{-5pt}
\section{Benchmark Analysis}
\label{Sec: Simulation and Public Benchmark Analysis}

Prior to real-world experiments, we conduct a thorough analysis of CT-RIO on synthetic datasets and a patched public dataset.
We evaluate CT-BCD~\eqref{Equ: Optimization Problem BCD} and CT-Full~\eqref{Equ: Optimization Problem} against a DT optimization baseline, namely the MFTO estimator of CREPES-X~\cite{li2025crepes}, in terms of scalability, accuracy, and time-offset estimation.
We further evaluate these methods on the public MILUV dataset~\cite{shalaby2025miluv}, augmented with simulated inter-robot bearing measurements.
We report both the latest optimized output and the complete continuous trajectory.
Runtime includes both problem construction and solving.

All experiments selected Device 0 as the reference device, estimating the relative poses of other devices with respect to it.
Following the evaluation protocol in \cite{zhang2018tutorial}, we compute the Absolute Trajectory Error (ATE) of the estimated relative poses in the local frame of Device 0, adopting the same formulation as \cite{li2025crepes}.
The computation platform is Intel NUC equipped with Intel i7-1260P CPU and 8\,GB of RAM.

\subsection{Synthetic Benchmark Configuration}
\label{Sec: Simulation Setup}

We generate random $6$-order $SE(3)$ B-splines to simulate the trajectories.
The knot interval of the B-spline is set to $1$ second.
The position control points are sampled uniformly within a  $10\!\times\!10\!\times\!10 m^3$ space, while the rotation control points are generated following \cite{yershova2010generating}.
Mutual measurements are generated by sampling positions and rotations along the trajectory.
IMU measurements are derived from the derivatives of the B-spline.
Gaussian noise is manually added to the measurements according to \autoref{Tab: Simulation Configuration}, and the simulated measurement frequencies match the nominal operating rates of the CREPES-X~\cite{li2025crepes} devices used in the real-world experiments.
The noise levels are chosen to provide controlled and reproducible conditions rather than to fully reproduce scenario-dependent hardware noise, and measurements are generated periodically using their original sampling timestamps.
Robot clocks are assumed synchronized except in the dedicated time-offset experiments in~\autoref{Sec: Time-Offset Estimation Analysis}; timestamp jitter, transmission delay, and scenario-dependent measurement unavailability (e.g., occlusion or missed detections) are not modeled.
Unless otherwise specified, the keyknot and keyframe frequencies for CT and DT are set to $10\,\mathrm{Hz}$ for controlled comparison.

\begin{table}[ht]
    \centering
    \caption{Simulation Configuration.}
    \vspace{-4pt}
    \label{Tab: Simulation Configuration}
    \resizebox{0.75\linewidth}{!}{
        \begin{tabular}{lcccc}
            \toprule
            \multirow{2}{*}{Metric} & \multirow{2}{*}{Bearing} & \multirow{2}{*}{Distance} & \multicolumn{2}{c}{IMU}                          \\
                                    &                          &                           & Acc.                    & Gyro.                  \\
            \midrule
            Frequency               & $50\,\mathrm{Hz}$        & $100\,\mathrm{Hz}$        & $100\,\mathrm{Hz}$      & $100\,\mathrm{Hz}$     \\
            Noise {[}$\sigma${]}    & $2.00^\circ$             & $0.10\,\mathrm{m}$        & $0.10\,\mathrm{m/s^2}$  & $0.01\,\mathrm{rad/s}$ \\
            \bottomrule
        \end{tabular}
    }
\end{table}

\vspace{-5pt}
\subsection{Scalability and Runtime Analysis}
\label{Sec: Scalability and Runtime Analysis}

\subsubsection{Scalability under Fixed Knot Density}
To evaluate the scalability of the proposed CT optimization algorithms, we analyze their runtime behavior under varying numbers of robots and time window sizes, and compare them against the DT baseline.
\autoref{Fig: Scalability Results} summarizes the results.
\vspace{-2pt}
\begin{figure}[htb]
    \centering
    \includegraphics[width=\halfwidth]{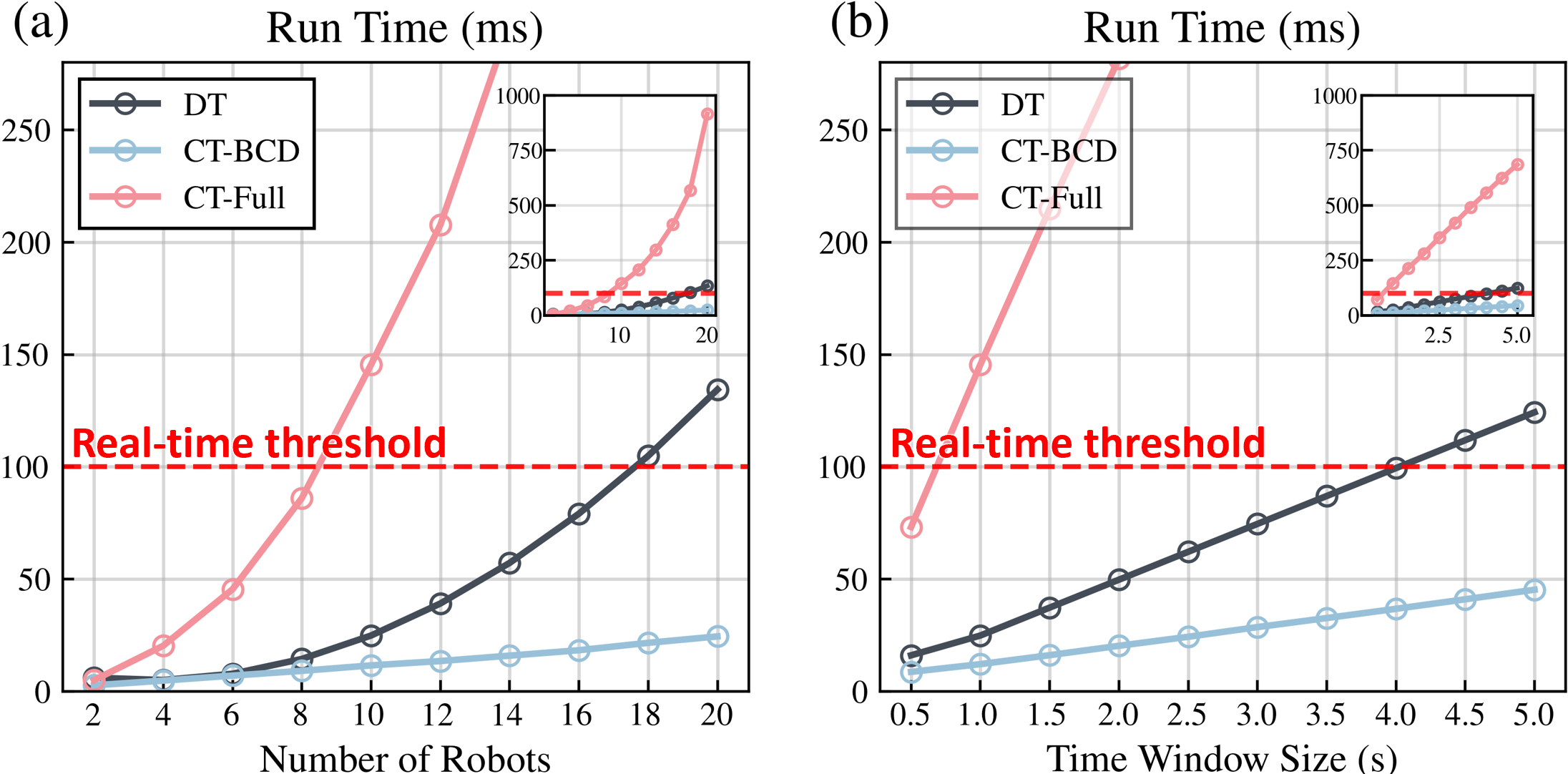}
    \caption{
        Scalability analysis of DT, CT-BCD, and CT-Full.
        Run time versus (a) the number of robots with a fixed $1\,\mathrm{s}$ time window, and (b) the time window size with 10 robots.
    }
    \label{Fig: Scalability Results}
\end{figure}


\autoref{Fig: Scalability Results} (a) shows that DT exhibits exponential runtime growth with the increase of the number of devices, surpassing the real-time threshold ($100\,\mathrm{ms}$) at around 18 devices.
CT-Full also grows exponentially, exceeding real-time constraints at about 10 devices and rising sharply to approach $1\,\mathrm{s}$ once the device count exceeds 20.
In contrast, CT-BCD scales more gently and consistently remains below $25\,\mathrm{ms}$ even with 20 devices.
With respect to time window size, all three methods exhibit linear runtime growth, as illustrated in \autoref{Fig: Scalability Results}~(b).
Despite linear complexity, CT-Full’s large baseline runtime prevents real-time operation even with a $1\,\mathrm{s}$ window.

We further analyze the composition of run time using 10 devices, as shown in \autoref{Tab: Exp: Total Processing Time}.
For a fair comparison with DT, we present the time taken by CT methods to reconstruct the optimization problem in Ceres \cite{Agarwal_Ceres_Solver_2022} from scratch each time (column ``Standard'' of \autoref{Tab: Exp: Total Processing Time}), consistent with DT and other Ceres-based methods.
DT discards part of the observations and requires less time for problem construction, and also its weaker variable coupling leads to a shorter solving time compared with CT-Full.
Since all measurements are included in the optimization, CT-Full exhibits significantly higher construction and solving times than the other methods.
Meanwhile, CT-BCD achieves a lower construction time than CT-Full and the lowest solving time due to its block-wise strategy.
\begin{table}[ht]
    \centering
    \caption{Total Processing Time in Milliseconds ($\mathrm{ms}$)}
    \vspace{-4pt}
    \label{Tab: Exp: Total Processing Time}
    \resizebox{\halfwidth}{!}{
        \begin{tabular}{|c|c|c|c|c|c|c|}
            \hline \rule{0pt}{8pt}
            \multirow{3}{*}{Window}     &
            \multicolumn{2}{c|}{DT}     &
            \multicolumn{2}{c|}{CT-BCD} &
            \multicolumn{2}{c|}{CT-Full}
            \\
            \cline{2-7} \rule{0pt}{8pt}
                                        &
            \multirow{2}{*}{Build}      &
            \multirow{2}{*}{Solve}      &
            {Build}                     &
            \multirow{2}{*}{Solve}      &
            {Build}                     &
            \multirow{2}{*}{Solve}
            \\
            \cline{4-4} \cline{6-6}
                                        &      &       & {Stand. / Incre.} &      & {Stand. / Incre.} &
            \\
            \hline \rule{0pt}{8pt}
            1s                          & 3.0  & 20.2  & {6.8 / 3.2}       & 8.7  & {18.3 / 11.1}     & 134.4 \\
            \hline \rule{0pt}{8pt}
            3s                          & 8.7  & 57.3  & {18.9 / 3.1}      & 25.2 & {53.4 / 11.1}     & 409.8 \\
            \hline \rule{0pt}{8pt}
            5s                          & 14.4 & 102.8 & {31.7 / 3.1}      & 41.9 & {91.2 / 11.1}     & 674.5 \\
            \hline
        \end{tabular}
    }
\end{table}

However, reconstructing the optimization problem each time is computationally expensive and can even approach the solving time in CT-BCD.
Therefore, we implement an incremental problem construction strategy for CT methods and report its effect in \autoref{Tab: Exp: Total Processing Time} (column ``Incremental'').
This strategy updates only the parts that change between optimizations by adding newly introduced variables and factors and removing out-of-window variables and factors.
As a result, the problem construction time for CT methods is significantly reduced and becomes nearly constant with respect to the time window size, which is particularly advantageous for large-window scenarios.

\subsubsection{Runtime under Adaptive Non-Uniform Knots}
To further analyze the source of the speedup and the effect of the proposed non-uniform strategy on optimization efficiency, we conduct a study with $10$ robots under smooth, moderate, and aggressive motion.
All three benchmarks use the same sensor rates, noise model, $1\,\mathrm{s}$ sliding window, optimization settings, and keyknot selection rule.
The resulting problem sizes and runtime statistics are summarized in Tab.~\ref{tab:controlled_runtime}.

\begin{table}[ht]
    \centering
    \caption{
        Controlled runtime comparison.
        Average speed is pooled over all robots; average relative speed is computed with respect to robot~0 from the ground truth.
    }
    \vspace{-3pt}
    \label{tab:controlled_runtime}
    \resizebox{.87\linewidth}{!}{%
        \begin{tabular}{l@{\hspace{4pt}}c@{\hspace{3pt}}c@{\hspace{3pt}}c@{\hspace{6pt}}ccc@{\hspace{3pt}}c@{\hspace{4pt}}c@{}}
            \toprule
            Motion
             & \begin{tabular}[c]{@{}c@{}}Avg.\\Speed\\$[\mathrm{m/s}]$\end{tabular}
             & \begin{tabular}[c]{@{}c@{}}Avg. Rel.\\Speed\\$[\mathrm{m/s}]$\end{tabular}
             & \begin{tabular}[c]{@{}c@{}}Keyknot\\Freq.\\$[\mathrm{Hz}]$\end{tabular}
             & Optimizer
             & \begin{tabular}[c]{@{}c@{}}Active\\Spline\\DoF\end{tabular}
             & \begin{tabular}[c]{@{}c@{}}Residual\\Blocks\end{tabular}
             & \begin{tabular}[c]{@{}c@{}}Median\\Iteration\end{tabular}
             & \begin{tabular}[c]{@{}c@{}}Median\\Solve Time\\$[\mathrm{ms}]$\end{tabular} \\
            \midrule
            \multirow{2}{*}{Smooth}
             & \multirow{2}{*}{0.04}
             & \multirow{2}{*}{0.06}
             & \multirow{2}{*}{4.77}
             & CT-BCD & 36   & 2706  & 3 & 8.97   \\
             &        &       &      & CT-Full   & 438  & 14600 & 4 & 103.19 \\
            \midrule
            \multirow{2}{*}{Moderate}
             & \multirow{2}{*}{0.53}
             & \multirow{2}{*}{0.74}
             & \multirow{2}{*}{11.76}
             & CT-BCD & 78   & 2713  & 3 & 10.55  \\
             &        &       &       & CT-Full   & 858  & 14600 & 4 & 130.51 \\
            \midrule
            \multirow{2}{*}{Aggressive}
             & \multirow{2}{*}{2.82}
             & \multirow{2}{*}{6.37}
             & \multirow{2}{*}{18.41}
             & CT-BCD & 126  & 2721  & 3 & 11.59  \\
             &        &       &       & CT-Full   & 1230 & 14600 & 4 & 162.24 \\
            \bottomrule
        \end{tabular}%
    }
\end{table}

As motion becomes more intense, the adaptive keyknot frequency increases accordingly, leading to more active spline DoFs for both CT-BCD and CT-Full.
With the number of residual blocks and LM iterations remaining nearly unchanged, the median runtime of CT-BCD increases by approximately $17\%/29\%$, while that of CT-Full increases by approximately $26\%/57\%$, compared with the mildest motion condition.
Meanwhile, across all three motion conditions, CT-BCD achieves approximately $11.5$--$14.0\times$ lower median runtime than CT-Full because each update activates only one robot trajectory and its associated factors, substantially reducing the active state dimension and the number of residual blocks.

\subsection{Accuracy Analysis}
\label{Sec: Accuracy Analysis}

In this section, we analyze the performance of IA-BCD and full-batch optimization to characterize their accuracy and clarify the motivation for the Hybrid strategy.
To enable a controlled comparison, we isolate the IA-BCD and Full-Batch components from the Hybrid strategy (see \autoref{Sec: Hybrid Optimization Strategy}), constructing IA-BCD-Only and Full-Batch-Only variants, and compare them with DT.
The corresponding outputs are denoted as CT-BCD\textsubscript{(H)}, CT-Full\textsubscript{(H)}, CT-BCD\textsubscript{(I)}, and CT-Full\textsubscript{(F)}.


\subsubsection{Full-Batch-Only}

As shown in \autoref{Fig: online}, CT-Full\textsubscript{(F)} consistently achieves the lowest error, with estimation accuracy improving monotonically as the number of robots increases.
Meanwhile, the accuracy of the complete trajectory is consistently higher than that of the latest output.
This behavior contrasts with DT, where the complete trajectory does not exhibit comparable improvements.
We attribute this gap to differences in measurement time alignment:
the evaluated DT baseline requires measurement interpolation or discarding, which limits information utilization and attainable accuracy, whereas CT-Full\textsubscript{(F)} evaluates all measurements at their original timestamps, providing substantially richer constraints.
In addition, CT-Full\textsubscript{(F)} effectively leverages accumulated historical information to further improve complete trajectory accuracy by refining intermediate states using temporal context.
However, due to its tightly coupled optimization structure, CT-Full\textsubscript{(F)} shows significantly poorer real-time scalability than DT, as indicated by the background colors in \autoref{Fig: online}(a)(b).

\begin{figure}[t]
    
    \definecolor{rtBlue}{HTML}{6fc3df}   
    \definecolor{rtGreen}{HTML}{6cc76c}  
    \definecolor{rtYellow}{HTML}{f0c370} 
    \definecolor{rtRed}{HTML}{FF9494}    
    \centering
    \includegraphics[width=\halfwidth]{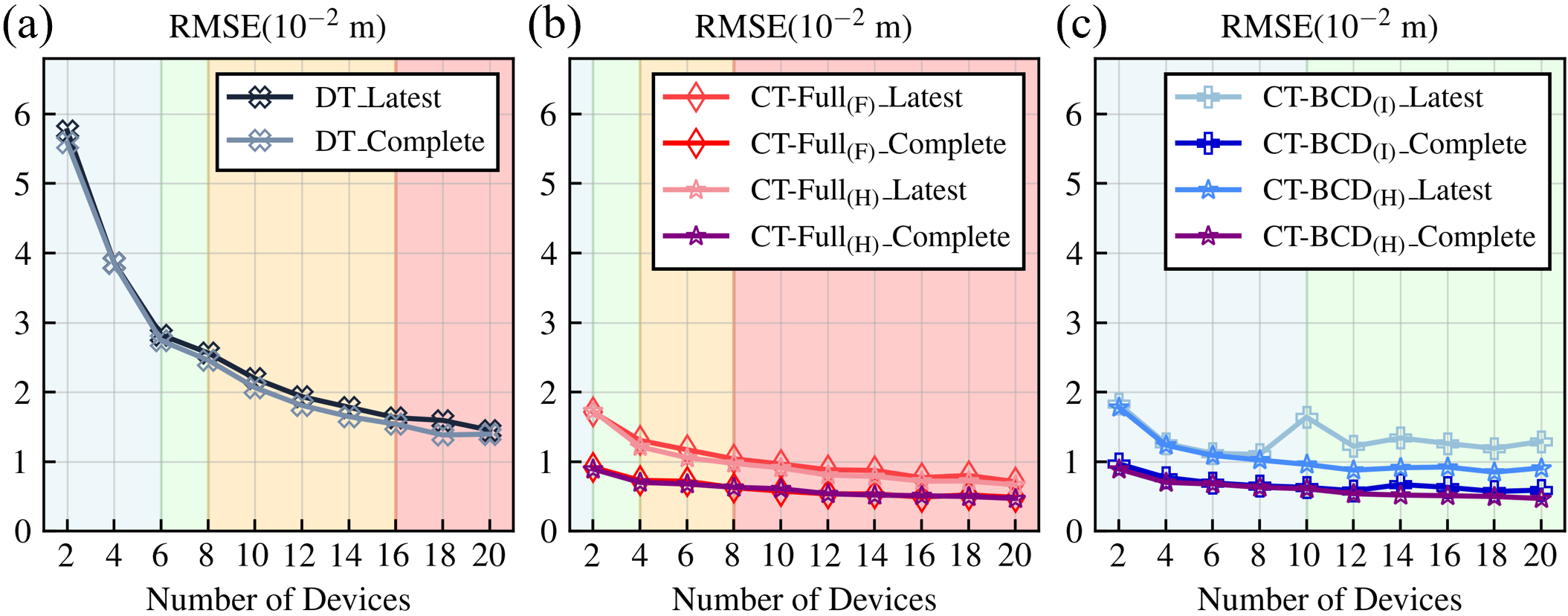}
    \caption{
        RMSE of different CT optimization strategies and DT versus the number of robots, with a fixed $1\,\mathrm{s}$ time window.
        Background colors indicate real-time capability: \textbf{\textcolor{rtBlue}{Blue}}: IMU-rate ($<10\,\mathrm{ms}$); \textbf{\textcolor{rtGreen}{Green}}: Camera-rate ($<20\,\mathrm{ms}$); \textbf{\textcolor{rtYellow}{Yellow}}: Real-time ($<100\,\mathrm{ms}$); \textbf{\textcolor{rtRed}{Red}}: Non-real-time ($>100\,\mathrm{ms}$).
    }
    \label{Fig: online}
\end{figure}

\subsubsection{IA-BCD-Only}

In IA-BCD-Only strategy, CT-BCD\textsubscript{(I)} exhibits significantly different behavior compared to CT-Full\textsubscript{(F)}.
As shown in \autoref{Fig: online}(c), for the latest output, the estimation error initially decreases as the number of robots increases, but begins to rise once the number of robots exceeds 10.
Meanwhile, the solver can no longer perform optimization at every extension step when beyond 10 robots, as shown in the background color of \autoref{Fig: online}(c).
This suggests that fewer optimization updates limit the convergence of CT-BCD\textsubscript{(I)}.
For the complete trajectory, each variable undergoes more subproblem optimization iterations.
As a result, CT-BCD\textsubscript{(I)} is able to converge to an accuracy comparable to CT-Full\textsubscript{(F)} when a sufficient number of iterations is allowed.

To further analyze this effect, we eliminate the influence of varying computation times by running all methods at fixed rates ($100\,\mathrm{Hz}$ for IA-BCD and $10\,\mathrm{Hz}$ for Full-Batch and DT) and increasing the time window size up to $5.5\,\mathrm{s}$.
Under fixed rates, a larger window effectively provides each variable with more optimization updates, allowing us to analyze convergence behavior.
As shown in \autoref{Fig: hearmaps small}(b)(c), as the number of devices increases, the iterations required by CT-BCD\textsubscript{(I)} to reach an accuracy comparable to CT-Full\textsubscript{(F)} grow significantly.
This trend indicates that although IA-BCD also benefits from the richer information of a larger group, its convergence slows due to the increased problem scale and weaker coupling to the global objective.
Despite this limitation, CT-BCD\textsubscript{(I)} demonstrates superior real-time scalability compared to other methods, as illustrated by the background colors in \autoref{Fig: online}(c).

\begin{figure}[h]
    
    \centering
    \includegraphics[width=\halfwidth]{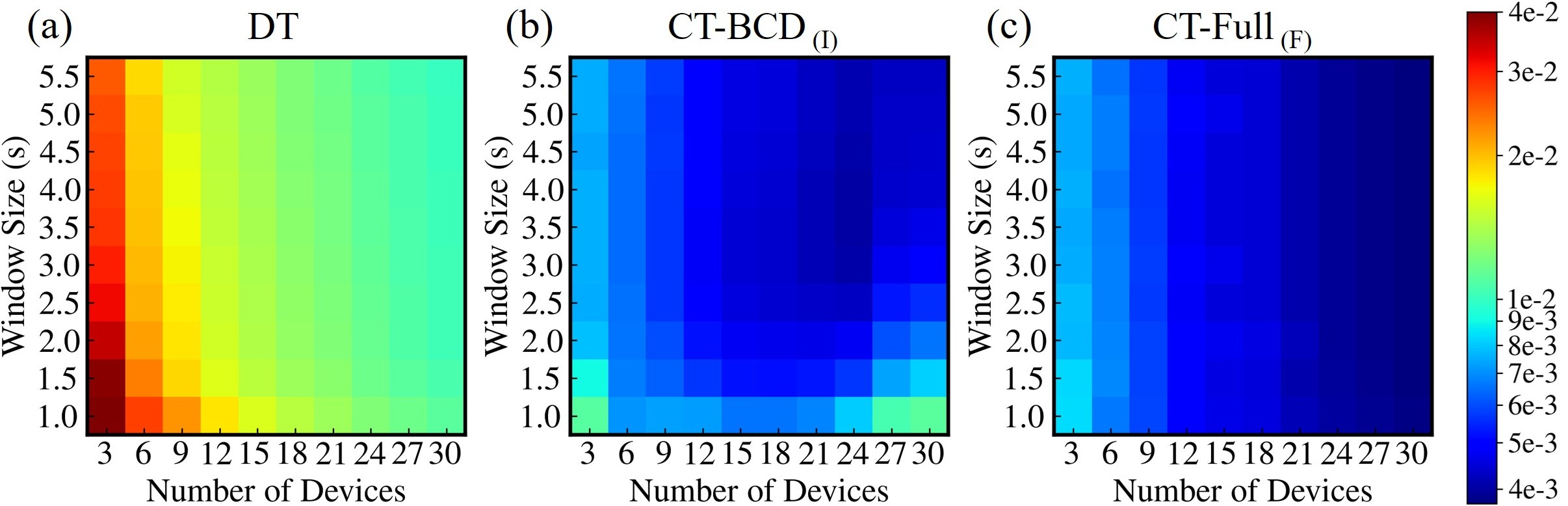}
    \caption{
        Comparison of complete trajectory RMSE for different CT optimization strategies and DT under fixed update rates, across varying numbers of robots and sliding-window sizes.
    }
    \label{Fig: hearmaps small}
\end{figure}

\subsubsection{Hybrid}
For the latest output, periodic full-batch refinements accelerate the convergence of CT-BCD by globally correcting the trajectory.
As shown in \autoref{Fig: online}(c), this allows CT-BCD\textsubscript{(H)} to achieve accuracy close to CT-Full\textsubscript{(H)} (\autoref{Fig: online}(b)) and significantly outperform standalone CT-BCD\textsubscript{(I)}.
Meanwhile, the CT-Full\textsubscript{(H)} consistently maintains its high accuracy.
For the complete trajectory, the Hybrid strategy achieves accuracy that closely matches pure full-batch optimization.
Overall, the Hybrid strategy inherits the fast global convergence and high accuracy of full-batch optimization, while leveraging the computational efficiency of IA-BCD for real-time updates.

\subsection{Time-Offset Estimation Analysis}
\label{Sec: Time-Offset Estimation Analysis}
This experiment evaluates the effectiveness and robustness of the proposed time-offset estimation mechanism under varying levels of inter-device clock misalignment.
We construct a controlled benchmark by artificially adding additional time-offsets of each device in the simulation spline.
For a given time-offset magnitude, the time-offsets are distributed uniformly within the range to emulate realistic clock discrepancies, as shown in the start of \autoref{Fig: Time-Offset Results}.

\autoref{Tab: Exp: Time-Offset Benchmark} reports the position RMSE of DT and CT-Full, together with the time-offset estimation error and the number of optimization steps required for convergence.
Each step corresponds to a complete full-batch optimization, in which the time-offset variable $\tau$ is jointly updated within the bounded interval $[-\Delta t_j,\, \Delta t_j]$, as formulated in \autoref{Sec: Full-Batch Optimization Problem Formulation}.
When no time-offset is introduced, CT-Full consistently achieves lower RMSE than DT across all window sizes and numbers of devices.
Meanwhile, the estimated time-offset remains tightly concentrated around zero with small variance, indicating that the introduction of time-offset variables does not induce bias or instability in the nominal synchronized setting.

\begin{table}[t]
    \centering
    \caption{Time-offset estimation under different window sizes
        and robot counts.
        Values in parentheses denote the number of optimization steps
        required for convergence.}
    \label{tab:time_offset_window}
    \resizebox{\halfwidth}{!}{

        \begin{tabular}{cccccccc}
            \toprule
            \multirow{2}{*}{\vspace{-5pt}\begin{tabular}[c]{@{}c@{}} Time\\Offset\end{tabular}}
                 &
            \multirow{2}{*}{\vspace{-5pt}\begin{tabular}[c]{@{}c@{}} Window\\Size \end{tabular}}
                 &
            \multicolumn{3}{c}{2 Robots}
                 &
            \multicolumn{3}{c}{5 Robots}
            \\
            \cmidrule(lr){3-5} \cmidrule(lr){6-8}
                 &
                 &
            DT
                 &
            CT
                 &
            $\tau$ Err. (Steps)
                 &
            DT
                 &
            CT
                 &
            $\tau$ Err. (Steps)
            \\
            (ms) & (s)   & (cm) & (cm)                   & (ms) & (cm) & (cm) & (ms)
            \\
            \midrule

            \multirow{3}{*}{$0$}
                 & 1
                 & 5.0   & 1.4  & $-2.63{\pm}2.69\,(1)$
                 & 3.2   & 0.9  & $0.04{\pm}1.30\,(1)$
            \\
                 & 3
                 & 3.7   & 1.1  & $-1.26{\pm}1.39\,(1)$
                 & 2.6   & 0.9  & $-0.22{\pm}0.55\,(1)$
            \\
                 & 5
                 & 3.7   & 1.0  & $-0.40{\pm}1.12\,(1)$
                 & 2.7   & 0.9  & $-0.27{\pm}0.40\,(1)$
            \\

            \midrule

            \multirow{3}{*}{$100$}
                 & 1
                 & 20.6  & 1.1  & $-2.78{\pm}3.11\,(12)$
                 & 18.4  & 0.9  & $-0.05{\pm}1.28\,(6)$
            \\
                 & 3
                 & 25.9  & 1.1  & $-1.31{\pm}1.35\,(2)$
                 & 19.5  & 0.8  & $-0.23{\pm}0.56\,(2)$
            \\
                 & 5
                 & 29.2  & 1.0  & $-0.42{\pm}1.13\,(2)$
                 & 19.7  & 0.9  & $-0.26{\pm}0.40\,(2)$
            \\

            \midrule

            \multirow{3}{*}{$1000$}
                 & 1
                 & 338.5 & 1.4  & $-1.09{\pm}3.45\,(34)$
                 & 209.3 & 0.9  & $-0.17{\pm}1.59\,(29)$
            \\
                 & 3
                 & 342.5 & 1.0  & $0.27{\pm}0.37\,(27)$
                 & 220.8 & 0.9  & $-0.15{\pm}0.71\,(27)$
            \\
                 & 5
                 & 349.2 & 1.1  & $0.40{\pm}0.34\,(20)$
                 & 214.9 & 0.9  & $-0.03{\pm}0.37\,(28)$
            \\

            \bottomrule
        \end{tabular}}
    \label{Tab: Exp: Time-Offset Benchmark}
\end{table}

\begin{figure}[t]
    
    \centering
    \includegraphics[width=0.9\halfwidth]{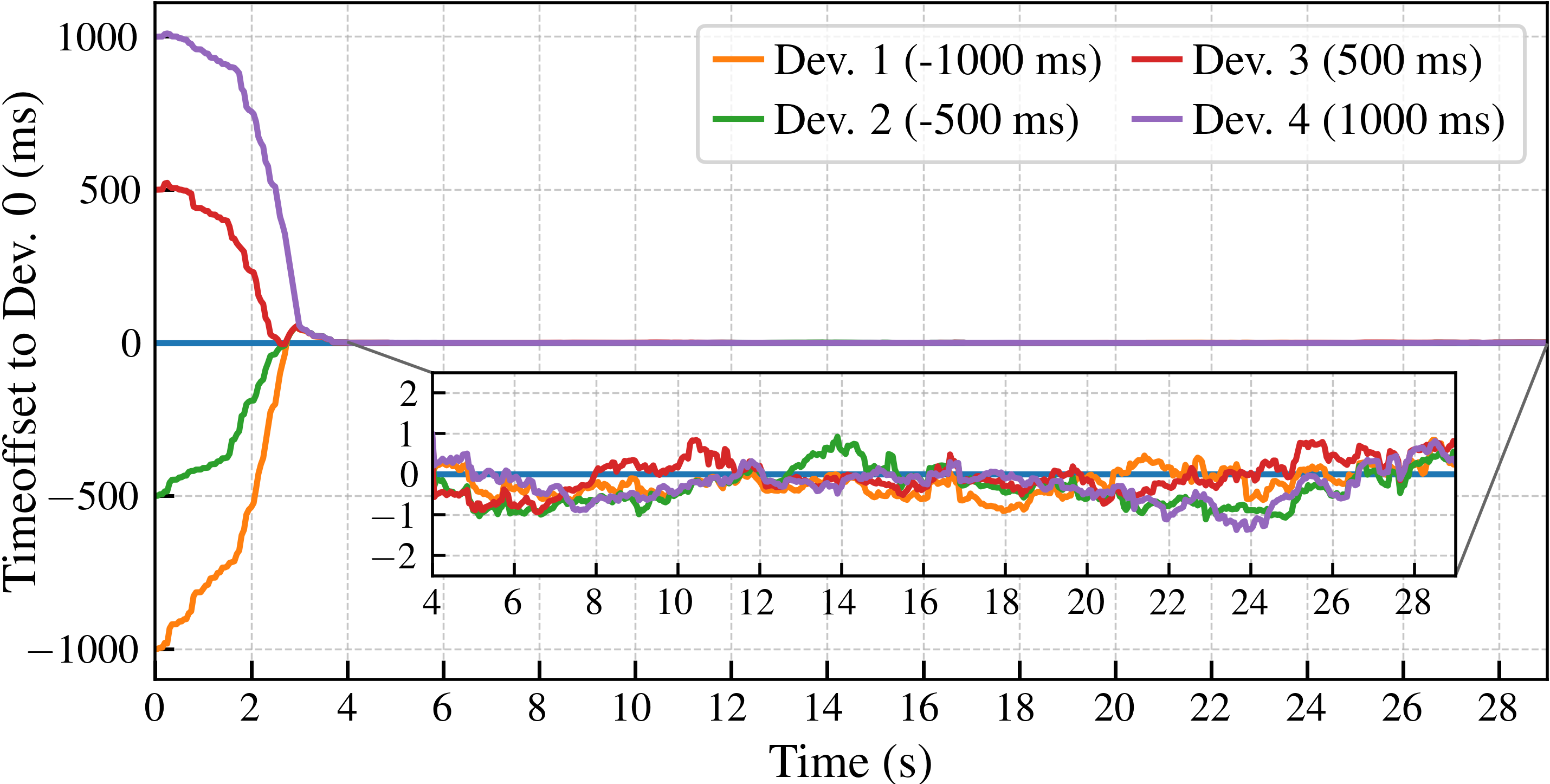}
    \caption{
        Time-offset estimation results of five devices under $5\,\mathrm{s}$ time window.
    }
    \label{Fig: Time-Offset Results}
    \vspace{2pt}
\end{figure}

As the time-offset increases to $100\,\mathrm{ms}$, DT exhibits significant performance degradation due to temporal misalignment, whereas CT-Full maintains stable accuracy with RMSE comparable to the zero-offset case.
The estimated time-offset converges to the ground truth with a millisecond mean error, indicating reliable temporal calibration.
Under severe time-offsets of $1000\,\mathrm{ms}$, DT fails completely with meter-level RMSE, while CT-Full consistently achieves accurate estimation, keeping RMSE below $0.014\,\mathrm{m}$.
Although larger time-offsets require more optimization steps due to bounded ranges, the final accuracy of CT-Full is not compromised.
\autoref{Fig: Time-Offset Results} illustrates the time-offset estimation process of the five devices using a $5\,\mathrm{s}$ sliding time window.
The time-offsets converge within the first $4\,\mathrm{s}$ and remain stable for the remaining time.
Meanwhile, larger window sizes and more devices improve the stability and convergence of time-offset estimation by providing richer constraints and redundancy, demonstrating the robustness and scalability of the proposed continuous-time formulation with joint time-offset estimation.

\vspace{-5pt}
\subsection{Public Benchmark Evaluation on MILUV}
\label{Sec: Public Benchmark Evaluation on MILUV}

To further evaluate CT-RIO beyond synthetic and self-collected data, we conduct experiments on the public MILUV multi-UAV dataset~\cite{shalaby2025miluv}.
MILUV provides three UAVs with 6-DoF motion, onboard IMUs, inter-robot UWB ranging, and motion-capture ground truth.
Because processed inter-robot bearings are unavailable, we synthesize them from ground truth at $50\,\mathrm{Hz}$ with $2^\circ$ Gaussian noise.
We use only direct two-way ranging (TWR) measurements from one fixed UWB tag per UAV (tags $10$, $20$, and $30$), consistent with CT-RIO's single-extrinsic scalar-range model, and discard passive-listening records and measurements involving other tags.
Thus, the benchmark retains real flight dynamics, inertial sensing, UWB ranging, and asynchronous measurement timing, while only the bearing front end is simulated.
We use eight of the 16 sequences: six are excluded due to motion-capture discontinuities, and two due to sparse UWB measurements ($\sim0.13\,\mathrm{Hz}$ per robot pair with gaps exceeding $15\,\mathrm{s}$), which fall outside the intended operating regime.
Statistics of the retained sequences are summarized in Table~\ref{tab:miluv}.

We compare against DT (the MFTO estimator of CREPES-X~\cite{li2025crepes}) and CREPES~\cite{xun2023crepes}, where CREPES is a discrete-time direct relative localization pipeline combining pairwise ESKFs with a pose-graph optimization (PGO) back end.
Following its original implementation, CREPES uses a complementary filter to estimate the gravity direction from accelerometer and gyroscope measurements, whereas DT and CT-RIO directly use the raw IMU measurements.

\begin{table}[ht]
    \centering
    \caption{
        Sequence statistics and RMSE of the latest output and complete trajectory for \textbf{\texttt{MILUV}} sequences.
        The best result of each group is in bold.
        The abbreviations D, MT, R, Z, N, and OT denote \texttt{default}, \texttt{movingTriangle}, \texttt{random}, \texttt{zigzag}, \texttt{noAprilTags}, and \texttt{oneTagPerRobot}, respectively.
    }
    \vspace{-4pt}
    \label{tab:miluv}
    \resizebox{\halfwidth}{!}{%
        \begin{tabular}{l*{3}{>{\centering\arraybackslash}m{0.8cm}}cccccc}
            \toprule
            \multirow{4}{*}{\vspace{-5pt}Sequence}
             & \multicolumn{3}{c}{Sequence Statistics}
             & \multicolumn{4}{c}{Latest Output}
             & \multicolumn{2}{c}{Complete Trajectory} \\
            \cmidrule(lr){2-4}\cmidrule(lr){5-8}\cmidrule(lr){9-10}
             & \multirow{2}{*}{Time}
             & \multirow{2}{*}{\makebox[0.8cm][c]{\begin{tabular}[c]{@{}c@{}}Avg. Rel.\\Speed\end{tabular}}}
             & \multirow{2}{*}{\begin{tabular}[c]{@{}c@{}}UWB\\Rate\end{tabular}}
             & CREPES
             & DT
             & CT-BCD
             & CT-Full
             & DT
             & CT \\
             &
             &
             &
             & Pos / Rot
             & Pos / Rot
             & Pos / Rot
             & Pos / Rot
             & Pos / Rot
             & Pos / Rot \\
             & ($\mathrm{s}$)
             & ($\mathrm{m/s}$)
             & ($\mathrm{Hz}$)
             & ($\mathrm{cm}$) / ($\degree$)
             & ($\mathrm{cm}$) / ($\degree$)
             & ($\mathrm{cm}$) / ($\degree$)
             & ($\mathrm{cm}$) / ($\degree$)
             & ($\mathrm{cm}$) / ($\degree$)
             & ($\mathrm{cm}$) / ($\degree$) \\
            \midrule
            \texttt{D\_MT\_0b} & 205.3 & 0.32 & 2.8 & 40.0 / 4.2 & 12.7 / 2.6 & 13.9 / 1.3 & \textbf{12.1 / 1.3} & 12.2 / 2.6 & \textbf{9.4 / 1.2} \\
            \texttt{D\_R3\_0b} & 222.8 & 0.80 & 2.7 & 42.6 / 6.7 & 22.1 / 2.9 & 17.1 / 1.9 & \textbf{14.9 / 1.9} & 21.7 / 2.8 & \textbf{8.1 / 1.8} \\
            \texttt{D\_R3\_2}  & 204.8 & 0.88 & 2.5 & 56.9 / 4.4 & 27.6 / 2.9 & 20.0 / 1.7 & \textbf{15.5 / 1.6} & 26.7 / 2.9 & \textbf{9.2 / 1.4} \\
            \texttt{D\_R\_0}   & 80.9  & 0.86 & 2.8 & 51.6 / 4.2 & 28.6 / 2.9 & 18.0 / 1.8 & \textbf{14.8 / 1.6} & 27.6 / 2.9 & \textbf{8.7 / 1.6} \\
            \texttt{D\_R\_0b}  & 91.9  & 0.70 & 2.8 & 76.3 / 4.2 & 25.2 / 2.9 & 26.3 / 1.8 & \textbf{23.1 / 1.8} & 24.6 / 2.9 & \textbf{7.4 / 1.7} \\
            \texttt{D\_Z\_2}   & 275.6 & 0.99 & 2.4 & 53.5 / 4.7 & 30.7 / 3.3 & 25.7 / 1.9 & \textbf{17.3 / 1.8} & 29.8 / 3.3 & \textbf{8.6 / 1.8} \\
            \texttt{N\_R3\_2b} & 206.0 & 0.87 & 2.5 & 48.3 / 4.3 & 25.2 / 2.8 & 19.6 / 1.6 & \textbf{16.5 / 1.5} & 24.6 / 2.8 & \textbf{9.3 / 1.3} \\
            \texttt{OT\_R\_0}   & 101.0 & 0.76 & 7.2 & 36.4 / 4.0 & 20.7 / 2.9 & 13.9 / 2.0 & \textbf{13.5 / 1.8} & 20.5 / 2.9 & \textbf{12.5 / 1.8} \\
            \midrule
            \textbf{Mean} & 173.5 & 0.77 & 3.2 & 50.7 / 4.6 & 24.1 / 2.9 & 19.3 / 1.8 & \textbf{16.0 / 1.7} & 23.5 / 2.9 & \textbf{9.2 / 1.6} \\
            \bottomrule
        \end{tabular}%
    }
\end{table}

As shown in Table~\ref{tab:miluv}, CT-Full reduces the mean position and rotation RMSE by approximately $34\%$ and $43\%$, respectively, compared with DT, and by approximately $68\%$ and $64\%$ compared with CREPES.
For the complete trajectory, CT-RIO further reduces the mean position/rotation RMSE from $23.5\,\mathrm{cm}/2.9^\circ$ for DT to $9.2\,\mathrm{cm}/1.6^\circ$.
These results confirm that the performance advantage observed in the controlled simulations is retained under real multi-UAV motion, inertial disturbances, and sparse asynchronous UWB measurements.

\section{Real-World Experiments}
\label{Sec: Real-World Experiments}
In this section, we evaluate \CT{} through real-world experiments covering heterogeneous platforms, diverse relative motion patterns, LOS/NLOS conditions, different numbers of robots, and inter-robot clock unsynchronization.
\vspace{-2pt}

\begin{figure}[ht]
      \centering
      \includegraphics[width=\halfwidth]{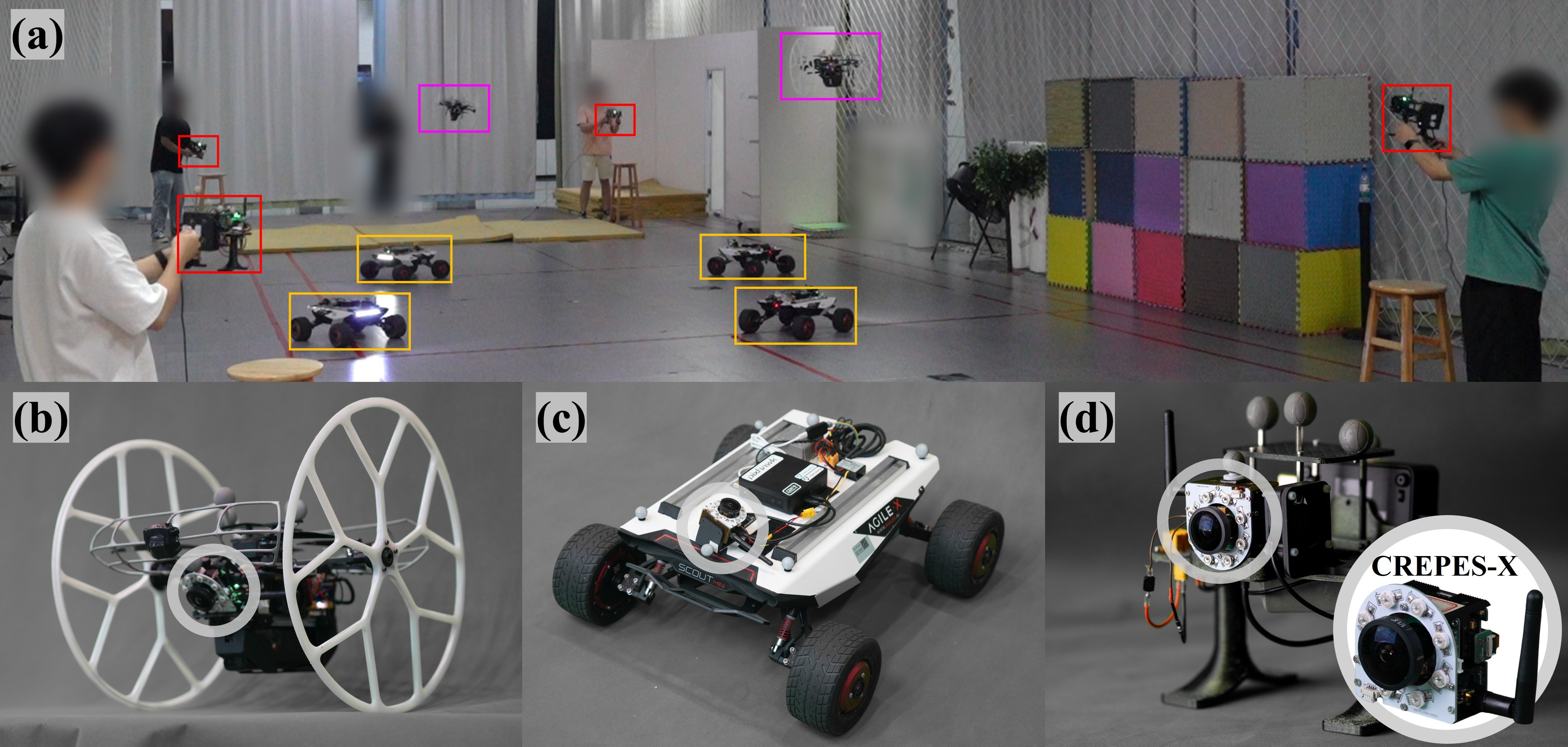}
      \caption{
            The ten-device experimental environment and CREPES-X platforms.
            (a) Ten-device experimental environment, with the handheld, UAV, and UGV platforms highlighted in red, purple, and yellow, respectively.
            (b) UAV-mounted platform.
            (c) UGV-mounted platform.
            (d) Handheld platform.
      }
      \label{Fig: Exp: Real-World Environment}
      \vspace{-2pt}
\end{figure}

All real-world experiments are conducted in a $10\times20\times3\,\mathrm{m}^3$ motion-capture environment using CREPES-X~\cite{li2025crepes} devices deployed on handheld, UAV, and UGV platforms, as illustrated in \autoref{Fig: Exp: Real-World Environment}.
The IMU and UWB measurements incur negligible front-end processing, whereas bearing observations are produced by a vision front end~\cite{li2025crepes} that detects infrared markers, decodes robot IDs, and outputs calibrated normalized 3-D bearing vectors.
Bearings are timestamped approximately $0.6\,\mathrm{ms}$ after the camera trigger, and complete bearing-ID observations become available within approximately $7.5\,\mathrm{ms}$ in our timing measurements.
If no valid detection is obtained due to occlusion or insufficient observations, no bearing factor is added at that timestamp.
Sensor streams are transmitted over TCP, and \CT{} handles arrival skew through asynchronous per-robot spline extension and virtual extension, while delayed measurements can still be incorporated at their acquisition timestamps if those timestamps remain within the active window.
Longer communication stalls effectively act as temporary measurement outages, and controlled robustness to network delay and dropout is beyond the scope of this work.
The real-world evaluation covers six scenarios: smooth five-device motion under LOS/NLOS conditions (\texttt{\textbf{LOS}} and \texttt{\textbf{NLOS}} scenarios); Shift-speed Dynamic Motion (\texttt{\textbf{SDM}}), where one device moves with time-varying velocity while the others remain stationary; High-speed Dynamic Motion (\texttt{\textbf{HDM}}) involving aggressive relative motion; TEN-device heterogeneous (\texttt{\textbf{TEN}}) experiments with two UAVs, four UGVs, and four handheld devices; and Time-offset UnSynchronized (\texttt{\textbf{TUS}}) experiments, where UWB time synchronization used in other scenarios is disabled and the devices rely only on Wi-Fi synchronization, whose accuracy is around tens of milliseconds~\cite{chen2023understanding}.
Wired synchronization provides sub-millisecond time-offset ground truth.
Ground-truth poses are provided by a motion-capture system, and sensor extrinsics are calibrated offline following~\cite{li2025crepes}, summarized in \autoref{Tab: Experiment Configuration}.

\begin{table}[h]
      \centering
      \caption{Experiment Configuration}
      \label{Tab: Experiment Configuration}
      \resizebox{\halfwidth}{!}{
            \begin{tabular}{lccccccc}
                  \toprule
                  \multirow{2}{*}{Metric} & \multirow{2}{*}{Freq.} & \multicolumn{6}{c}{RMSE in different scenarios}                                                                                                                                            \\
                                          &                        & \texttt{\textbf{LOS}}                           & \texttt{\textbf{NLOS}}    & \texttt{\textbf{SDM}}     & \texttt{\textbf{HDM}}     & \texttt{\textbf{TUS}}     & \texttt{\textbf{TEN}}    \\
                  \midrule
                  Bearing                 & $50\,\mathrm{Hz}$      & $3.661^\circ$                                   & $3.263^\circ$             & $5.145^\circ$             & $2.892^\circ$             & $1.571^\circ$             & $2.951^\circ$            \\
                  Distance                & $100\,\mathrm{Hz}$     & $0.055\,\mathrm{m}$                             & $0.057\,\mathrm{m}$       & $0.046\,\mathrm{m}$       & $0.107\,\mathrm{m}$       & $0.129\,\mathrm{m}$       & $0.302\,\mathrm{m}$      \\
                  Acc.                    & $100\,\mathrm{Hz}$     & $1.946\,\mathrm{m/s^2}$                         & $2.062\,\mathrm{m/s^2}$   & $1.967\,\mathrm{m/s^2}$   & $0.895\,\mathrm{m/s^2}$   & $1.162\,\mathrm{m/s^2}$   & $5.009\,\mathrm{m/s^2}$  \\
                  Gyro.                   & $100\,\mathrm{Hz}$     & $12.226^\circ/\mathrm{s}$                       & $12.492^\circ/\mathrm{s}$ & $16.235^\circ/\mathrm{s}$ & $10.353^\circ/\mathrm{s}$ & $15.508^\circ/\mathrm{s}$ & $9.986^\circ/\mathrm{s}$ \\
                  \bottomrule
            \end{tabular}
      }
\end{table}

The following Secs.~VII-A--E evaluate discrete-time and continuous-time baselines, closed-loop control, online time-offset estimation, and computational scheduling, respectively.

\subsection{Comparison with Discrete-Time Baselines}
\label{Sec: Comparison of Continuous-time and Discrete-time}

We compare the pose estimation accuracy of \CT{} against two discrete-time baselines on all experiments except \texttt{\textbf{TUS}}: CREPES~\cite{xun2023crepes}, whose pipeline combines pairwise ESKFs with a PGO back end, and CREPES-X-MFTO~\cite{li2025crepes}.
For \CT{}, similar to \autoref{Sec: Accuracy Analysis}, we evaluate CT-BCD under the IA-BCD-only strategy (CT-BCD\textsubscript{(I)}), and both CT-BCD and CT-Full under the Hybrid strategy (CT-BCD\textsubscript{(H)} and CT-Full\textsubscript{(H)}).
The Full-Batch-Only strategy is omitted, as benchmark results indicate only marginal performance differences compared to the Hybrid strategy.
The time window for CREPES-X and CT-Full is set to $3\,\mathrm{s}$, and for CT-BCD is set to $1\,\mathrm{s}$.
For \texttt{\textbf{SDM}} and \texttt{\textbf{HDM}}, RMSE is computed only for the moving device (Dev.~1).

\subsubsection{Latest Output}

\begin{table}[t]
    \centering
    \caption{
        Real-World RMSE of CREPES, CREPES-X, and CT (Latest Output, Color Definition: \cfirst{best}, \csecond{second-best}, determined by position RMSE, and by rotation RMSE if equal)
    }
    \label{Tab: Real-World Experiment Statistics and RMSE of DT and CT (Real-time Output)}
    \resizebox{\halfwidth}{!}{
        \begin{tabular}{
                w{c}{0.1cm}
                w{l}{1.1cm}
                *{5}{w{c}{1.36cm}}
                w{c}{2.3cm}
            }
            \toprule
            \multicolumn{2}{c}{\multirow{3}{*}{\vspace{-5pt}Exp.}}
                             &
            \multirow{1}{*}{$E_{\text{CREPES}}$}
                             &
            \multirow{1}{*}{$E_{\text{CREPES-X}}$}
                             &
            \multirow{1}{*}{\begin{tabular}[c]{@{}c@{}}$E_{\text{CT-BCD}_\text{(I)}}$\end{tabular}}
                             &
            \multirow{1}{*}{\begin{tabular}[c]{@{}c@{}}$E_{\text{CT-BCD}_\text{(H)}}$\end{tabular}}
                             &
            \multirow{1}{*}{\begin{tabular}[c]{@{}c@{}}$E_{\text{CT-Full}_\text{(H)}}$\end{tabular}}
                             &
            \multirow{2}{*}{%
                \vspace{-4pt}\resizebox{2.8cm}{!}{%
                    $\displaystyle\left|\frac{E_{\text{CREPES-X}}-E_{\text{CT-Full}}}{E_{\text{CREPES-X}}}\right|_{\%}$%
                }%
            }
            \\
            \cmidrule(r){3-3} \cmidrule(r){4-4} \cmidrule(r){5-5} \cmidrule(r){6-6} \cmidrule(r){7-7}
            \multicolumn{2}{c}{} & Pos / Rot
                             & Pos / Rot
                             & Pos / Rot
                             & Pos / Rot
                             & Pos / Rot
                             &
            \\
            \multicolumn{2}{c}{} & ($\mathrm{cm}$) / ($\degree$)
                             & ($\mathrm{cm}$) / ($\degree$)
                             & ($\mathrm{cm}$) / ($\degree$)
                             & ($\mathrm{cm}$) / ($\degree$)
                             & ($\mathrm{cm}$) / ($\degree$)
                             & Pos / Rot
            \\
            \midrule
            \multirow{10}{*}{\vspace{-10pt}\rotatebox[origin=c]{90}{\strut\textbf{Smooth}}} & \texttt{LOS\_1} & {6.4 / 2.1} & {6.3 / 1.9} & {6.0 / 1.9} & \csecond{5.7 / 1.9} & \cfirst{5.7 / 1.9} & $10\% \RED{\uparrow}$\hspace{0.5em}$~0\% \RED{\uparrow}$ \\
             & \texttt{LOS\_2} & {8.6 / 2.4} & {7.8 / 2.1} & {7.4 / 2.0} & \csecond{6.7 / 2.1} & \cfirst{6.7 / 2.0} & $14\% \RED{\uparrow}$\hspace{0.5em}$~3\% \RED{\uparrow}$ \\
             & \texttt{LOS\_3} & {8.5 / 2.8} & {6.9 / 2.1} & {6.3 / 2.0} & \csecond{6.1 / 2.1} & \cfirst{6.1 / 2.1} & $12\% \RED{\uparrow}$\hspace{0.5em}$~1\% \RED{\uparrow}$ \\
             & \texttt{LOS\_4} & {7.2 / 2.1} & {8.0 / 2.5} & {7.2 / 2.5} & \csecond{7.1 / 2.5} & \cfirst{7.0 / 2.5} & $12\% \RED{\uparrow}$\hspace{0.5em}$~3\% \RED{\uparrow}$ \\
             & \textbf{MEAN} & {7.7 / 2.4} & {7.3 / 2.2} & {6.7 / 2.1} & \csecond{6.4 / 2.1} & \cfirst{6.4 / 2.1} & $12\% \RED{\uparrow}$\hspace{0.5em}$~2\% \RED{\uparrow}$ \\
            \cmidrule(l){2-8}
             & \texttt{NLOS\_1} & {6.5 / 2.2} & {5.0 / 1.8} & {7.4 / 1.8} & \csecond{4.8 / 1.8} & \cfirst{4.6 / 1.8} & $~8\% \RED{\uparrow}$\hspace{0.5em}$~0\% \RED{\uparrow}$ \\
             & \texttt{NLOS\_2} & {8.4 / 2.4} & {6.7 / 2.0} & {11.1 / 2.0} & \csecond{5.9 / 2.0} & \cfirst{5.6 / 2.0} & $16\% \RED{\uparrow}$\hspace{0.5em}$~1\% \RED{\uparrow}$ \\
             & \texttt{NLOS\_3} & {6.8 / 2.3} & {5.9 / 2.0} & {9.3 / 2.0} & \csecond{5.5 / 2.0} & \cfirst{5.4 / 2.0} & $~8\% \RED{\uparrow}$\hspace{0.5em}$~0\% \RED{\uparrow}$ \\
             & \texttt{NLOS\_4} & {8.0 / 2.5} & {7.0 / 2.1} & {7.3 / 2.1} & \csecond{6.4 / 2.1} & \cfirst{6.1 / 2.0} & $13\% \RED{\uparrow}$\hspace{0.5em}$~2\% \RED{\uparrow}$ \\
             & \textbf{MEAN} & {7.4 / 2.3} & {6.2 / 2.0} & {8.8 / 2.0} & \csecond{5.7 / 2.0} & \cfirst{5.4 / 2.0} & $12\% \RED{\uparrow}$\hspace{0.5em}$~1\% \RED{\uparrow}$ \\
            \midrule
            \multirow{5}{*}{\vspace{-5pt}\rotatebox[origin=c]{90}{\strut\textbf{Shift-Speed}}} & \texttt{SDM\_1} & {7.4 / 6.5} & {6.7 / 2.5} & {6.0 / 2.3} & \csecond{5.1 / 2.3} & \cfirst{4.6 / 2.4} & $31\% \RED{\uparrow}$\hspace{0.5em}$~7\% \RED{\uparrow}$ \\
             & \texttt{SDM\_2} & {6.3 / 3.6} & {5.3 / 1.8} & {4.9 / 1.9} & \csecond{4.2 / 1.8} & \cfirst{3.9 / 1.8} & $26\% \RED{\uparrow}$\hspace{0.5em}$~0\% \RED{\uparrow}$ \\
             & \texttt{SDM\_3} & {11.0 / 4.9} & {6.8 / 2.7} & {5.2 / 2.5} & \csecond{4.5 / 2.5} & \cfirst{4.1 / 2.4} & $40\% \RED{\uparrow}$\hspace{0.5em}$10\% \RED{\uparrow}$ \\
             & \texttt{SDM\_4} & {6.3 / 2.6} & {6.4 / 2.1} & {5.0 / 2.2} & \csecond{4.1 / 1.9} & \cfirst{3.8 / 2.0} & $41\% \RED{\uparrow}$\hspace{0.5em}$~6\% \RED{\uparrow}$ \\
             & \textbf{MEAN} & {7.8 / 4.4} & {6.3 / 2.3} & {5.3 / 2.2} & \csecond{4.5 / 2.1} & \cfirst{4.1 / 2.2} & $35\% \RED{\uparrow}$\hspace{0.5em}$~6\% \RED{\uparrow}$ \\
            \midrule
            \multirow{5}{*}{\vspace{-5pt}\rotatebox[origin=c]{90}{\strut\textbf{High-speed}}} & \texttt{HDM\_1} & {26.0 / 5.5} & {23.9 / 2.7} & {27.7 / 3.1} & \csecond{19.6 / 2.8} & \cfirst{17.6 / 2.7} & $26\% \RED{\uparrow}$\hspace{0.5em}$~1\% \RED{\uparrow}$ \\
             & \texttt{HDM\_2} & {20.2 / 3.4} & {17.1 / 2.1} & {19.3 / 2.6} & \csecond{13.6 / 2.2} & \cfirst{11.8 / 2.1} & $31\% \RED{\uparrow}$\hspace{0.5em}$~1\% \RED{\uparrow}$ \\
             & \texttt{HDM\_3} & {43.7 / 3.9} & {20.5 / 2.6} & {14.5 / 2.7} & \csecond{10.5 / 2.2} & \cfirst{8.9 / 1.9} & $57\% \RED{\uparrow}$\hspace{0.5em}$28\% \RED{\uparrow}$ \\
             & \texttt{HDM\_4} & {22.7 / 6.6} & {18.2 / 2.6} & {19.6 / 2.6} & \csecond{14.5 / 2.4} & \cfirst{12.4 / 2.2} & $32\% \RED{\uparrow}$\hspace{0.5em}$16\% \RED{\uparrow}$ \\
             & \textbf{MEAN} & {28.2 / 4.9} & {19.9 / 2.5} & {20.3 / 2.7} & \csecond{14.5 / 2.4} & \cfirst{12.7 / 2.2} & $36\% \RED{\uparrow}$\hspace{0.5em}$12\% \RED{\uparrow}$ \\
            \midrule
            \multirow{5}{*}{\vspace{-5pt}\rotatebox[origin=c]{90}{\strut\textbf{Ten-device}}} & \texttt{TEN\_1} & {13.7 / 2.7} & {12.5 / 2.2} & {6.6 / 2.0} & \csecond{6.4 / 2.0} & \cfirst{5.1 / 2.0} & $59\% \RED{\uparrow}$\hspace{0.5em}$10\% \RED{\uparrow}$ \\
             & \texttt{TEN\_2} & {14.4 / 2.5} & {11.9 / 2.1} & {6.6 / 2.0} & \csecond{6.4 / 2.0} & \cfirst{5.8 / 2.0} & $51\% \RED{\uparrow}$\hspace{0.5em}$~4\% \RED{\uparrow}$ \\
             & \texttt{TEN\_3} & {65.0 / 8.5} & {12.9 / 2.7} & {6.4 / 2.7} & \csecond{6.3 / 2.7} & \cfirst{5.6 / 2.7} & $57\% \RED{\uparrow}$\hspace{0.5em}$~1\% \RED{\uparrow}$ \\
             & \texttt{TEN\_4} & {58.5 / 8.0} & {12.5 / 2.7} & {6.7 / 2.5} & \csecond{6.6 / 2.5} & \cfirst{5.8 / 2.2} & $54\% \RED{\uparrow}$\hspace{0.5em}$20\% \RED{\uparrow}$ \\
             & \textbf{MEAN} & {37.9 / 5.4} & {12.4 / 2.4} & {6.6 / 2.3} & \csecond{6.4 / 2.3} & \cfirst{5.6 / 2.2} & $55\% \RED{\uparrow}$\hspace{0.5em}$~9\% \RED{\uparrow}$ \\
            \bottomrule
        \end{tabular}
    }
\end{table}

Results are summarized in \autoref{Tab: Real-World Experiment Statistics and RMSE of DT and CT (Real-time Output)}.
\CT{} with the Hybrid strategy consistently outperforms both discrete-time baselines across all experiments.
Compared with CREPES and CREPES-X, CT-Full\textsubscript{(H)} achieves overall position/rotation improvements of $62\%/45\%$ and $34\%/6\%$, respectively.
The advantage of \CT{} becomes more pronounced as motion intensity increases.
Compared with the best DT baseline, CREPES-X, CT-Full\textsubscript{(H)} achieves modest position/rotation improvements of $12\%/1\%$ in smooth-motion scenarios, which increase to $35\%/6\%$ in \texttt{\textbf{SDM}} and $36\%/12\%$ in \texttt{\textbf{HDM}}.
As shown in \autoref{Fig: Exp: Real-World HDM Results}, \CT{} achieves both lower overall errors and smaller maximum errors than CREPES-X on \texttt{\textbf{HDM}} sequences.
On the \texttt{\textbf{TEN}} sequences covering multiple platforms, CT-Full\textsubscript{(H)} further improves position/rotation accuracy by $55\%/9\%$.
As shown in \autoref{Fig: Exp: Real-World TEN Results}, the largest improvement is observed on UAVs while maintaining low overall errors across platforms.
These results demonstrate the advantage of \CT{} under fast motion, highly asynchronous observations, and high-vibration platforms.

Across different \CT{} optimization strategies, consistent performance trends are observed among CT-BCD\textsubscript{(I)}, CT-BCD\textsubscript{(H)}, and CT-Full\textsubscript{(H)}, with estimation errors decreasing in that order.
CT-BCD\textsubscript{(H)} achieves comparable performance to CT-Full\textsubscript{(H)} in smooth and shift-speed motion scenarios, and only slightly lags behind in high-speed motion scenarios.
Meanwhile, CT-BCD\textsubscript{(I)} exhibits a more pronounced performance gap relative to CT-Full\textsubscript{(H)}.
In well-conditioned \texttt{\textbf{LOS}} and \texttt{\textbf{SDM}} scenarios, as well as the observation-rich \texttt{\textbf{TEN}} scenario, CT-BCD\textsubscript{(I)} still outperforms CREPES-X.
However, in challenging \texttt{\textbf{NLOS}} and \texttt{\textbf{HDM}} scenarios, its performance degrades significantly, and can even fall below that of CREPES-X.
This behavior indicates that instant BCD updates are sensitive to observation quality, whereas the Hybrid strategy effectively mitigates this limitation by incorporating full-batch updates.
Noting that CT-BCD\textsubscript{(H)} can output at IMU frequency, these results highlight the advantages of the Hybrid strategy in delivering high-accuracy, low-latency, and high-frequency outputs.

\begin{figure*}[tb]
    
    \centering
    \begin{minipage}{\fullwidth}
        \centering
        \includegraphics[width=\textwidth]{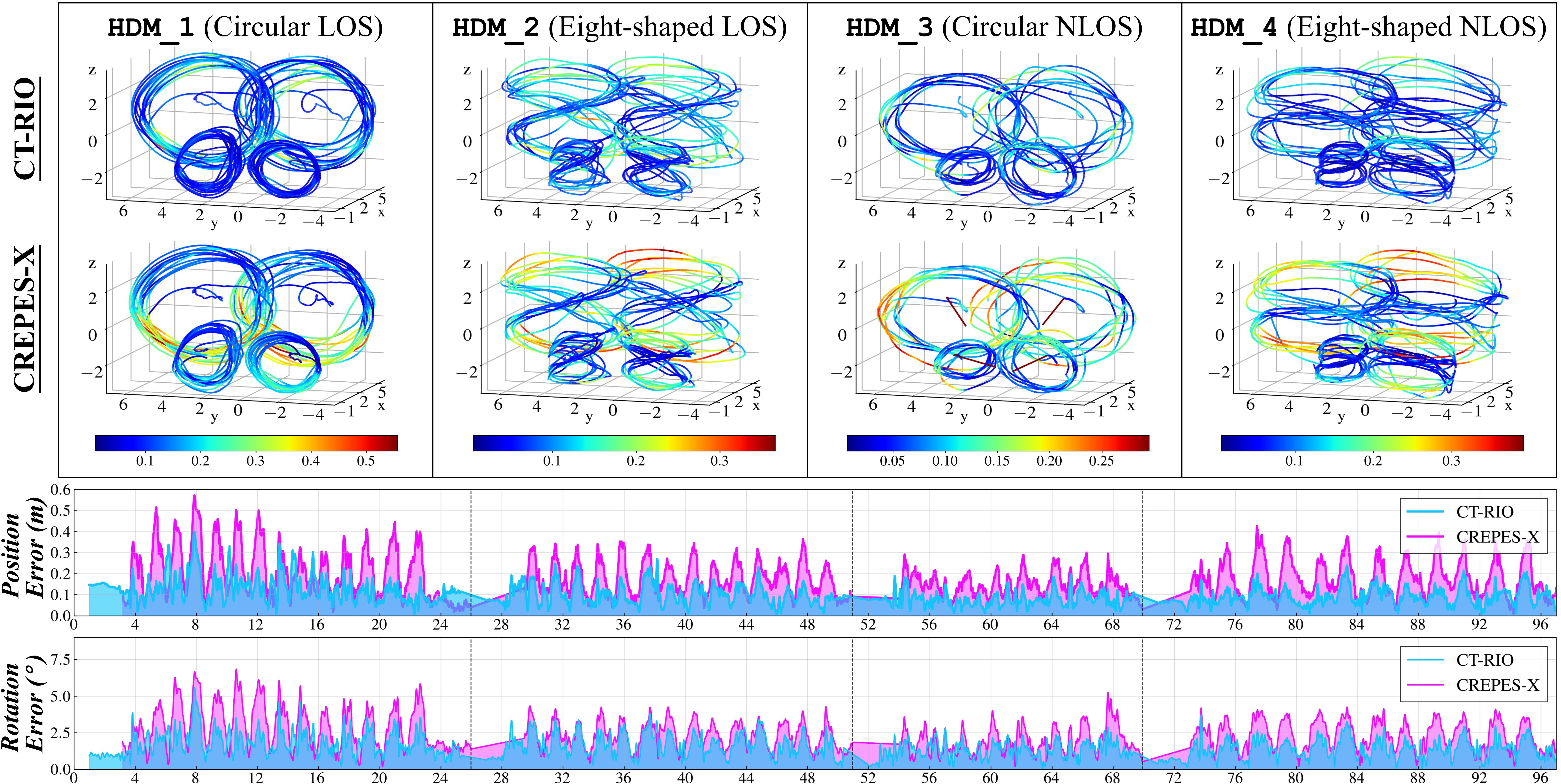}
        \caption{
            High-speed dynamic motion (\texttt{\textbf{HDM}}) results.
            Upper: The estimated outputs in reference frame (Device 0) by CT-Full\textsubscript{(H)} and CREPES-X, respectively.
            Lower: Position and rotation errors of Device 1.
        }
        \label{Fig: Exp: Real-World HDM Results}
    \end{minipage}

    \vspace{6pt}
    \begin{minipage}{\fullwidth}
        \centering
        \includegraphics[width=\textwidth]{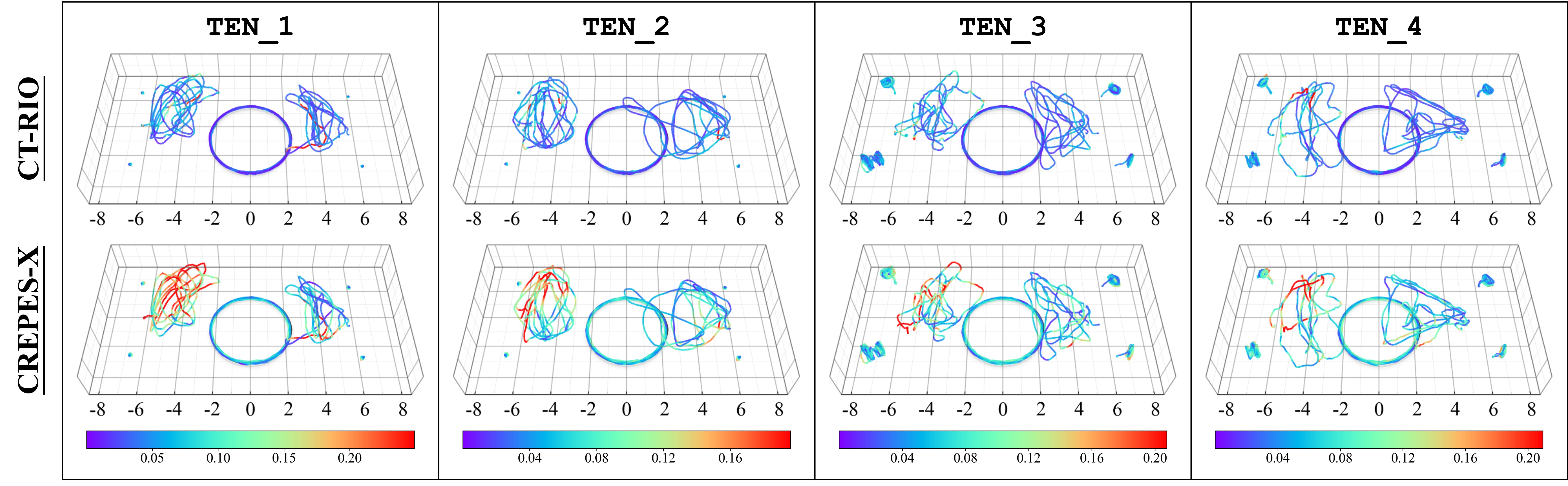}
        \caption{
            Trajectory comparison of CT-Full\textsubscript{(H)} and CREPES-X on the four ten-device \texttt{\textbf{TEN}} sequences.
            The estimated outputs are transformed into the world frame.
            The circular paths correspond to the UGVs, the irregular paths on both sides to the UAVs, and the paths near the four corners to the handheld devices.
        }
        \label{Fig: Exp: Real-World TEN Results}
    \end{minipage}
\end{figure*}

\begin{table}[t]
    \centering
    \caption{
        Mean RMSE per Scenario (Complete Trajectory, Color Definition: \cfirst{best}, \csecond{second-best}, determined by position RMSE, and by rotation RMSE if equal)
        }
    \label{Tab: RMSE of DT and CT in Real-World Experiment (Complete Trajectory)}
    \resizebox{\halfwidth}{!}{%
        \begin{tabular}{lccccw{c}{2.3cm}}
            \toprule
            \multirow{3}{*}{\vspace{-5pt}Exp.}
                          &
            $E_{\text{CREPES}}$
                          &
            $E_{\text{CREPES-X}}$
                          &
            $E_{\text{CT-BCD}_\text{(I)}}$
                          &
            $E_{\text{CT-Full}_\text{(H)}}$
                          &
            \multirow{2}{*}{%
                \vspace{-4pt}\resizebox{2.8cm}{!}{%
                    $\displaystyle\left|\frac{E_{\text{CREPES-X}}-E_{\text{CT-Full}}}{E_{\text{CREPES-X}}}\right|_{\%}$%
                }%
            }
            \\
            \cmidrule(r){2-2} \cmidrule(r){3-3} \cmidrule(r){4-4} \cmidrule(r){5-5}
                          &
            Pos / Rot
                          &
            Pos / Rot
                          &
            Pos / Rot
                          &
            Pos / Rot
                          &
            \\
                          &
            ($\mathrm{cm}$) / ($\degree$)
                          &
            ($\mathrm{cm}$) / ($\degree$)
                          &
            ($\mathrm{cm}$) / ($\degree$)
                          &
            ($\mathrm{cm}$) / ($\degree$)
                          &
            Pos / Rot
            \\
            \midrule
            \textbf{LOS} & {7.7 / 2.4} & {7.1 / 2.2} & \csecond{6.1 / 2.1} & \cfirst{6.0 / 2.1} & $16\% \RED{\uparrow}$\hspace{0.5em}$~5\% \RED{\uparrow}$ \\
            \textbf{NLOS} & {7.4 / 2.3} & {6.1 / 2.0} & \csecond{5.5 / 2.0} & \cfirst{5.3 / 2.0} & $13\% \RED{\uparrow}$\hspace{0.5em}$~1\% \RED{\uparrow}$ \\
            \textbf{SDM} & {7.7 / 4.4} & {8.0 / 2.9} & \csecond{3.4 / 2.1} & \cfirst{3.3 / 1.9} & $59\% \RED{\uparrow}$\hspace{0.5em}$32\% \RED{\uparrow}$ \\
            \textbf{HDM} & {28.2 / 4.9} & {19.3 / 2.5} & \csecond{14.4 / 2.4} & \cfirst{10.3 / 2.1} & $47\% \RED{\uparrow}$\hspace{0.5em}$18\% \RED{\uparrow}$ \\
            \textbf{TEN} & {37.9 / 5.4} & {12.4 / 2.4} & \csecond{5.6 / 2.3} & \cfirst{5.5 / 2.3} & $56\% \RED{\uparrow}$\hspace{0.5em}$~7\% \RED{\uparrow}$ \\
            \bottomrule
        \end{tabular}%
    }
    \vspace{5pt}
\end{table}

\subsubsection{Complete Trajectory}

Since CT-BCD\textsubscript{(H)} and CT-Full\textsubscript{(H)} share the same complete trajectory, only CT-Full\textsubscript{(H)} is reported in \autoref{Tab: RMSE of DT and CT in Real-World Experiment (Complete Trajectory)}.
On complete trajectories, CT-Full\textsubscript{(H)} consistently achieves lower position RMSE than its latest-output estimates, especially in \texttt{\textbf{SDM}} and \texttt{\textbf{HDM}}.
In contrast, CREPES-X shows little improvement and even degrades in \texttt{\textbf{SDM}}.
Moreover, after sufficient updates, CT-BCD\textsubscript{(I)} approaches CT-Full\textsubscript{(H)} in position accuracy except in \texttt{\textbf{HDM}}, confirming the value of full-batch refinement under aggressive motion.

\vspace{-5pt}
\subsection{Comparison with Continuous-Time Baselines}
\label{Sec: Comparison with Continuous-Time Baselines}

\subsubsection{Comparison with Uniform Knot Frequency}
\label{Subsec: Comparison with Uniform Knot Frequency}

We evaluate the necessity of non-uniform knot insertion on the real-world scenarios.
\autoref{Tab: RMSE of Non-Uniform} reports the RMSE and optimization time of B-splines under uniform knot spacing at frequency \textit{f} Hz (\textit{Uni-f}) and under the proposed relative-motion-intensity-aware non-uniform scheme (\textit{Non-Uniform}).

As shown in \autoref{Tab: RMSE of Non-Uniform}, with uniform knot frequency, increasing knot density improves trajectory accuracy at the expense of higher computation.
The densest setting, \textit{Uni-50}, yields the best accuracy across all scenarios, but \textit{Uni-20} achieves comparable performance in \texttt{\textbf{LOS}}, \texttt{\textbf{HDM}}, and \texttt{\textbf{TEN}} scenarios with much lower cost, showing that extreme density is often unnecessary.
Overly sparse distributions struggle under fast motion: while \textit{Uni-5} performs well in mild scenarios (\texttt{\textbf{LOS}}, \texttt{\textbf{NLOS}}, and \texttt{\textbf{TEN}}), it degrades significantly for dynamic motion (\texttt{\textbf{SDM}} and \texttt{\textbf{HDM}}).
In many cases, \textit{Uni-10} maintains high computational efficiency without sacrificing too much accuracy, consistent with prior findings reported in \cite{cioffi2022continuous}.
Meanwhile, \textit{Non-Uniform} achieves the best or second-best accuracy across all scenarios, while notably improving efficiency compared with the densest uniform knot configuration, which also yields high accuracy.
Specifically, it reduces computation time by 13\% to 69\%.

\begin{table}[t]
    \centering
    \caption{
        Position RMSE and Optimization Time of Non-Uniform and Uniform B-splines with Different Knot Frequency (Complete Trajectory, Color Definition: \cfirst{best}, \csecond{second-best}, determined by position RMSE, and by optimization time if equal)
    }
    \vspace{-5pt}
    \label{Tab: RMSE of Non-Uniform}
    \resizebox{\halfwidth}{!}{
        \begin{tabular}{lccccccc}
            \toprule
            \multirow{3}{*}{\vspace{-5pt}Exp.}
                          &
            \textit{Uni-50}
                          &
            \textit{Uni-20}
                          &
            \textit{Uni-10}
                          &
            \textit{Uni-5}
                          &
            \multicolumn{2}{c}{\textit{Non-Uniform}}
            \\
            \cmidrule(r){2-2}\cmidrule(r){3-3}\cmidrule(r){4-4}\cmidrule(r){5-5}\cmidrule(r){6-7}
                          &
            Pos / Time
                          &
            Pos / Time
                          &
            Pos / Time
                          &
            Pos / Time
                          &
            Pos / Time
                          &
            Freq.
            \\
                          &
            ($\mathrm{cm}$) / ($\mathrm{ms}$)
                          &
            ($\mathrm{cm}$) / ($\mathrm{ms}$)
                          &
            ($\mathrm{cm}$) / ($\mathrm{ms}$)
                          &
            ($\mathrm{cm}$) / ($\mathrm{ms}$)
                          &
            ($\mathrm{cm}$) / ($\mathrm{ms}$)
                          &
            ($\mathrm{Hz}$)
            \\
            \midrule
            \textbf{LOS}  & {6.1 / 120}         & \csecond{6.1 / 115}  & {6.2 / 107}  & {6.6 / ~99}  & \cfirst{6.0 / 105}  & 12.8 \\
            \textbf{NLOS} & \cfirst{5.1 / 130}  & {5.3 / 100}          & {5.5 / ~91}  & {6.2 / ~83}  & \csecond{5.3 / ~97} & 10.9 \\
            \textbf{SDM}  & \csecond{3.5 / 102} & {3.8 / ~81}          & {4.0 / ~79}  & {10.1 / ~81} & \cfirst{3.3 / ~73}  & 18.5 \\
            \textbf{HDM}  & {10.4 / 155}        & \csecond{10.4 / ~96} & {10.5 / ~90} & {16.1 / ~93} & \cfirst{10.3 / ~95} & 30.3 \\
            \textbf{TEN}  & {5.5 / 1469}        & \csecond{5.5 / 624}  & {5.7 / 462}  & {5.9 / 402}  & \cfirst{5.5 / 459}  & ~7.8 \\
            \bottomrule
        \end{tabular}
    }
    \vspace{3pt}
\end{table}

\autoref{Fig: Non Uniform Results} further shows that adaptive knot placement concentrates
modeling capacity in high-motion regions and sparsifies
low-motion segments, thereby suppressing the large error
fluctuations observed with fixed-frequency splines.
Overall, \textit{Non-Uniform} adapts to motion complexity, reduces errors, and improves computational efficiency compared with fixed-frequency baselines.

\begin{figure}[tb]
    
    \centering
    \includegraphics[width=.97\linewidth]{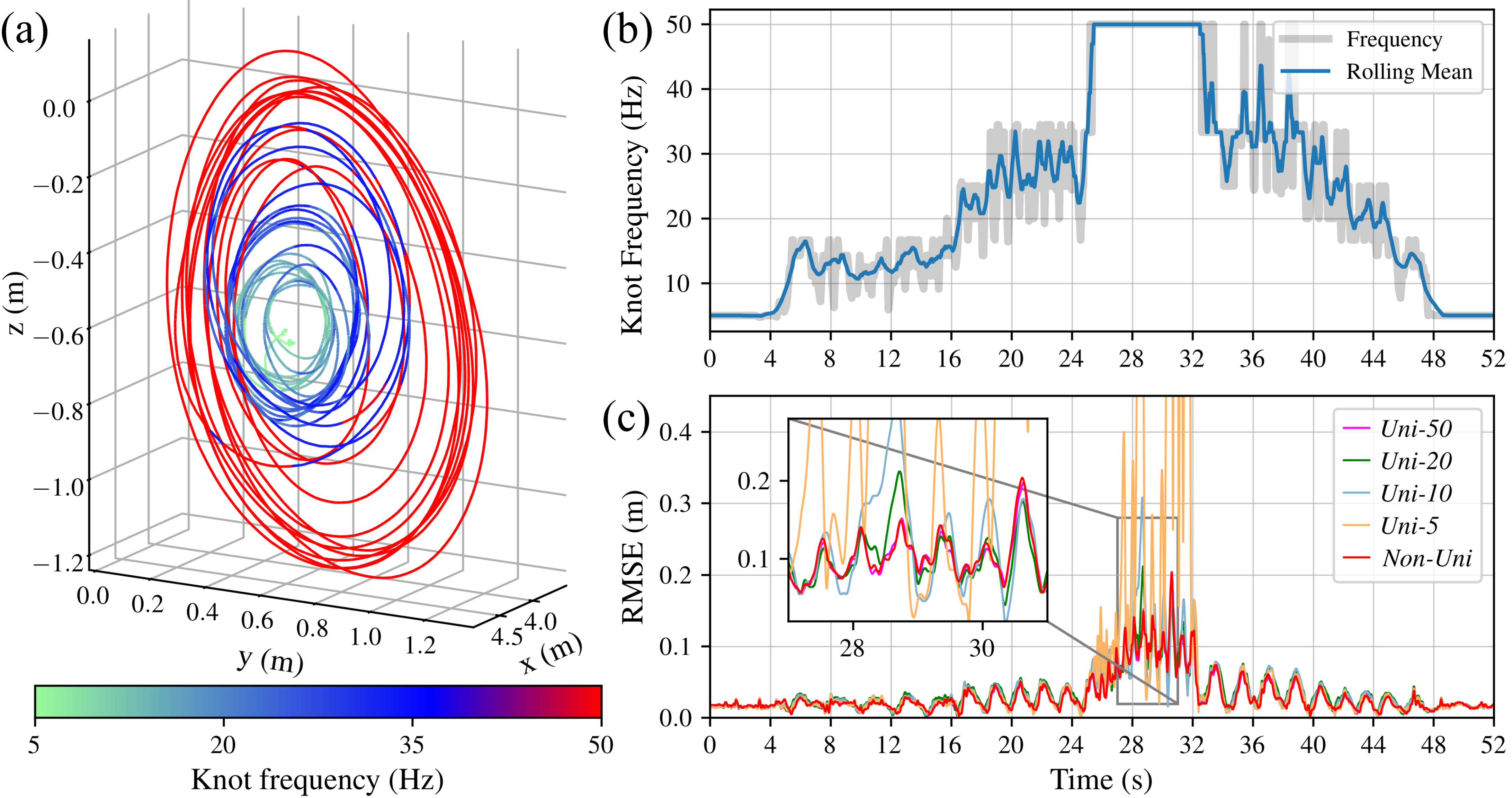}
    \caption{
        Results of CT-Full on \texttt{SDM\_1}.
        (a) Trajectory of Device 1, colored by knot frequency.
        (b) Knot frequency along the full trajectory of the non-uniform B-spline.
        (c) Position RMSE of non-uniform and uniform B-splines with different knot frequencies.
    }
    \label{Fig: Non Uniform Results}
\end{figure}

\subsubsection{Boundary-Update Ablation}
\label{Subsec: Boundary-Update Ablation}

We further evaluate whether exact shape preservation is necessary for online C-NUBS manipulation by comparing naive boundary updates, a local tail-refit strategy, and the proposed analytic operators.
For the tail-refit baseline, after naive extension, we sample target poses from the original trajectory over its last $k-1$ intervals and reoptimize only the last $k-2$ existing control points.
For the order-$4$ spline used in our system, the tail-refit baseline uses four internal Gauss-Legendre samples per interval together with the interval boundaries, and solves the resulting problem using Ceres with \texttt{DENSE\_QR}.

In 5,000 controlled order-$4$ spline-extension trials, naive extension reshapes the previously represented trajectory, with maximum position and rotation deviations of $0.534~\mathrm{m}$ and $1.12\times10^{-2}~\mathrm{rad}$, respectively.
Tail refitting preserves the curve with deviations below $10^{-10}$, while the analytic operation preserves it exactly by construction.
However, the analytic operation requires only $0.94~\mu\mathrm{s}$ on average, compared with $150.84~\mu\mathrm{s}$ for tail refitting, requiring approximately $1/161$ of the computation time.

We further evaluate how update methods affect the accuracy of CT-Full and CT-BCD outputs.
For this experiment, all analytic operations are replaced by the corresponding naive or tail-refit boundary updates.
\autoref{Tab: RMSE of Boundary-Update Ablation} shows that replacing the analytic operation with naive boundary updates increases the CT-BCD position RMSE by approximately $19\%$--$54\%$ across the five scenarios, while tail refitting achieves essentially identical accuracy to the analytic operation.
CT-Full is generally less sensitive to naive boundary updates, showing minor changes in most scenarios.
However, on the ten-device \texttt{\textbf{TEN}} sequences, tail refitting additionally requires $1.543~\mathrm{ms}$ per CT-BCD update and accounts for $25.0\%$ of the total update time.
These results indicate that shape preservation avoids unintended perturbations of the retained trajectory, while the proposed closed-form operators provide exact preservation with negligible deterministic overhead compared with optimization-based tail refitting.

\begin{table}[t]
    \centering
    \caption{
        Mean RMSE per Scenario for Different Boundary-Update Methods (Latest-Output).
    }
    \vspace{-5pt}
    \label{Tab: RMSE of Boundary-Update Ablation}
    \resizebox{0.95\halfwidth}{!}{%
        \begin{tabular}{lcccccc}
            \toprule
            \multirow{4}{*}{\vspace{-5pt}Exp.}
             & \multicolumn{3}{c}{CT-BCD}
             & \multicolumn{3}{c}{CT-Full} \\
            \cmidrule(lr){2-4}\cmidrule(lr){5-7}
             & Naive
             & Analytic
             & Tail-Refit
             & Naive
             & Analytic
             & Tail-Refit \\
             & Pos / Rot
             & Pos / Rot
             & Pos / Rot
             & Pos / Rot
             & Pos / Rot
             & Pos / Rot \\
             & ($\mathrm{cm}$) / ($\degree$)
             & ($\mathrm{cm}$) / ($\degree$)
             & ($\mathrm{cm}$) / ($\degree$)
             & ($\mathrm{cm}$) / ($\degree$)
             & ($\mathrm{cm}$) / ($\degree$)
             & ($\mathrm{cm}$) / ($\degree$) \\
            \midrule
            \textbf{LOS}  & 7.8 / 2.1  & 6.4 / 2.1  & 6.4 / 2.1  & 6.4 / 2.1  & 6.4 / 2.1  & 6.4 / 2.1  \\
            \textbf{NLOS} & 6.8 / 2.0  & 5.7 / 2.0  & 5.7 / 2.0  & 5.5 / 2.0  & 5.4 / 2.0  & 5.4 / 2.0  \\
            \textbf{SDM}  & 6.2 / 3.6  & 4.5 / 2.1  & 4.5 / 2.1  & 4.9 / 3.6  & 4.1 / 2.2  & 4.1 / 2.2  \\
            \textbf{HDM}  & 17.3 / 2.6 & 14.5 / 2.4 & 14.5 / 2.4 & 12.2 / 2.4 & 12.7 / 2.2 & 12.7 / 2.2 \\
            \textbf{TEN}  & 9.9 / 2.3  & 6.4 / 2.3  & 6.4 / 2.3  & 5.6 / 2.2  & 5.6 / 2.2  & 5.6 / 2.2  \\
            \bottomrule
        \end{tabular}%
    }
    \vspace{3pt}
\end{table}

\subsubsection{Comparison with Propagation Baselines}
\label{Subsec: Comparison with Propagation Baselines}

We further compare \CT{} against the commonly used strategy of propagating the optimized state to the current IMU timestamp using \eqref{Equ: Relative Kinematics}.
For conciseness, CT-BCD and CT-Full hereafter denote the corresponding IA-BCD and full-batch outputs under the Hybrid strategy.
We additionally evaluate an unclamped-spline variant, CT-Uncl.
For both CT-Full and CT-Uncl, the suffix \textit{-P} denotes propagation of the latest optimized state to the current IMU timestamp.
CT-Full and CT-Uncl use the same non-uniform knot strategy and optimization objective.
However, without clamping, CT-Uncl incurs a query-time delay equivalent to the duration of the final two knot segments.

\begin{table}[t]
    \centering
    \caption{
        Mean RMSE per Scenario for CT-RIO and the Propagation Baselines.
        (Latest-Output, the last two columns report relative error changes to CT-BCD.)
    }
    \label{Tab: RMSE of Propagation Baselines}
    \newcommand{\propdeltapair}[2]{%
        \makebox[1.8em][r]{#1}\hspace{0.2em}%
        \makebox[1.8em][r]{#2}%
    }
    \resizebox{\halfwidth}{!}{%
        \begin{tabular}{lccccccc}
            \toprule
            \multirow{4}{*}{\vspace{-5pt}Exp.}
             & \multicolumn{2}{c}{Full-Batch}
             & \multicolumn{1}{c}{IA-BCD}
             & \multicolumn{2}{c}{IMU Propagation}
             & \multicolumn{2}{c}{Relative Error Change} \\
            \cmidrule(lr){2-3}\cmidrule(lr){4-4}\cmidrule(lr){5-6}\cmidrule(l){7-8}
             & $E_{\text{CT-Full}}$
             & $E_{\text{CT-Uncl}}$
             & $E_{\text{CT-BCD}}$
             & $E_{\text{CT-Full-P}}$
             & $E_{\text{CT-Uncl-P}}$
             & $\Delta_{\text{CT-Full-P}}$
             & $\Delta_{\text{CT-Uncl-P}}$ \\
             & Pos / Rot
             & Pos / Rot
             & Pos / Rot
             & Pos / Rot
             & Pos / Rot
             & \propdeltapair{Pos}{Rot}
             & \propdeltapair{Pos}{Rot} \\
             & ($\mathrm{cm}$) / ($\degree$)
             & ($\mathrm{cm}$) / ($\degree$)
             & ($\mathrm{cm}$) / ($\degree$)
             & ($\mathrm{cm}$) / ($\degree$)
             & ($\mathrm{cm}$) / ($\degree$)
             & \propdeltapair{($\%$)}{($\%$)}
             & \propdeltapair{($\%$)}{($\%$)} \\
            \midrule
            \textbf{LOS}
             & 6.4 / 2.1
             & 6.7 / 2.2
             & 6.4 / 2.1
             & 8.0 / 2.2
             & 8.0 / 2.3
             & \propdeltapair{+25}{+5}
             & \propdeltapair{+25}{+10} \\
            \textbf{NLOS}
             & 5.4 / 2.0
             & 5.6 / 2.0
             & 5.7 / 2.0
             & 7.1 / 2.0
             & 9.9 / 2.0
             & \propdeltapair{+25}{+0}
             & \propdeltapair{+74}{+0} \\
            \textbf{SDM}
             & 4.1 / 2.2
             & 4.4 / 2.3
             & 4.5 / 2.1
             & 6.0 / 2.3
             & 8.5 / 2.4
             & \propdeltapair{+33}{+10}
             & \propdeltapair{+89}{+14} \\
            \textbf{HDM}
             & 12.7 / 2.2
             & 13.3 / 2.2
             & 14.5 / 2.4
             & 18.0 / 2.6
             & 24.5 / 2.7
             & \propdeltapair{+24}{+8}
             & \propdeltapair{+69}{+13} \\
            \textbf{TEN}
             & 5.6 / 2.2
             & 5.6 / 2.5
             & 6.4 / 2.3
             & 7.9 / 2.3
             & 17.0 / 2.3
             & \propdeltapair{+23}{+0}
             & \propdeltapair{+166}{+0} \\
            \midrule
            \textbf{MEAN}
             & 6.8 / 2.1
             & 7.1 / 2.2
             & 7.5 / 2.2
             & 9.4 / 2.3
             & 13.6 / 2.3
             & \propdeltapair{+25}{+5}
             & \propdeltapair{+81}{+5} \\
            \bottomrule
        \end{tabular}%
    }
    \vspace{5pt}
\end{table}

As shown in \autoref{Tab: RMSE of Propagation Baselines}, CT-Full and CT-Uncl achieve comparable optimized-state accuracy, with mean position/rotation RMSEs of $6.8\,\mathrm{cm}/2.1\degree$ and $7.1\,\mathrm{cm}/2.2\degree$, respectively.
This indicates that clamping does not inherently affect spline-fitting accuracy.
For current-time outputs, however, CT-BCD achieves a lower mean position RMSE of $7.5\,\mathrm{cm}$, compared with $9.4\,\mathrm{cm}$ for CT-Full-P and $13.6\,\mathrm{cm}$ for CT-Uncl-P.
Comparing the two propagation baselines directly, CT-Full-P reduces the mean current-time position RMSE by approximately $31\%$ relative to CT-Uncl-P, isolating the benefit of removing the support-induced propagation interval.
Relative to CT-BCD, propagation increases the position RMSE by $23\%$-$33\%$ for CT-Full-P and $25\%$-$166\%$ for CT-Uncl-P across the five scenarios.
This trend is consistent with first-order robocentric error propagation, where a reference-rotation error $\delta\boldsymbol{\theta}_{s}$ induces
$\delta{}^{s}\mathbf{p}_{j,\mathrm{rot}}
    \simeq-[{}^{s}\mathbf{p}_{j}]_{\times}\delta\boldsymbol{\theta}_{s}$,
so the resulting position error scales with the inter-robot distance and accumulates over longer propagation intervals.
The larger degradation of CT-Uncl-P is consistent with its longer propagation interval due to the support-induced query-time delay.

These results clarify the role of clamping in \CT{}.
Its benefit lies not in enhancing the modeling capability of the spline, but in eliminating the support-induced propagation interval.
For high-rate outputs, fixed-lag optimization followed by IMU propagation remains a viable alternative when the propagation error is acceptable.
By contrast, clamping combined with IA-BCD enables the latest relative state to directly incorporate available inter-robot measurements rather than relying solely on open-loop propagation.

\vspace{-5pt}
\subsection{Control Coupling Experiment}
\label{Sec: Control Coupling Experiment}

To further assess the practical impact of different baselines, we conduct a closed-loop UAV relative-control experiment with three UGVs and one UAV equipped with CREPES-X devices.
As shown in \autoref{fig:r5-control-setup}, each trial runs one relative localization variant with CoNi-MPC~\cite{zhang2023coni} onboard the UAV, which completes three laps of a $1.5\,\mathrm{m}$-radius circular trajectory in the reference UGV frame.
All variants use identical measurement weights and MPC parameters.

\begin{figure*}[tb]
    \centering
    \includegraphics[width=\fullwidth]{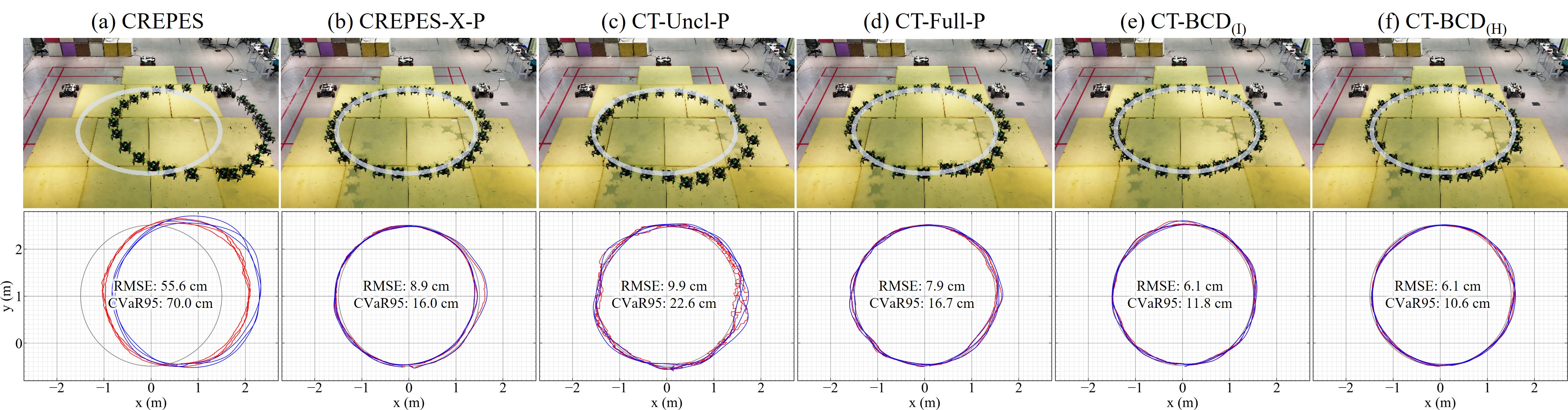}
    \caption{
        Closed-loop planar tracking results.
        The gray curve is the reference trajectory, the red curve is the trajectory estimated by the corresponding relative localization method, and the blue curve is the UAV trajectory recorded by the motion-capture system.
        The upper images are stacked at successive timestamps.
    }
    \label{fig:control-results}
    \vspace{-3pt}
\end{figure*}

\begin{figure}[htb]
    \centering
    \includegraphics[width=\halfwidth]{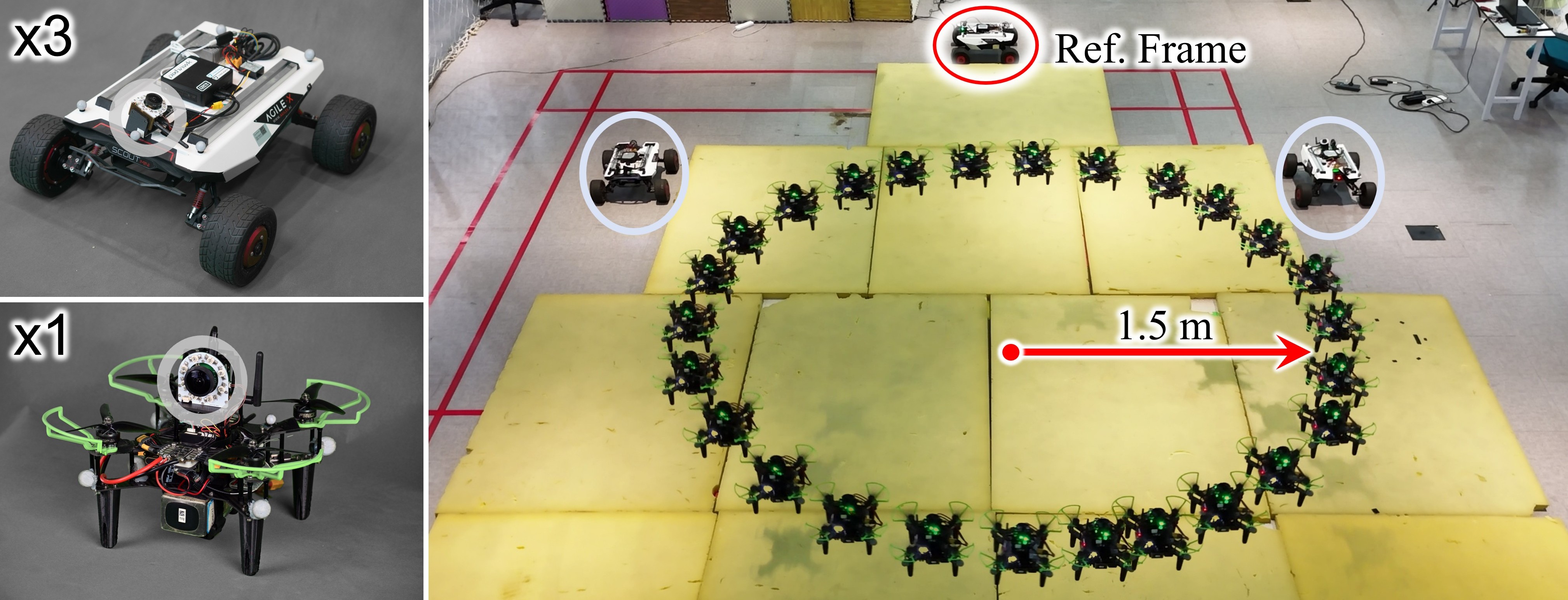}
    \caption{
        Closed-loop UAV-UGV relative-control setup with image stack.
        All robots carry CREPES-X devices, and the UAV tracks a circular reference trajectory expressed in the reference UGV frame.
    }
    \label{fig:r5-control-setup}
\end{figure}

We compare CREPES, CREPES-X-P, CT-Uncl-P, CT-Full-P, CT-BCD\textsubscript{(I)}, and CT-BCD\textsubscript{(H)} using the XY-plane tracking RMSE and $\mathrm{CVaR}_{0.95}$ (the mean of the largest $5\%$ of instantaneous tracking errors).
As shown in \autoref{fig:control-results},
CREPES yields a much larger RMSE\,/\,$\mathrm{CVaR}_{0.95}$ of $55.6\,/\,70.0\,\mathrm{cm}$, mainly due to velocity bias caused by omitting IMU-bias estimation, whereas CREPES-X-P reduces these errors to $8.9\,/\,16.0\,\mathrm{cm}$.
Among the CT variants, CT-Uncl-P yields $9.9\,/\,22.6\,\mathrm{cm}$, CT-Full-P yields $7.9\,/\,16.7\,\mathrm{cm}$, and CT-BCD\textsubscript{(I)}\,/\,CT-BCD\textsubscript{(H)} yield $6.1\,/\,11.8\,\mathrm{cm}$ and $6.1\,/\,10.6\,\mathrm{cm}$, respectively.
Compared with CT-Uncl-P, CT-Full-P reduces the tracking RMSE and $\mathrm{CVaR}_{0.95}$ by approximately $20\%$ and $26\%$, respectively, consistent with the shorter propagation interval enabled by clamping.
Relative to CREPES-X-P and CREPES, CT-BCD\textsubscript{(H)} reduces the errors by around $31\%/34\%$ and $89\%/85\%$, respectively.
Relative to CT-Full-P and CT-Uncl-P, the reductions are approximately $23\%/37\%$ and $38\%/53\%$, respectively.

These results indicate that \CT{} improves average tracking accuracy and tail-error robustness by providing latest-timestamp, measurement-corrected feedback, with periodic full-batch refinement further suppressing large errors.

\vspace{-8pt}
\subsection{Online Time-Offset Estimation}
\label{Sec: Online Time-Offset Estimation}

This section evaluates the real-world performance of the proposed online time-offset estimation in the \CT{} framework.
The experiments are conducted on the \texttt{\textbf{TUS}} sequences, and the quantitative results are summarized in \autoref{Tab: Ablation Study in Time-Offset Estimation}.

For each sequence, Devices $1 \sim 4$ exhibit initial temporal offsets ranging from several tens to hundreds of milliseconds with respect to Device 0.
Under these challenging conditions, CREPES-X suffers from severe performance degradation due to its inability to model or compensate for time-offsets.
In contrast, the CT-Full already demonstrates improved robustness even without explicit time-offset estimation. CT-Full consistently outperforms CREPES-X.
When online estimation of the time-offset $\tau$ is enabled, the performance improves substantially across all sequences.
Taking \texttt{TUS\_1} as an example, the average position error is reduced from $13.9\,\mathrm{cm}$ to $9.2\,\mathrm{cm}$, while the rotation error decreases from $11.8^{\circ}$ to $1.9^{\circ}$.
Similar trends are observed across \texttt{TUS\_2} to \texttt{TUS\_4}.
In addition, the estimated offsets closely match the measured ground truth. Across all sequences, the deviation remains within $1\,\mathrm{ms}$, confirming both accuracy and consistency.
Given that the sensors on the CREPES-X device provide timestamps with millisecond-level resolution, this level of accuracy effectively achieves reliable time synchronization in practice.

\begin{figure*}[tb]
    \centering
    \includegraphics[width=\textwidth]{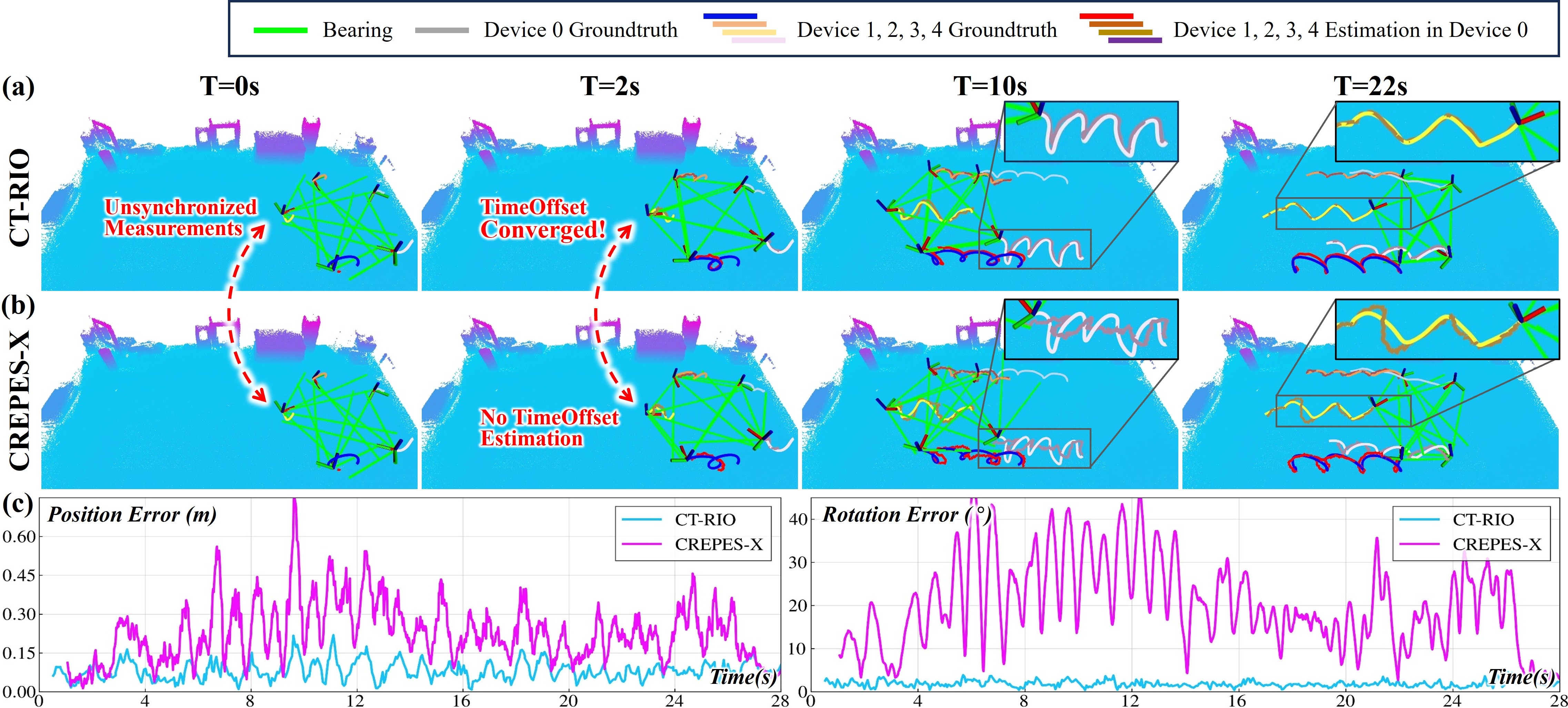}
    \caption{Results of online time-offset estimation in \texttt{TUS\_1}. The estimated relative trajectories are transformed to the world frame. (a) CT (\CT{}) outputs with online time-offset estimation. (b) CREPES-X outputs without time-offset estimation. (c) Position and rotation errors of device $4$ (reference frame is device $0$). The errors spike in CREPES-X due to the uncorrected time-offsets.}
    \label{fig:ctvsdt_time-offset}
\end{figure*}

\begin{table}[t]
    \centering
    \caption{
        Ablation Study in Time-Offset Estimation (Latest Output, Color Definition: \cfirst{best}, \csecond{second-best}, determined by position RMSE, and by rotation RMSE if equal)
    }
    \label{Tab: Ablation Study in Time-Offset Estimation}
    \resizebox{\halfwidth}{!}{
        \begin{tabular}{c@{\hskip 3pt}c@{\hskip 5pt}ccccc}
            \toprule
            \multirow{3}{*}{\vspace{-5pt}Exp. }
             &
            \multirow{3}{*}{\vspace{-5pt}Dev. }
             &
            CREPES-X
             &
            CT-Full
             &
            \multicolumn{3}{c}{CT-Full with $\tau$ Estimation}
            \\
            \cmidrule(r){3-3}\cmidrule(r){4-4}\cmidrule(r){5-7}
             &
             &
            Pos / Rot
             &
            Pos / Rot
             &
            Pos / Rot
             &
            Estimated ${\tau}$
             &
            True $\tau$
            \\
             &
             &
            ($\mathrm{cm}$) / ($\degree$)~
             &
            ($\mathrm{cm}$) / ($\degree$)~
             &
            ($\mathrm{cm}$) / ($\degree$)~
             &
            ($\mathrm{ms}$)
             &
            ($\mathrm{ms}$)
            \\
            \midrule
            \multirow{4}{*}{\vspace{-5pt} \texttt{TUS\_1}}
             & 1 & 25.5 / 9.2  & \csecond{13.6 / 8.5}  & \cfirst{10.3 / 2.0} & 80.2 $\pm$ 0.5  & 80  \\
             & 2 & 16.7 / 7.6  & \csecond{11.1 / 8.0}  & \cfirst{6.5 / 1.7}  & 114.1 $\pm$ 1.3 & 114 \\
             & 3 & 13.2 / 6.7  & \csecond{13.0 / 6.9}  & \cfirst{11.0 / 1.9} & -71.6 $\pm$ 0.9 & -71 \\
             & 4 & 31.5 / 24.4 & \csecond{17.4 / 23.9} & \cfirst{8.9 / 2.0}  & 239.3 $\pm$ 0.8 & 239 \\
            \midrule
            \multirow{4}{*}{\vspace{-5pt} \texttt{TUS\_2}}
             & 1 & 22.7 / 9.8  & \csecond{13.0 / 9.4}  & \cfirst{8.3 / 1.6} & 87.4 $\pm$ 2.7  & 87  \\
             & 2 & 20.3 / 11.7 & \csecond{12.7 / 10.1} & \cfirst{6.3 / 1.5} & 121.7 $\pm$ 1.1 & 122 \\
             & 3 & 12.5 / 7.7  & \csecond{12.0 / 7.7}  & \cfirst{9.7 / 1.7} & -64.0 $\pm$ 2.6 & -64 \\
             & 4 & 29.4 / 18.9 & \csecond{13.3 / 18.0} & \cfirst{6.8 / 1.9} & 248.6 $\pm$ 2.0 & 248 \\
            \midrule
            \multirow{4}{*}{\vspace{-5pt} \texttt{TUS\_3}}
             & 1 & 29.4 / 9.2  & \csecond{14.9 / 9.4}  & \cfirst{10.6 / 2.0} & 95.1 $\pm$ 1.1  & 95  \\
             & 2 & 19.3 / 12.3 & \csecond{12.8 / 10.8} & \cfirst{6.7 / 2.0}  & 129.7 $\pm$ 1.2 & 130 \\
             & 3 & 14.6 / 5.9  & \csecond{12.3 / 5.6}  & \cfirst{10.7 / 2.2} & -54.5 $\pm$ 1.0 & -55 \\
             & 4 & 28.8 / 22.0 & \csecond{15.2 / 20.9} & \cfirst{8.4 / 2.0}  & 255.9 $\pm$ 1.0 & 256 \\
            \midrule
            \multirow{4}{*}{\vspace{-5pt} \texttt{TUS\_4}}
             & 1 & 24.7 / 11.7 & \csecond{14.5 / 9.9}  & \cfirst{10.6 / 2.9} & 101.1 $\pm$ 1.0 & 101 \\
             & 2 & 20.6 / 14.5 & \csecond{15.0 / 13.6} & \cfirst{7.1 / 2.0}  & 135.2 $\pm$ 2.0 & 135 \\
             & 3 & 13.4 / 5.7  & \csecond{12.9 / 5.7}  & \cfirst{11.1 / 2.9} & -50.5 $\pm$ 1.9 & -50 \\
             & 4 & 22.3 / 19.1 & \csecond{13.5 / 18.6} & \cfirst{8.0 / 3.0}  & 263.1 $\pm$ 0.9 & 263 \\
            \bottomrule
        \end{tabular}
    }
    \vspace{5pt}
\end{table}

\begin{figure}[t]    
    \centering
    \includegraphics[width=0.9\halfwidth]{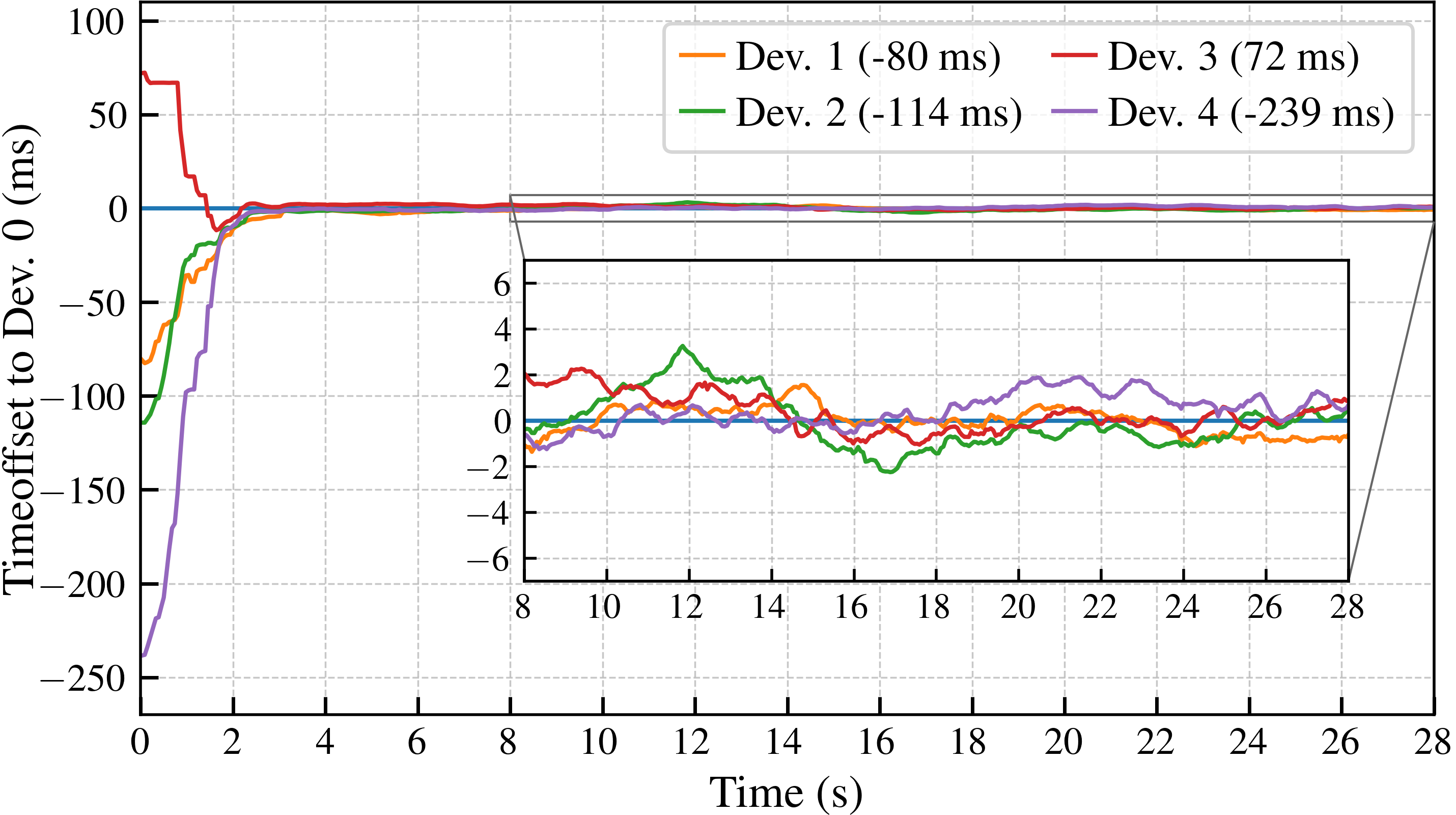}
    \caption{
        Time-offset estimation error curves of all devices in \texttt{TUS\_1}.
        The estimates converge rapidly from various initial offsets and remain stable throughout the sequence.
    }
    \label{Fig: Online Temporal Calibration}
\end{figure}

We further visualize the online time-offset estimation process of \CT{} on \texttt{TUS\_1} in \autoref{fig:ctvsdt_time-offset}.
As shown in \autoref{fig:ctvsdt_time-offset}(a), the initial misalignment (when $T=0\,\mathrm{s}$) of the bearing observations gradually improves, aligning with the target position as the time-offset is corrected (when $T=2\,\mathrm{s}$). This indicates that the estimated time-offsets converge within approximately $2$ seconds.
In contrast, CREPES-X lacks time-offset estimation, resulting in continuously misaligned bearing observations, as seen in \autoref{fig:ctvsdt_time-offset}(b).
Consequently, the incorrect observation timestamps lead to significant errors in both position and rotation estimation. This effect is evident in \autoref{fig:ctvsdt_time-offset}(c), where CREPES-X shows large error spikes, while \CT{} maintains accuracy similar to other hard-synchronized sequences.
Finally, \autoref{Fig: Online Temporal Calibration} illustrates the convergence behavior of the estimated time-offsets for all devices in \texttt{TUS\_1}.
The estimates rapidly converge from large initial offsets and remain stable throughout the sequence.

Overall, these results demonstrate that the proposed method not only achieves reliable time-offset estimation under real-world noise, but also directly improves the quality of trajectory estimation across diverse sequences.

\subsection{Scheduling and Runtime Analysis}
\label{Sec: Scheduling and Runtime Analysis}

To further analyze \CT{}, we evaluate how the full-batch optimization interval affects the accuracy of the Hybrid strategy across the real-world scenarios and quantify the resulting accuracy-computation trade-off.
We consider intervals of $100$, $200$, and $500\,\mathrm{ms}$, with all other estimator settings unchanged.
For each interval, we report CT-BCD, whose robot-wise IA-BCD updates incorporate the latest inter-robot measurements, and CT-Full-P, which propagates the latest full-batch state to the current IMU timestamp.
To obtain controlled comparisons for \texttt{\textbf{TEN}} at $100$ and $200\,\mathrm{ms}$, we replay the recorded sequences offline while varying only the full-batch interval.

\begin{table}[t]
    \centering
    \caption{
        Mean RMSE per Scenario at Different Full-Batch Intervals (Latest Output).
    }
    \label{Tab: RMSE at Different Full-Batch Intervals}
    \resizebox{\halfwidth}{!}{%
        \begin{tabular}{lcccccc}
            \toprule
            \multirow{4}{*}{\vspace{-4pt}Exp.}
             & \multicolumn{2}{c}{$100\,\mathrm{ms}$}
             & \multicolumn{2}{c}{$200\,\mathrm{ms}$}
             & \multicolumn{2}{c}{$500\,\mathrm{ms}$} \\
            \cmidrule(lr){2-3}\cmidrule(lr){4-5}\cmidrule(lr){6-7}
             & $E_{\text{CT-BCD}}$
             & $E_{\text{CT-Full-P}}$
             & $E_{\text{CT-BCD}}$
             & $E_{\text{CT-Full-P}}$
             & $E_{\text{CT-BCD}}$
             & $E_{\text{CT-Full-P}}$ \\
             & Pos / Rot
             & Pos / Rot
             & Pos / Rot
             & Pos / Rot
             & Pos / Rot
             & Pos / Rot \\
             & ($\mathrm{cm}$) / ($\degree$)
             & ($\mathrm{cm}$) / ($\degree$)
             & ($\mathrm{cm}$) / ($\degree$)
             & ($\mathrm{cm}$) / ($\degree$)
             & ($\mathrm{cm}$) / ($\degree$)
             & ($\mathrm{cm}$) / ($\degree$) \\
            \midrule
            \textbf{LOS}  & 6.4 / 2.1  & 8.0 / 2.1  & 6.4 / 2.1  & 9.2 / 2.1  & 6.5 / 2.1  & 28.7 / 2.2 \\
            \textbf{NLOS} & 5.7 / 2.0  & 7.0 / 2.0  & 5.7 / 2.0  & 9.5 / 2.0  & 5.9 / 2.0  & 13.7 / 2.0 \\
            \textbf{SDM}  & 4.5 / 2.1  & 6.0 / 2.3  & 4.6 / 2.1  & 7.4 / 2.6  & 4.7 / 2.2  & 11.5 / 2.8 \\
            \textbf{HDM}  & 14.5 / 2.4 & 18.0 / 2.6 & 15.1 / 2.4 & 20.7 / 2.5 & 16.1 / 2.4 & 34.5 / 2.8 \\
            \textbf{TEN}  & 6.3 / 2.3  & 7.9 / 2.3  & 6.4 / 2.3  & 8.7 / 2.3  & 6.5 / 2.3  & 14.2 / 2.3 \\
            \midrule
            \textbf{MEAN} & 7.5 / 2.2  & 9.4 / 2.3  & 7.6 / 2.2  & 11.1 / 2.3 & 7.9 / 2.2  & 20.5 / 2.4 \\
            \bottomrule
        \end{tabular}%
    }
    \vspace{2pt}
\end{table}

As shown in \autoref{Tab: RMSE at Different Full-Batch Intervals}, CT-BCD is relatively insensitive to the full-batch interval.
Its mean position RMSE is $7.5$, $7.6$, and $7.9\,\mathrm{cm}$ at intervals of $100$, $200$, and $500\,\mathrm{ms}$, respectively.
Across the individual sequences, increasing the interval from $100$ to $200\,\mathrm{ms}$ changes the CT-BCD position RMSE by no more than $5\%$.
At $500\,\mathrm{ms}$, the degradation remains limited in smooth-motion and ten-device experiments, but reaches $17\%$ in the most challenging high-dynamic sequence.
In contrast, the mean position RMSE of CT-Full-P increases from $9.4$ to $11.1$ and $20.5\,\mathrm{cm}$ as the interval grows.
Relative to CT-BCD at the same interval, CT-Full-P increases the mean position/rotation RMSE by approximately $26\%/3\%$, $46\%/5\%$, and $159\%/10\%$, respectively.
This widening gap shows that propagation becomes increasingly sensitive to the time elapsed since the last joint correction, whereas the latest CT-BCD output continues to incorporate newly available inter-robot measurements through IA-BCD updates.

We next quantify the aggregate computational cost of this accuracy-interval trade-off.
The complete estimator process tree is profiled during repeated runs on the real-world sequences, excluding the startup transient.
The experiments are conducted on an Intel NUC equipped with an Intel i7-1260P CPU and 16 logical cores.
As shown in \autoref{Tab: CPU Cost of Hybrid Optimization}, increasing the full-batch interval substantially reduces CPU utilization for both Full-Batch-Only and Hybrid.
Moreover, the overall CPU and memory usage remain moderate for both the five- and ten-device configurations on the Intel NUC, leaving computational headroom for other onboard tasks.

\begin{table}[t]
    \centering
    \caption{
        Computational load of different optimization settings of \CT{}.
        The ``core usage'' corresponds to the average number of CPU cores used by the specific component.
    }
    \label{Tab: CPU Cost of Hybrid Optimization}
    \resizebox{0.75\halfwidth}{!}{%
        \begin{tabular}{cccrrr}
            \toprule
            \multicolumn{1}{c}{\begin{tabular}[c]{@{}c@{}}Dev.\\Num.\end{tabular}}
             & \multicolumn{1}{c}{\begin{tabular}[c]{@{}c@{}}Interval\\($\mathrm{ms}$)\end{tabular}}
             & \multicolumn{1}{c}{\begin{tabular}[c]{@{}c@{}}Optimization\\Setting\end{tabular}}
             & \multicolumn{1}{c}{\begin{tabular}[c]{@{}c@{}}Core\\Usage\end{tabular}}
             & \multicolumn{1}{c}{\begin{tabular}[c]{@{}c@{}}Total\\CPU\\($\%$)\end{tabular}}
             & \multicolumn{1}{c}{\begin{tabular}[c]{@{}c@{}}Peak\\RAM\\(GiB)\end{tabular}}                                                             \\
            \midrule
            \multirow{7}{*}{5}
             & $100\,\mathrm{ms}$                                                                              & Full-Batch-Only & 1.46 & 9.14  & 0.189 \\
             & $200\,\mathrm{ms}$                                                                              & Full-Batch-Only & 0.98 & 6.12  & 0.188 \\
             & $500\,\mathrm{ms}$                                                                              & Full-Batch-Only & 0.72 & 4.48  & 0.192 \\
             & --                                                                                              & IA-BCD-Only     & 1.05 & 6.56  & 0.186 \\
             & $100\,\mathrm{ms}$                                                                              & Hybrid          & 2.10 & 13.11 & 0.199 \\
             & $200\,\mathrm{ms}$                                                                              & Hybrid          & 1.71 & 10.70 & 0.193 \\
             & $500\,\mathrm{ms}$                                                                              & Hybrid          & 1.27 & 7.95  & 0.195 \\
            \midrule
            \multirow{3}{*}{10}
             & $500\,\mathrm{ms}$                                                                              & Full-Batch-Only & 2.29 & 14.28 & 0.304 \\
             & --                                                                                              & IA-BCD-Only     & 2.71 & 16.91 & 0.281 \\
             & $500\,\mathrm{ms}$                                                                              & Hybrid          & 5.38 & 33.60 & 0.333 \\
            \bottomrule
        \end{tabular}%
    }
\end{table}



These results provide practical guidance for scheduling full-batch refinement.
We use $200\,\mathrm{ms}$ as a practical default for the five-device configuration on the evaluated platform, as it substantially reduces CPU usage relative to $100\,\mathrm{ms}$ while changing the position RMSE by at most $5\%$.
For aggressive motion or degraded inter-robot observations, a shorter $100$--$200\,\mathrm{ms}$ interval is preferable when computationally feasible.
For moderate motion under a constrained CPU budget, the interval can be relaxed to $500\,\mathrm{ms}$, if the potential accuracy loss in high-dynamic cases is acceptable.
For ten devices, one full-batch update requires more than $400\,\mathrm{ms}$; therefore, $500\,\mathrm{ms}$ is the shortest feasible interval that avoids overlapping full-batch jobs.
More generally, as the number of robots grows, the full-batch interval should be no shorter than the execution time of one full-batch update and should then be selected as the shortest feasible interval within the available CPU budget.

\section{Conclusion}
\label{Sec: Conclusion}


In this paper, we presented \CT{}, a continuous-time direct temporal-spatial relative localization system for multi-robot systems.
Operating purely on relative kinematics, \CT{} fuses asynchronous inter-robot observations and inertial data within a sliding-window framework.
By leveraging C-NUBS, it removes the query-time delay of prior unclamped methods and enables endpoint interpolation for real-time evaluation and initialization.
Closed-form extension and shrinkage routines modify only boundary control points, preserving previously optimized segments and supporting the proposed \textit{knot-keyknot} strategy for high-frequency outputs with motion-informative knots.
To ensure scalability, the full problem is decomposed into robot-wise subproblems solved via IA-BCD with periodic global refinement.
Together, these designs enable high-accuracy, low-latency, and high-frequency relative localization with online time-offset estimation.

Despite these advantages, several limitations remain.
Extending clamped splines to broader estimation tasks such as loop-closure SLAM introduces challenges in maintaining consistency, marginalization, and information preservation.
While we provide a convergence analysis for IA-BCD, the resulting conditions are conservative and mainly offer qualitative guidance; deriving tighter and practically meaningful bounds remains future work.
In addition, the keyknot selection strategy is empirically designed, and more principled approaches based on information gain are worth exploring.
Finally, the current system remains reference-centric, and developing fully distributed and certifiably convergent variants that scale to larger groups is an important direction for future research.

\appendices



\appendix
\section{Convergence Analysis for IA-BCD}
\label{App: Convergence Analysis for Asynchronous BCD}

We analyze IA-BCD on a fixed finite set of active control-point blocks.
Let \(x\) collect these blocks, and let
\(F_k(x):=f_{t(k)}(x)\)
denote the part of the objective that depends on \(x\), frozen during the \(k\)-th block update.
We assume that the iterates remain in a bounded set \(\Omega\), on which all \(F_k\) are uniformly lower bounded and have \(L\)-Lipschitz continuous gradients.
An important property of CT-RIO is that the objective variation associated with any fixed control-point block is finite.
Due to the local support of B-splines, each control point affects the trajectory only over a finite temporal interval.
Since all sensor streams have finite sampling rates, only finitely measurement factors can depend on any fixed control point.
Consequently, for any fixed finite set of control-point blocks, there exists a finite iteration \(K_x\) after which newly arriving measurements introduce no additional factors depending on \(x\).
Therefore,
\vspace{2pt}
\begin{equation}
    F_{k+1}(z)\!=\!F_k(z),
    \nabla F_{k+\!1}(z)\!=\!\nabla F_k(z),
    \forall z\!\in\!\Omega,
    k\!\ge\!K_x.
    \label{eq:eventually_fixed}
    \vspace{2pt}
\end{equation}
Thus, although the newest spline tail continues to evolve, the objective associated with every fixed historical trajectory region eventually becomes fixed.

Let \(i_k\) denote the block updated at iteration \(k\).
We assume that every active block is updated at least once within any \(B\) consecutive updates.
The asynchronous delay is bounded by \(\tau\), such that
\begin{equation}
    \|x^k-\hat{x}^k\|
    \le
    \sum_{q=k-\tau}^{k-1}\|\Delta^q\|.
\end{equation}

At iteration \(k\), the LM block update is
\begin{equation}
    \Delta^k
    =
    -(J_k^\top J_k+\lambda I)^{-1}
    J_k^\top r_k(\hat{x}^k),
    \quad
    x^{k+1}=x^k+\Delta^k,
\end{equation}
where \(\Delta^k\) is nonzero only on block \(i_k\).

For the frozen objective \(F_k\), \(L\)-smoothness, the bounded-delay relation, Young's inequality with \(\eta>0\), and the LM update give
\begin{align}
     & F_k(x^{k+1})-F_k(x^k)
    \nonumber                \\[-1ex]
     & \le
    \langle
    \nabla F_k(x^k)
    \!-\!
    \nabla F_k(\hat{x}^k),
    \Delta^k
    \rangle
    \!+\!
    \langle
    \nabla_{i_k}F_k(\hat{x}^k),
    \Delta^k
    \rangle
    \!+\!
    \frac{L}{2}\|\Delta^k\|^2
    \nonumber                \\[-1ex]
     & \le
    \frac{L}{2\eta}
    \sum_{q=k-\tau}^{k-1}\|\Delta^q\|^2
    +
    \left(
    \frac{\tau\eta L}{2}
    +
    \frac{L}{2}
    -
    \lambda
    \right)
    \|\Delta^k\|^2.
    \label{eq:frozen_descent}
\end{align}
The last inequality follows from $\nabla_{i_k}F_k(\hat{\mathbf{x}}^k)=-(\mathbf{J}_k^\top\mathbf{J}_k+\lambda\mathbf I)\Delta^k$,
which gives
$\langle\nabla_{i_k}F_k(\hat{\mathbf{x}}^k),\Delta^k\rangle
\leq-\lambda\|\Delta^k\|^2$.

Before iteration \(K_x\), incremental measurements may modify the objective associated with \(x\).
Define the corresponding objective and gradient perturbation bounds as
\begin{equation}
    \rho_k
    :=
    \sup_{z\in\Omega}
    \left[
        F_{k+1}(z)-F_k(z)
        \right]_+,
\end{equation}
and
\begin{equation}
    d_k
    :=
    \sup_{z\in\Omega}
    \|
    \nabla F_{k+1}(z)-\nabla F_k(z)
    \|.
\end{equation}
By \eqref{eq:eventually_fixed},
\(\rho_k=d_k=0\) for all \(k\ge K_x\).
Since only finitely many factor insertions can affect \(x\), the two perturbation sequences have finite support and therefore satisfy
\begin{equation}
    \sum_{k=0}^{\infty}\rho_k<\infty,
    \qquad
    \sum_{k=0}^{\infty}d_k<\infty.
    \label{eq:finite_perturbation}
\end{equation}

Define the delay-compensated Lyapunov function
\begin{equation}
    \mathcal{L}_k
    =
    F_k(x^k)
    +
    \frac{L}{2\eta}
    \sum_{q=k-\tau}^{k-1}
    (q-k+\tau+1)\|\Delta^q\|^2.
\end{equation}

Combining \eqref{eq:frozen_descent} with the finite objective perturbation yields
\begin{equation}
    \mathcal{L}_{k+1}-\mathcal{L}_k
    \le
    -\alpha\|\Delta^k\|^2+\rho_k,
\end{equation}
where
\begin{equation}
    \alpha
    =
    \lambda
    -
    \frac{L}{2}
    -
    \frac{\tau\eta L}{2}
    -
    \frac{L\tau}{2\eta}.
\end{equation}

Taking \(\eta=1\) and
\(
\lambda>L(\tau+\frac{1}{2})
\)
ensures \(\alpha>0\).
Using the uniform lower bound of \(F_k\) and
\(\sum_k\rho_k<\infty\), summing the Lyapunov inequality gives
\begin{equation}
    \sum_{k=0}^{\infty}\|\Delta^k\|^2<\infty,
    \qquad
    \|\Delta^k\|\to0.
    \label{eq:step_convergence}
\end{equation}

Assume further that the largest singular value of each block Jacobian is bounded by \(S\).
The LM update and the bounded-delay relation then imply, for the active block,
\vspace{-4pt}
\begin{equation}
    \|\nabla_{i_k}F_k(x^k)\|
    \le
    L
    \sum_{q=k-\tau}^{k-1}\|\Delta^q\|
    +
    (\lambda+S^2)\|\Delta^k\|
    \rightarrow0.
\end{equation}

For any block \(j\), let \(\ell_j(k)\) denote its most recent update before iteration \(k\).
By the bounded-update-gap assumption,
\(k-\ell_j(k)\le B\).
Therefore,
\begin{align}
    \|\nabla & _jF_k(x^k)\|
    \le{}
    \\[-1ex]
             & \|\nabla_jF_{\ell_j(k)}(x^{\ell_j(k)})\|
    +
    L\sum_{q=\ell_j(k)}^{k-1}\|\Delta^q\|
    +
    \sum_{q=\ell_j(k)}^{k-1}d_q
    \to0.
    \notag
\end{align}
Hence,$\|\nabla F_k(x^k)\|\rightarrow0$.

For CT-RIO, the above perturbation condition is therefore a consequence of the spline structure rather than an additional assumption.
The continually changing objective is confined to the newest spline tail.
For every fixed historical control-point region, local support implies that only finitely many factors can constrain it, so its associated objective eventually becomes fixed.
Accordingly, under the stated smoothness, bounded-delay, bounded-update-gap, and damping conditions, every accumulation point of the repeatedly optimized historical variables is a first-order stationary point of the corresponding fixed objective.



\vspace{-8pt}
\bibliographystyle{IEEEtran}
\bibliography{root}

\end{document}